\newif\ifworkinprogress
\newcommand{\RM}[1]{\textcolor{blue}{\textbf{[RM] #1}}}
\newcommand{\HW}[1]{\textcolor{green}{\textbf{[HW] #1}}} 		  
\newcommand{\RM}[1]{}
\newcommand{\HW}[1]{} 	
\newcommand{\R}{\mathbb{R}}
\newcommand{\lam}{\lambda}
\newcommand{\dt}{\delta}
\newcommand{\Dt}{\Delta}
\newcommand{\al}{\alpha}
\newcommand{\be}{\beta}
\newcommand{\ga}{\gamma}
\newcommand{\ep}{\epsilon}
\newcommand{\lt}{\left}
\newcommand{\rt}{\right}
\newcommand{\argmax}{\mathop{{\rm argmax}}}
\newcommand{\argmin}{\mathop{{\rm argmin}}}
\newcommand{\td}{\tilde}
\newcommand{\wtd}{\widetilde}
\newcommand{\wh}{\widehat}
\newcommand{\Pb}{\mathbb{P}}
\newcommand{\Ex}{\mathbb{E}}
\newcommand{\cA}{\mathcal{A}}
\newcommand{\cB}{\mathcal{B}}
\newcommand{\cD}{\mathcal{D}}
\newcommand{\cE}{\mathcal{E}}
\newcommand{\cI}{\mathcal{I}}
\newcommand{\cL}{\mathcal{L}}
\newcommand{\cP}{\mathcal{P}}
\newcommand{\cS}{\mathcal{S}}
\newcommand{\cT}{\mathcal{T}}
\newcommand{\cX}{\mathcal{X}}
\newcommand{\Var}{{\rm Var}}
\newcommand{\har}{}
\newcommand{\ubar}[1]{\underaccent{\bar}{#1}}
\newtheorem{theorem}{Theorem}[section]
\newtheorem{corollary}[theorem]{Corollary}
\newtheorem{lemma}[theorem]{Lemma}
\newtheorem{proposition}[theorem]{Proposition}
\newtheorem{definition}{Definition}[section]
\newtheorem{example}{Example}[section]
\newtheorem{assumption}[theorem]{Assumption}
\newtheorem{claim}[theorem]{Claim}
\def\endproof{\hfill $\Box$ \vskip 0.4cm}
\begin{document}
	
		\title{On the Convergence of CART under Sufficient Impurity Decrease Condition
		\thanks{This work is supported by Office of Naval Research (N00014-21-1-2841).}}
	\author{
		Rahul Mazumder\thanks{MIT Sloan School of Management, Operations Research Center and MIT Center for Statistics {(email: rahulmaz@mit.edu)}.}
		\and
		Haoyue Wang\thanks{MIT Operations Research Center (email: haoyuew@mit.edu).}
	}


	
	\date{}
	
		\maketitle
	
	\begin{abstract}
	The decision tree is a flexible machine learning model that finds its success in numerous applications. It is usually fitted in a recursively greedy manner using CART. In this paper, we investigate the convergence rate of CART under a regression setting. First, we establish an upper bound on the prediction error of CART under a sufficient impurity decrease (SID) condition \cite{chi2022asymptotic} -- our result improves upon the known result by \cite{chi2022asymptotic} under a similar assumption. Furthermore, we provide examples that demonstrate the error bound cannot be further improved by more than a constant or a logarithmic factor. Second, we introduce a set of easily verifiable sufficient conditions for the SID condition. Specifically, we demonstrate that the SID condition can be satisfied in the case of an additive model, provided that the component functions adhere to a ``locally reverse Poincar{\'e} inequality". We discuss several well-known function classes in non-parametric estimation to illustrate the practical utility of this concept.
\end{abstract}

	\section{Introduction}
	
	The decision tree \cite{breiman1984classification} is one of the most fundamental and popular methods in the machine learning toolbox. It utilizes a flowchart-like structure to recursively partition the data 
	and allows users to derive interpretable decisions. 
	It is usually constructed in a data-dependent greedy manner, with an algorithm called CART \cite{breiman1984classification}. 
 Thanks to its computational efficiency and ability to capture nonlinear structures, decision trees have served as the foundation for various influential algorithms for ensemble learning, including bagging \cite{breiman1996bagging}, random forest \cite{breiman2001random} and gradient boosting \cite{friedman2001greedy}. These algorithms are among the best-known nonparametric models for supervised learning with tabular data \cite{hastie2009elements}. 

 Despite its remarkable empirical success in various applications, our theoretical understanding of decision trees remains somewhat limited.
 The data-dependent splits employed in CART pose challenges for rigorous theoretical analysis. 
While some easy-to-analyze variants of CART have been widely studied \cite{lin2006random,biau2008consistency,biau2012analysis}, rigorous theoretical analysis of the original version of CART has been absent until recently. 
In particular, a recent line of work \cite{scornet2015consistency,wager2015adaptive,chi2022asymptotic,klusowski2022large} has established the consistency and prediction error bounds for CART and random forest. See Section~\ref{section:related-literature} for a comprehensive review of related literature.

		In this paper, we study the decision tree under a nonparametric regression model:
	\begin{equation}\label{model1}
		y_i = f^*( x_i) + \ep_i \quad i\in [n]
	\end{equation}
	with $x_i = (x_{i}^{(1)}, ..., x_{i}^{(p)}) \in \R^{p}$, $y_i\in \R$; $\ep_i$ is the noise with zero-mean,
	and $f^* (x) $ is the signal function. 
	We consider a random design where $\{x_i\}_{i=1}^n $ are i.i.d. from some underlying distribution $\mu$ supported on $[0,1]^p$, and $\{\ep_i \}_{i=1}^n$ are i.i.d. independent of $\{x_i\}_{i=1}^n$. For an estimator $\hat f$ of $f^*$, we use the $L^2$ prediction error
	$\| \hat f - f^*\|_{L^2(\mu)}^2$ as a measure of the estimator $\hat f$. 
Consider a regression tree algorithm that produces a heuristic (not necessarily exact) solution $\hat f$ of 
\begin{equation}\label{opt-trees}
	\min_{f\in \cT_d } ~~ \sum_{i=1}^n ( f(x_i) - y_i )^2
\end{equation}
where $\cT_d$ is the space of binary (i.e. each splitting node has two children) axis-aligned regression trees with maximum depth $d$. 
	The statistical performance of the estimator $\hat f$, therefore, is affected by the following three factors: (i) Approximation error: how accurately can $f^*$ be approximated by a regression tree in $\cT_d$. 
	(ii) Estimation error: the variance of the estimator, which is affected by 
	the complexity of the function space $\cT_d$. 
	A larger space $\cT_d$ (i.e. larger $d$) can reduce the approximation error, but it may also increase the estimation error and lead to overfitting. 
 (iii) Optimization error: the discrepancy between $\hat f$ and the exact optimal solution of the optimization problem~\eqref{opt-trees}. 
	Indeed, if $\hat f$ is the exact optimal solution of \eqref{opt-trees} (i.e. optimal regression tree), it can approximate a broad class of $f^*$ \cite{chatterjee2021adaptive}, and the prediction error can be analyzed via the classical theory of least-squares estimators \cite{tsybakov2009nonparametric,gyorfi2002distribution}. However, since solving \eqref{opt-trees} to optimality is computationally challenging for large-scale data, heuristics like CART are typically employed in practice. 
	To understand the statistical performance of CART, 
	the major difficulty lies in the analysis of the optimization error of CART.

	 As a greedy algorithm, CART examines only one split in each iteration (see Section~\ref{subsection:basics-of-CART} for details). 
	 Although decision trees are intended to capture multivariate structures across different features, 
	 the greedy nature of CART poses challenges in 
	 approximating functions $f^*$ that have complex multivariate structures. 
	 For example, consider the two-dimensional "XOR gate" function $f^*(u) := 1_{\{u \in [0,1/2)^2\}} + 1_{\{u \in [1/2,1)^2\}}$ \cite{hastie2009elements}, and assume that $x_i$ has a uniform distribution on $[0,1]^2$. 
	Note that $f^*$ itself is a depth-2 decision tree, with (e.g.) the root node split with the first coordinate at $1/2$, and the left and right children split with the second coordinate at $1/2$. 
	However, due to the symmetry structure of $f^*$, 
	when the sample size $n$ is large, the root split of CART is completely fitting to noises and is likely to generate a split near the boundary \cite{breiman1984classification,cattaneo2022pointwise}. Consequently, the resulting estimator $\hat f$ fails to capture the true tree structure encoded in $f^*$. 
	The major difficulty in fitting the XOR gate function with CART lies in the complicated interaction between the two coordinates.

The discussions above naturally lead to the following two questions about CART:
\begin{itemize}
	\item What conditions on $f^*$ are needed such that CART can approximate $f^*$ and yield a consistent estimator?
	
	\item If $f^*$ satisfies such conditions, what is the convergence rate of the prediction error of CART?
\end{itemize}

In this paper, we aim to address these fundamental questions by providing sufficient conditions on $f^*$ such that one can establish convergence rates of CART. 
These sufficient conditions are quite flexible to include a broad class of nonparametric models. In the following, we first discuss some related literature and the basics of CART. In Section~\ref{section: error-bound-under-SID}, we prove a general error bound for CART under a \textit{sufficient impurity decrease (SID)} condition \cite{chi2022asymptotic} of $f^*$. This error bound is an improvement over the existing results under similar conditions \cite{chi2022asymptotic}. In Section~\ref{sec:function-classes-SID}, we introduce a sufficient condition for the SID condition on $f^*$, and use it to show that the SID condition can be satisfied by a broad class of $f^*$.

\subsection{Related literature}\label{section:related-literature}
The inception and analysis of decision trees and CART algorithm can be traced back to the seminal work of Breiman~\cite{breiman1984classification}. 
Breiman's study marked the first significant step towards establishing the consistency of CART, albeit under stringent assumptions concerning tree construction.
The original formulation of CART, characterized by data-dependent splits, poses challenges when it comes to rigorous mathematical analysis.
To address this issue, subsequent research efforts \cite{lin2006random,biau2008consistency,biau2012analysis} have introduced alternative variants of CART that lend themselves to more tractable theoretical examinations. 
For instance, certain studies \cite{genuer2012variance,biau2008consistency,biau2012analysis,arlot2014analysis} assume random selection of splitting features in the tree. Other approaches incorporate random selections of splitting thresholds \cite{genuer2012variance,arlot2014analysis}, while some restrict splits to occur exclusively at the midpoint of the corresponding cell in the chosen coordinate \cite{biau2012analysis}.
These simplifications ensure the independence between the splitting rules of the tree and the training data -- the tree depends on the data only via the estimation in each cell. 
Such independence simplifies the analysis of the resultant trees. 
In a similar vein, alternative strategies have been proposed, such as the employment of a two-sample approach \cite{wager2018estimation} to compute splitting rules and cell estimations separately, giving rise to what is known as an ``honest tree". This methodology also ensures independence between various aspects of the tree construction and estimation process.

Since the first consistency result in \cite{breiman1984classification}, 
the strict theoretical analysis of the original CART algorithm has remained elusive until recent advancements. 
A significant breakthrough came with the seminal work of Scornet et al. \cite{scornet2015consistency}, which demonstrated the consistency of CART under the assumption of an additive model \cite{hastie2017generalized} with continuous component functions. 
It makes a technical contribution on the decrease of optimization error of CART. This technical argument strongly relies on the additive model. 
Subsequent works \cite{klusowski2021universal,klusowski2022large} further extended the theoretical analysis by providing non-asymptotic error bounds for CART under a similar additive model. 
However, the rates derived in \cite{klusowski2021universal,klusowski2022large} is a slow rate of $O((\log(n))^{-1})$, 
which seems not easy to be improved under the same assumption~\cite{cattaneo2022pointwise}. 
Although the additive model is a well-studied model in statistical learning, 
it is not immediately evident why such a modeling assumption is specifically required for CART to function effectively. 


Another line of work \cite{wager2015adaptive,chi2022asymptotic} adopts more intuitive assumptions on the signal function $f^*$ to ensure the progress of CART. In particular, Chi et al. \cite{chi2022asymptotic} introduced a sufficient impurity decrease (SID) condition. 
In essence, the SID condition posits that for every rectangular region $B$ within the feature space, there exists an axis-aligned split capable of reducing the population variance of $f^*$ (conditional on $B$) by a fixed ratio.
Note that such a split cannot be directly identified by CART, as it necessitates knowledge of the population distribution of $x_i$.
Instead, CART employs empirical variance of ${y_i}_{i=1}^n$ to guide its cell splitting decisions.
Nonetheless, the SID condition is a strong assumption on the approximation power of tree splits, which can ensure the consistency of CART via an empirical process argument \cite{chi2022asymptotic}. A similar condition has also appeared implicitly in \cite{wager2015adaptive}. 
Under the SID condition, Chi et al.~\cite{chi2022asymptotic} studied the consistency and convergence rates of CART, demonstrating a polynomial convergence $O(n^{-\ga})$ of the prediction error. 
A possible limitation of the SID condition is the difficulty of verification: it necessitates the decreasing condition to hold true for all rectangles. 
Although a few examples are provided in \cite{chi2022asymptotic}, the general procedure for verifying whether a specific $f^*$ satisfies this condition remains unclear. Notably, its relationship with the additive model assumption in \cite{scornet2015consistency,klusowski2021universal} remains to be fully understood.

Lastly, it is worth mentioning that several recent studies \cite{tan2022cautionary,cattaneo2022pointwise} have investigated the performance lower bounds of CART, shedding light on its limitations. 
Specifically, Tan et al. \cite{tan2022cautionary} demonstrated that even when assuming an additive regression function $f^*$, any single-tree estimator still suffers from the curse of dimensionality. 
Consequently, these estimators can only achieve convergence rates considerably slower than the oracle rates established for additive models \cite{tan2019doubly}. 
Another study by Cattaneo et al. \cite{cattaneo2022pointwise} delved into the splitting behavior of CART and provided theoretical support for empirical observations \cite{breiman1984classification} indicating that CART tends to generate splits near boundaries in cases where the signal is weak. 
These lower bounds support the importance of using trees in ensemble models to reduce the prediction error. 
Although we focus on analyzing CART for a single tree in this paper, the result can be adapted to the analysis of ensembles of trees for the fitting of more complex function classes.

\subsection{Contributions}

The first contribution of our paper is a refined analysis of CART under the SID condition. Although the convergence of CART has been studied in \cite{chi2022asymptotic} under SID condition, the analysis of \cite{chi2022asymptotic} seems not tight when the noises have light-tails (see discussions in the Appendix~\ref{appsec:compare-rates}). 
We establish an improved analysis of CART under this setting, and show by examples that our error bound is tight up to log factors. 

The second contribution of this paper is the decoding of the mystery of the SID condition. 
We discuss a sufficient condition under which the SID condition holds true. 
In particular, we introduce a class of univariate functions that satisfy an \textit{locally reverse Poincar{\'e} inequality}, and show that additive functions with each univariate component being in this class satisfy the SID condition. This builds a connection between the two types of assumptions in the literature: additive model and SID condition. We discuss a few examples that how the locally reverse Poincar{\'e} inequality can be verified.

		\subsection{Basics of CART}\label{subsection:basics-of-CART}
	
	We review the methodology of CART to build the tree. The primary objective of decision trees is to find optimal partitions of feature space that produce a minimal variance of response variables. 
	CART constructs the tree and minimizes the variance using a top-down greedy approach. 
	It starts with a root node that represents the whole feature space $ \R^p$ -- this can be viewed as an initial tree with depth $0$. 
	At each iteration, CART splits all the leave nodes in the current tree into a left child and a right child; the depth of the tree increases by $1$, and the number of leave nodes in the tree doubles. At each leave node, it takes an \textit{axis-aligned splits} with maximum variance (impurity) decrease. 
	Let $\cE$ be the set of all intervals (which can be open, closed, or open on one side and closed on another side). 
	Define the set of rectangles in $[0,1]^p$ as:
	\begin{equation}
		\cA := \Big\{ \prod_{j=1}^p E_j  ~\Big|~ E_j \in \cE ~~ \forall ~ j\in [p] \Big\}
	\end{equation}
	For each $A\in \cA$, define $\cI_{A} := \{ i\in [n] ~|~ x_i \in A\}$. 
	For a leave node that represents a rectangle $A \in \cA$, the impurity decrease of a split in feature $j\in [p]$ with threshold $b \in \R$ is given by:
	\begin{equation}\label{def: hat-Dt}
		\widehat \Dt ( A, j, b ) := \frac{1}{n} \sum_{i\in \cI_{A}} ( y_i - \bar y_{\cI_A} )^2 - 
		\frac{1}{n} \sum_{i\in \cI_{A_L}} ( y_i - \bar y_{\cI_{A_L}} )^2 - 
		\frac{1}{n} \sum_{i\in \cI_{A_R}} ( y_i - \bar y_{\cI_{A_R}} )^2 
	\end{equation}
	where $y_{\cI} := \frac{1}{|\cI|} \sum_{i\in \cI} y_i $ for any $\cI \subseteq [n]$, and
	\begin{equation}\label{def: AL and AR}
		\begin{aligned}
			A_L &= A_L(j,b) := A \cap \{v\in [0,1]^p ~|~ v_j \le b\}, \\
			A_R &= A_R(j,b) := A \cap \{v\in [0,1]^p ~|~ v_j > b\}. 
		\end{aligned}
	\end{equation}
	To split on $A$, CART takes $\hat j$ and $\hat b$ that maximize the impurity decrease, i.e., 
	\begin{equation}
		\hat j , \hat b \in \argmax_{j \in [p],b \in [0,1]} 	\widehat \Dt ( A, j, b )
	\end{equation}
	This splitting procedure is repeated until the tree has grown to a given maximum depth $d$.

	\subsection{Notations}
	
	For $a,b >0$, we write $a = O(b)$ if there exists a universal constant $C>0$ such that $a\le Cb$; we write $b = \Omega(a)$ if $a = O(b)$, and write $a = \Theta(b)$ if both $a = O(b)$ and $a = \Omega(b)$ are true. 
	For values $a_{n,p}, b_{n,p}$ that may depend on $n$ and $p$, we write $a_{n,p} = \wtd O(b_{n,p}) $ if 
	$a_{n,p} =  O(b_{n,p} \log^{\ga}(np)) $ for some fixed constant $\ga\ge0$.

	\section{Error bound of CART under SID property}\label{section: error-bound-under-SID}
	
	We study the prediction error bound of CART for the regression problem \eqref{model1}. We focus on a random design as in the following assumption. 
		\begin{assumption}\label{ass: basics}
		(i) (Random design) Suppose $\{x_i\}_{i=1}^n$ are i.i.d. random variables with a distribution $\mu$ supported on $[0,1]^p$. Suppose $\mu$ has a density $p_X$ (w.r.t. Lebesgue measure) on $[0,1]^p$ satisfying $  0 < \ubar\theta \le p_X(u) \le \bar\theta < \infty$ for all $u\in [0,1]^p$ for some constants $\ubar\theta \le 1$ and $\bar\theta \ge 1$. 
		
		(ii) (Error distribution) 
		Suppose $\{\ep_i\}_{i=1}^n$ are i.i.d. zero-mean 
		bounded random variables with $|\ep_i| \le m <\infty$ for some $m>0$. 
		Suppose $\{\ep_i\}_{i=1}^n$ are independent to $\{x_i\}_{i=1}^n$.

		(iii) (Bounded signal function) Suppose $\sup_{u\in [0,1]^p} |f^*(u)|  \le M <\infty$ for some constant $M$. 
	\end{assumption}

In Assumption~\ref{ass: basics} (i), it is assumed that the features $x_i$ are supported on $[0,1]^p$ for convenience. The same results (different by at most a constant) can be proved if we replace $[0,1]^p$ with an arbitrary bounded rectangle. The second assumption (Assumption~\ref{ass: basics} (ii)) requires the error terms $\{\ep_i\}_{i=1}^n$ to be i.i.d. bounded -- this is the key difference of our paper and the assumptions made in \cite{chi2022asymptotic}. 
In particular, \cite{chi2022asymptotic} made a milder assumption on the error terms that allows them to have heavy tails. Their result, however, when specifying to the case of bounded noises, is not tight (see discussions in Appendix~\ref{appsec:compare-rates} for a comparison). Note that for convenience we have assumed that $\ep_i$ is bounded. If instead, we assume the noises are sub-Gaussian, similar conclusions can also be proved via a truncation argument (the bound may be different by a log factor). 
The third assumption (Assumption~\ref{ass: basics} (iii)) is a standard assumption for nonparametric regression models \cite{tsybakov2009nonparametric,gyorfi2002distribution}.

Under the random design setting in Assumption~\ref{ass: basics}, 
we are ready to introduce the sufficient impurity decrease condition
on the underlying function $f^*$. 
	Let $X$ be a random variable in $\R^p$ that has the same distribution as $x_1$ and is independent to $\{x_1,...,x_n\}$. 
	For any $A \in \cA$, $j\in [p]$ and $b\in \R$, define:
	\begin{equation}\label{def: Dt}
		\begin{aligned}
			\Dt(A, j ,b) &:= \Pb(X\in A) \Var(f^*(X) | X\in A) - 
			\Pb(X\in A_L) \Var(f^*(X) | X\in A_L) \\
			& \quad ~ -  \Pb(X\in A_R) \Var(f^*(X) | X\in A_R)
		\end{aligned}
	\end{equation}
	where $A_L$ and $A_R$ are defined in \eqref{def: AL and AR}, and $\Var(f^*(X)| X\in A)$ means the conditional variance defined as
	\begin{equation}
		\Var(f^*(X)| X\in A) := \Ex 
  \Big( \big( f^*(X) - \Ex(f^*(X)|X\in A) \big)^2 \Big| X\in A
  \Big)
	\end{equation}
It is not hard to see the similarity between $\Dt(A, j ,b)$ and $\wh \Dt(A, j ,b)$ (defined in \eqref{def: hat-Dt}), where the former can be viewed as a population variant of the latter. Intuitively, for a split with feature $j$ and threshold $b$, 
$\Dt(A, j ,b)$ measures variance decrease of $f^*$ after this split. 
The SID condition below assumes that there is a split with a large variance decrease. 
	\begin{assumption}\label{ass: SID}
		(Sufficient Impurity Decrease) There exists a constant $\lam \in (0,1]$ such that for all $A\in \cA$, 
		\begin{equation}\label{ineq: SID}
			\sup_{j\in [p], b\in \R} \Dt(A, j , b)  \ \ge \  \lam \cdot \Pb(X\in A)
			\Var(f^*(X) | X\in A) 
		\end{equation}
	\end{assumption}

Note that Assumption~\ref{ass: SID} is a condition depending on both the function $f^*$ and the underlying distribution $X\sim \mu$.
	Briefly speaking, the SID condition requires that the best split must decrease the population variance by a constant factor (note that $\lam$ is required to be bounded away from $0$). Intuitively, the condition \eqref{ineq: SID} can be satisfied if $f^*$ has a significant ``trend" along some axis in $A$, but may be hard to be satisfied if $f^*$ is relatively ``flat" in $A$. Since Assumption~\ref{ass: SID} requires \eqref{ineq: SID} being satisfied for all cells $A\in \cA$, it essentially requires that $f^*$ is not ``too flat" on any local rectangle $A$. 
	In Section~\ref{sec:function-classes-SID}, we develop a technical argument to translate this intuition into a rigorous statement, and use it to check Assumption~\ref{ass: SID} for a wide class of functions.

With Assumptions~\ref{ass: basics} and \ref{ass: SID} at hand, we are ready to present the main result in this section. For any function $g : \R^p \rightarrow \R$, let $\| g\|_{L^2(\mu)}$ denote the $L^2$-norm of $g$ with respect to the measure $\mu$.

	\begin{theorem}\label{theorem: eb-under-sid}
		Suppose Assumptions \ref{ass: basics} and \ref{ass: SID} hold true. 
		Let $\widehat f^{(d)} (\cdot)$ be the tree estimated by CART with depth $d$. 
		Suppose $n$ is large enough such that 
		$\frac{ 2\bar\theta e^2 d }{n} \vee \frac{\log(72p^d(n+1)^{2d}/\dt)}{n}
		<3/4
		$. 
		Then there exists a universal constant $C>0$ such that 
		with probability at least $1-\dt$, it holds that for any $\al>0$, 
		\begin{equation}\label{ineq: eb-under-sid}
			\| \widehat f^{(d)} - f^*\|_{L^2(X)}^2 \le 
			2\Var(f^*(X)) \cdot \Big( 1 - \frac{\lam}{(1+\al)^2} \Big)^d
			+  C \frac{2^d (d \log (np) + \log(1/\dt))}{\al n} U^2
		\end{equation}
		where $U := M+m$. In particular, taking $\al = 1/d$ and $d = \lceil \log_2(n)/ ( 1-\log_2(1-\lam) ) \rceil$, it holds 
		\begin{equation}\label{ineq: eb2-under-sid}
			\| \widehat f^{(d)} - f^*\|_{L^2(X)}^2 \le C_{\lam, U} \frac{\log^2(n) \log(np) + \log(n) \log(1/\dt)}{ n^{ \phi(\lam) } }
		\end{equation}
		where $\phi(\lam) := \frac{ -\log_2(1-\lam) }{ 1- \log_2(1-\lam) }$, and $C_{\lam, U}$ is a constant that only depends on $\lam$ and $U$. 
	\end{theorem}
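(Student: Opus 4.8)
The plan is to separate $\|\wh f^{(d)}-f^*\|_{L^2(\mu)}^2$ into an estimation error and an approximation error and bound each on a common high-probability event; the crucial device is a uniform concentration inequality that transfers Assumption~\ref{ass: SID} (a statement about the population impurity decrease $\Dt$) to CART, which instead maximizes the empirical impurity decrease $\wh\Dt$. Write $T_k$ for the depth-$k$ partition produced by CART and $\wh T:=T_d$ for its $2^d$ leaves; put $D(A):=\Pb(X\in A)\Var(f^*(X)\mid X\in A)$, $V_k:=\sum_{A\in T_k}D(A)$ (so $V_0=\Var(f^*(X))$), and let $\bar f_{\wh T}$ be the function equal to $\Ex(f^*(X)\mid X\in A)$ on each leaf $A$, so that $V_d=\|\bar f_{\wh T}-f^*\|_{L^2(\mu)}^2$. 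Since $\wh f^{(d)}$ equals the sample mean $\bar y_{\cI_A}$ on each leaf,
\begin{equation}
\|\wh f^{(d)}-f^*\|_{L^2(\mu)}^2\ \le\ 2\sum_{A\in\wh T}\Pb(X\in A)\big(\bar y_{\cI_A}-\Ex(f^*(X)\mid X\in A)\big)^2\ +\ 2\,V_d .
\end{equation}
All concentration below is made uniformly over the family $\cF$ of at most $p^d(n+1)^{2d}$ rectangles that can occur as a node of a depth-$\le d$ CART tree, which is why the union bound contributes the factor $\log(|\cF|/\dt)\asymp d\log(np)+\log(1/\dt)$; the hypothesis on $n$ ensures these bounds are non-trivial (all tail terms below $1$) and that $|\cI_A|$ concentrates around $n\Pb(X\in A)$ for cells of non-negligible mass.

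For the estimation term I decompose $\bar y_{\cI_A}-\Ex(f^*(X)\mid X\in A)$ into a noise part $\bar y_{\cI_A}-\frac1{|\cI_A|}\sum_{i\in\cI_A}f^*(x_i)$, handled by Hoeffding with $|\ep_i|\le m$, and a sampling part $\frac1{|\cI_A|}\sum_{i\in\cI_A}f^*(x_i)-\Ex(f^*(X)\mid X\in A)$, handled similarly with $|f^*|\le M$; both are $O(U\sqrt{(d\log(np)+\log(1/\dt))/|\cI_A|})$ with $U=M+m$. Leaves above the mass threshold then contribute $\Pb(X\in A)\cdot O(U^2(d\log(np)+\log(1/\dt))/|\cI_A|)=O(U^2(d\log(np)+\log(1/\dt))/n)$ each, while each of the at most $2^d$ low-mass leaves contributes at most $\Pb(X\in A)(2U)^2=O(U^2(d\log(np)+\log(1/\dt))/n)$; summing over $\le2^d$ leaves bounds the estimation term by $C\,2^d(d\log(np)+\log(1/\dt))U^2/n$.

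For the approximation term, the within/between variance identity (applied empirically and in the population) gives $\wh\Dt(A,j,b)=\tfrac1n\tfrac{|\cI_{A_L}|\,|\cI_{A_R}|}{|\cI_A|}(\bar y_{\cI_{A_L}}-\bar y_{\cI_{A_R}})^2\ge0$, $\Dt(A,j,b)=\tfrac{\Pb(X\in A_L)\Pb(X\in A_R)}{\Pb(X\in A)}\big(\Ex(f^*(X)\mid X\in A_L)-\Ex(f^*(X)\mid X\in A_R)\big)^2\ge0$, and $D(A)-D(A_L)-D(A_R)=\Dt(A,j,b)$. The key estimate is the uniform square-root concentration
\begin{equation}
\sup_{A,j,b}\ \big|\sqrt{\wh\Dt(A,j,b)}-\sqrt{\Dt(A,j,b)}\big|\ \le\ \mathrm{err},\qquad\mathrm{err}^2:=C\,U^2(d\log(np)+\log(1/\dt))/n,
\end{equation}
valid with high probability; the square-root scale is essential, as it turns the $\sqrt{\log/n}$-scale fluctuation of the conditional means $\bar y_{\cI_{A_L}},\bar y_{\cI_{A_R}}$ and of the cell frequencies $|\cI_{A_L}|/n$ into a $\sqrt{\log/n}$-scale error for $\sqrt{\wh\Dt}$, whereas an additive bound on $\wh\Dt-\Dt$ would only be $O(\sqrt{\log/n})$. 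Granting this, fix a cell $A$, let $(\wh j,\wh b)\in\argmax_{j,b}\wh\Dt(A,j,b)$ be the CART split, and let $(j^*,b^*)$ attain (up to a negligible slack) $\sup_{j,b}\Dt(A,j,b)\ge\lam D(A)$ from Assumption~\ref{ass: SID}; then CART's optimality $\wh\Dt(A,\wh j,\wh b)\ge\wh\Dt(A,j^*,b^*)$ chains to
\begin{equation}
\sqrt{\Dt(A,\wh j,\wh b)}\ \ge\ \sqrt{\wh\Dt(A,\wh j,\wh b)}-\mathrm{err}\ \ge\ \sqrt{\wh\Dt(A,j^*,b^*)}-\mathrm{err}\ \ge\ \sqrt{\Dt(A,j^*,b^*)}-2\,\mathrm{err}\ \ge\ \sqrt{\lam D(A)}-2\,\mathrm{err}.
\end{equation}
Squaring via $(a-b)_+^2\ge a^2-2ab$ and then $2ab\le\rho\,a^2+b^2/\rho$ with $\rho:=\al(\al+2)/(1+\al)^2$ yields, for \emph{every} cell, $D(A_L)+D(A_R)\le(1-\lam/(1+\al)^2)D(A)+O(\mathrm{err}^2/\al)$; summing over $A\in T_k$ and unrolling $V_{k+1}\le(1-\lam/(1+\al)^2)V_k+O(2^k\mathrm{err}^2/\al)$ from $V_0=\Var(f^*(X))$ gives $V_d\le(1-\lam/(1+\al)^2)^d\Var(f^*(X))+O(2^d\mathrm{err}^2/\al)$. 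Together with the estimation bound and $\Var(f^*(X))\le U^2$ this proves \eqref{ineq: eb-under-sid}. For \eqref{ineq: eb2-under-sid}, put $\al=1/d$: from $1-\lam/(1+1/d)^2\le(1-\lam)\big(1+2\lam/((1-\lam)(d+1))\big)$ one gets $(1-\lam/(1+1/d)^2)^d\le e^{2\lam/(1-\lam)}(1-\lam)^d$, and with $d=\lceil\log_2(n)/(1-\log_2(1-\lam))\rceil$ both $(1-\lam)^d$ and $2^d/n$ equal $\Theta(n^{-\phi(\lam)})$ while $1/\al=d=\Theta(\log n)$, which yields the stated logarithmic factors and the constant $C_{\lam,U}$.

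The step I expect to be the main obstacle is establishing the uniform square-root concentration above at the sharp scale $\mathrm{err}^2=\wtd O(U^2/n)$: it has to hold simultaneously over all rectangles in $\cF$ and over all splits, including the deterministic population-optimal split $(j^*(A),b^*(A))$ needed to invoke Assumption~\ref{ass: SID}, whose threshold need not be an observed feature value, and it must be robust to cells with few or no sample points. Making this precise requires combining the union bound over $\cF$ with Bernstein-type control of the conditional means $\bar y_{\cI_{A_L}},\bar y_{\cI_{A_R}}$ and of the cell frequencies, plus a separate accounting of low-mass cells, whose contribution to $V_d$ is automatically small since there $D(A)\le\Pb(X\in A)(2M)^2$ is small.
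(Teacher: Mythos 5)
Your proposal follows essentially the same route as the paper: the same decomposition into an estimation term (leaf means vs.\ population conditional means, bounded uniformly over the $p^d(n+1)^{2d}$-sized grid of rectangles) plus an approximation term $V_d$, the same square-root-scale comparison between $\wh\Dt$ and $\Dt$ leading via Young's inequality to the recursion $V_{k+1}\le(1-\lam/(1+\al)^2)V_k+O(2^k\mathrm{err}^2/\al)$, and the same choices $\al=1/d$, $d=\lceil\log_2(n)/(1-\log_2(1-\lam))\rceil$. The uniform square-root concentration you single out as the main obstacle is exactly what the paper's Lemmas~\ref{lemma: key1}--\ref{lemma: key3} establish (via the $\sqrt{\Pb(X\in A)}$-rescaled mean deviations, the square-root frequency concentration, and the discretization/boundary-shell argument), so the sketch is correct and matches the paper's proof in all essential respects.
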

	The error bound in \eqref{ineq: eb-under-sid} consists of two terms. The first term corresponds to the bias of the approximating $f^*$ using CART. It 
	decreases geometrically as $d$ increases, 
	which is suggested by the SID condition. 
	The second term is $O( {2^d d \log (np)}/{n})$, which corresponds to the estimation variance of CART. Here the term $2^d $ represents the model complexity -- a fully-grown depth $d$ tree has $2^d$ leaves.
	The proof of Theorem~\ref{theorem: eb-under-sid} is presented in Appendix~\ref{appsec: proof of theorem eb-sid}. It is based on a careful technical argument that controls the difference between the population impurity decrease $\Dt(A,j,b)$ and the empirical impurity decrease $\widehat\Dt(A,j,b)$. In particular, our analysis is different from that of \cite{chi2022asymptotic}. Note that \cite{chi2022asymptotic} makes a different assumption on the error distribution, and their technical argument is not sufficient for the proof of Theorem~\ref{theorem: eb-under-sid}. Particularly, the error bound in \eqref{ineq: eb2-under-sid} is an improvement of the results proved in Theorem~1 of \cite{chi2022asymptotic}---see  Section~\ref{appsec:compare-rates} for a detailed comparison. 
	
	Note that the convergence rate in \eqref{ineq: eb2-under-sid} has a crucial dependence on the coefficient $\lam$. For a large value of $\lam$, the exponent $\phi(\lam)$ is also larger, leading to a faster convergence rate in \eqref{ineq: eb2-under-sid}. On the other end, if $\lam$ is small, as $\lam\rightarrow 0$, we have $\phi(\lam) \rightarrow 0$, and the convergence rate in \eqref{ineq: eb2-under-sid} can be arbitrarily slow. Nonetheless, as long as the SID condition is satisfied with $\lam$ bounded away from $0$, \eqref{ineq: eb2-under-sid} shows a polynomial rate, which is better than the logarithmic rate in \cite{klusowski2021universal,klusowski2022large} without the SID condition.

	To explore the tightness of the error bounds in Theorem~\ref{theorem: eb-under-sid}, we consider a simple example with $f^*$ being a univariate linear function. 
	
	\begin{example}\label{example: simple-linear}
		(Univariate linear function) Suppose $p=1$, and $x_i$ are i.i.d. from the uniform distribution on $[0,1]$. Suppose $f^*$ is a univariate linear function: $f^*(t) = t$. Then the SID condition (Assumption~\ref{ass: SID}) is satisfied with coefficient $\lam = 3/4$. As a result, it holds $\phi(\lam) = 2/3$, and the error bound in the RHS of \eqref{ineq: eb2-under-sid} is $\wtd O(n^{-2/3})$. 
	\end{example}
\begin{proof}
	For any interval $A = [\ell, r]$, we have 
	\begin{equation}\label{var1}
		\Pb(X\in A) \Var(f^*(X) | X\in A) = \int_{\ell}^r \Big( t - \frac{\ell+r}{2} \Big)^2 \ {\rm d} t = \frac{(r - \ell)^3}{12}
	\end{equation}
	Take $b:= (\ell+r)/2$, then we have
	\begin{equation}\label{var2}
		\begin{aligned}
		\Dt(A, 1, b) = \ & \int_{\ell}^r \Big( t - \frac{\ell+r}{2} \Big)^2 \ {\rm d} t - 
		\int_{\ell}^b \Big( t - \frac{\ell+b}{2} \Big)^2 \ {\rm d} t - 
		\int_{b}^r \Big( t - \frac{b+r}{2} \Big)^2 \ {\rm d} t	\\
		= \ & 
		\frac{1}{12} (r-\ell)^3 - \frac{1}{12} (b-\ell)^3 - \frac{1}{12} (r -b)^3 = \frac{3}{48} (r-\ell)^3
		\end{aligned}
	\end{equation}
	Combining \eqref{var1} and \eqref{var2} we know that Assumption~\ref{ass: SID} is satisfied with $\lam = 3/4$.  
\end{proof}

As shown by Example~\ref{example: simple-linear}, when $f^*$ is a univariate linear function and $\{x_i\}_{i=1}^n$ are from a uniform distribution, the error bound in \eqref{ineq: eb2-under-sid} reduces to  $\wtd O(n^{-2/3})$. 
This rate matches the lower bound for any partition-based estimator (see e.g. \cite{gyorfi2002distribution} Chapter 4), 
\footnote{There is a subtle difference between the lower bound in \cite{gyorfi2002distribution} Chapter 4 and the upper bound in Theorem~\ref{theorem: eb-under-sid}. The lower bound in \cite{gyorfi2002distribution} requires that the splitting points of the partition are independent of the training data, while for CART this condition is not satisfied.}
hence showing that the upper bound in \eqref{ineq: eb2-under-sid} cannot be improved (by more than a log factor) via any refined analysis. To verify the rate $\wtd O(n^{-2/3})$, we conduct a simulation shown by Figure~\ref{figure: rate}. In particular, Figure~\ref{figure: rate} presents the log-log plot of the prediction error of CART versus different sample sizes, where the depth $d$ is taken as stated above \eqref{ineq: eb2-under-sid}. A reference line of slope $-2/3$ is also shown. 
It can be seen that the slope of the line for CART is also roughly $-2/3$, which is consistent with the rate $\wtd O(n^{-2/3})$ derived in Example~\ref{example: simple-linear}. 

\begin{figure}[h]
	\centering
	\includegraphics[width=0.63\textwidth,trim={0 0cm 0 1cm},clip]{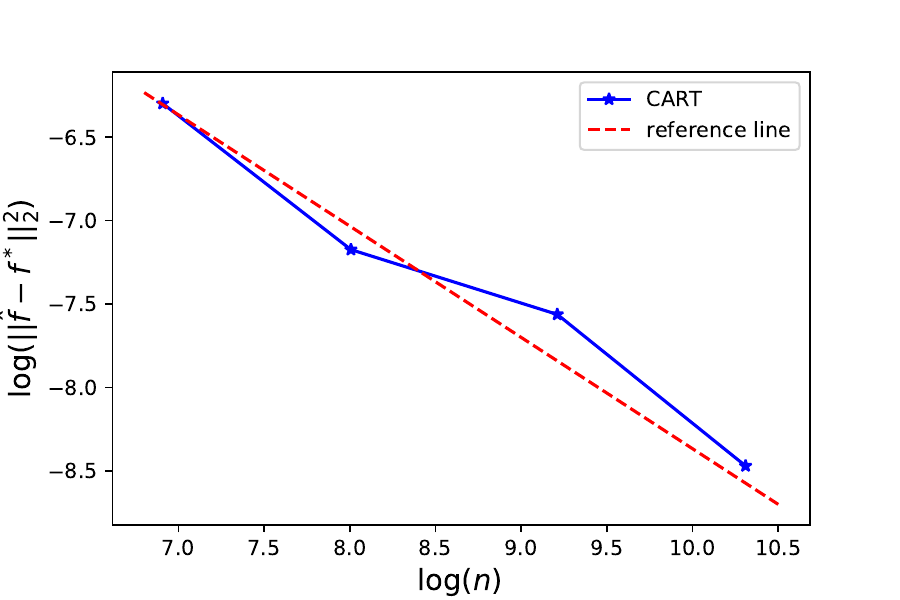}
	\caption{Prediction error of CART versus sample size $n$ for a univariate linear model. } 
	\label{figure: rate}
\end{figure}

		\section{Function classes satisfying SID condition}\label{sec:function-classes-SID}
		
  In this section, we introduce a class of sufficient conditions that facilitate the easy verification of the SID condition for a broad range of functions. 
  Specifically, we focus on the case when $f^*$ has an additive structure:
		\begin{equation}\label{additive-model}
			f^*(u) = f^*_1(u_1) + f^*_2(u_2) + \cdots + f^*_p(u_p) 
		\end{equation}
	where $f_j^*$'s are univariate functions. Note that additive model has also been assumed in prior works such as \cite{scornet2015consistency,klusowski2021universal,klusowski2022large} for the study of CART. If we only assume that each component function $f^*_j$ has bounded total variation, then only a slow rate $O(\log^{-1}(n))$ is known \cite{klusowski2021universal}. In the following, we discuss a few sufficient conditions concerning the component functions $f_j^*$ under which $f^*$ satisfies the SID condition, consequently guaranteeing a faster convergence rate as indicated by Theorem~\ref{theorem: eb-under-sid}. 
	
	First, we introduce a key class of univariate functions that satisfy a special integral inequality. 
	\begin{definition}
		(Locally Reverse Poincar{\'e} Class)
		For a univariate differentiable function $g$ on an interval $Q\subset \R$, we say it belongs to the Locally Reverse Poincar{\'e} (LRP) class on $Q$ with parameter $\tau$ if for any subinterval $[a,b] \subseteq Q$, 
		\begin{equation}\label{ineq:reverse-poincare}
			\Big(\int_a^b |g'(t)| \ {\rm d} t \Big)^2 \le \frac{\tau^2}{b-a} \inf_{w\in \R} \int_a^b |g(t) - w|^2 \ {\rm d} t 
		\end{equation}
		We use the notation $LRP(Q, \tau)$ to denote the class of all such functions. 
	\end{definition}
 It is worth noting that for any given univariate differentiable function $g$ and a fixed interval $[a,b]$, there exists a constant $\tau$ such that the reverse of inequality~\eqref{ineq:reverse-poincare} holds true, as established by the Poincar{\'e} inequality. The condition \eqref{ineq:reverse-poincare} requires a uniform constant $\tau>0$ such that the reverse of the Poincar{\'e} inequality holds true. 
Note that the LHS of \eqref{ineq:reverse-poincare} is equivalent to the square of the total variation of $g$ on $[a,b]$, and the RHS is a constant multiple of the conditional variance of $g$ on $[a,b]$ (under the uniform distribution). 
Intuitively, 
this condition can be satisfied when the function $g$ exhibits a clear ``trend" within any interval $[a,b]$ rather than being fuzzy fluctuations. 
We give a few examples below. 

\begin{example}\label{example: strongly-increasing}
	(Strongly increasing function) Let $g$ be a strictly increasing function on $[0,1]$ with $c_2 \ge g'(t) \ge c_1>0$ for all $t\in [0,1]$. Then $g \in LRP([0,1], 2\sqrt{3} c_2/c_1)$.
\end{example}
Example~\ref{example: strongly-increasing} provides a direct generalization of the simple linear function showcased in Example~\ref{example: simple-linear}. Note that this example was also discussed in \cite{chi2022asymptotic}. 
In Example~\ref{example: strongly-increasing}, 
since $g'(t)$ is consistently bounded away from zero, 
verifying the inequality~\ref{ineq:reverse-poincare} becomes straightforward.
Below we present a more nuanced example where $g'(t)$ may equal zero at certain point $t \in [0,1]$.

		\begin{example}\label{example: smooth-strongly-convex}
		(Smooth and strongly convex function)
		Let $g$ be a $L$-smooth and $\sigma$-strongly-convex function on $[0,1]$ for some $L>\sigma>0$, i.e., 
		\begin{equation}
			\frac{\sigma}{2} (t-s)^2 \le 
			g(t) - g(s) - g'(s) (t-s) \le 
			\frac{L}{2} (t-s)^2
		\end{equation}
		for all $t, s \in [0,1]$. Then $g \in LRP([0,1], 110(L/\sigma))$. 
	\end{example}
Note that for a smooth and strongly convex function $g$, the gradient $g'(t)$ can be zero at some $t \in [0,1]$. 
But the strong convexity condition guarantees that there is at most one point $t$ where $g'(t)=0$, and the inequality \eqref{ineq:reverse-poincare} can still be satisfied. 
Additional details can be found in Appendix~\ref{appsec:function-classes-with-SID}.

	\begin{example}\label{example: polynomials}
	(Polynomial with degree $r$)
	There exists a constant $C_r$ that only depends on $r$ such that for any univariate polynomial $g$ of degree at most $r$, $g\in LRP(\R, C_r)$.
\end{example}
Although the set of polynomials (with a degree at most $r$) appears to be complicated, it is a finite-dimensional linear space. Therefore the differentiation is a bounded linear operator between finite dimensional normed linear spaces. As a result, the conclusion of Example~\ref{example: polynomials} can be proved via a variable transformation argument. See Appendix~\ref{appsec:function-classes-with-SID} for details. 

Examples \ref{example: strongly-increasing} -- \ref{example: polynomials} demonstrate that the LRP condition can be readily verified for various well-known function classes, but the relationship between the LRP class and the SID condition remains unclear.
Below we present the first main result in this section, which establishes a connection between the LRP class and the SID condition under the additive model. 

	\begin{proposition}\label{prop:SID-sufficient1}
	Suppose Assumption~\ref{ass: basics} holds true, and $f^*$ has an additive structure as in \eqref{additive-model}. If $f_k^* \in LRP([0,1], \tau)$ for all $k\in [p]$, then Assumption~\ref{ass: SID} is satisfied with $\lam = 4 \ubar\theta / (p\tau^2\bar\theta)$.	
	\end{proposition}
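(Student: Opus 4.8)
The plan is to use the additive structure \eqref{additive-model} to reduce the sufficient‑impurity‑decrease inequality \eqref{ineq: SID} to a one–dimensional estimate on each component $f_k^*$, extracted from the locally reverse Poincar\'e property.

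\emph{Step 1 (reduction to a single coordinate via an ANOVA identity).} Fix $A=\prod_k Q_k\in\cA$, a coordinate $j$, and a threshold $b$; write $A_L,A_R$ as in \eqref{def: AL and AR} and $\mu_A:=\Ex[f^*(X)\mid X\in A]$. The classical between‑group / within‑group decomposition of variance, applied to $\Dt(A,j,b)$ from \eqref{def: Dt}, gives
\[
\Dt(A,j,b)=\frac{\Pb(X\in A_L)\,\Pb(X\in A_R)}{\Pb(X\in A)}\big(\Ex[f^*(X)\mid X\in A_L]-\Ex[f^*(X)\mid X\in A_R]\big)^2=\frac{\Pb(X\in A)\,G_{A,j}(b)^2}{\Pb(X\in A_L)\,\Pb(X\in A_R)},
\]
where $G_{A,j}(b):=\int_{A_L}(f^*(u)-\mu_A)\,p_X(u)\,du$ and the second identity uses $\int_A(f^*-\mu_A)p_X=0$. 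Since $\Pb(X\in A_L)\,\Pb(X\in A_R)\le\Pb(X\in A)^2/4$, this yields $\sup_b\Dt(A,j,b)\ge\frac{4}{\Pb(X\in A)}\sup_b G_{A,j}(b)^2$. It therefore suffices to find, for every $A$, a coordinate $j^\star$ with $\sup_b G_{A,j^\star}(b)^2\ge\frac{\ubar\theta}{p\,\bar\theta\,\tau^2}\,\Pb(X\in A)\cdot\Pb(X\in A)\Var(f^*(X)\mid X\in A)$.

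\emph{Step 2 (making $G_{A,j}$ genuinely univariate).} Integrating out the coordinates $k\ne j$ gives $G_{A,j}(b)=\int_{\ell_j}^{b}m_j(t)\,(h_j(t)-\bar h_j)\,dt$, where $m_j$ is the density of $X_j$ restricted to $A$ (so that $\ubar\theta\prod_{k\ne j}|Q_k|\le m_j\le\bar\theta\prod_{k\ne j}|Q_k|$ on $Q_j$), $h_j(t):=f_j^*(t)+\Ex[\sum_{k\ne j}f_k^*(X_k)\mid X_j=t,\ X\in A]$, and $\bar h_j$ is the $m_j$–mean of $h_j$; correspondingly $\Pb(X\in A)\Var(f^*(X)\mid X\in A)=V_{m_j}(h_j,Q_j)+C_j$, where $V_{m_j}(h_j,Q_j):=\inf_w\int_{Q_j}(h_j-w)^2 m_j$ and $C_j\ge0$. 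When the coordinates of $X$ are independent, $h_j$ equals $f_j^*$ up to an additive constant, hence lies in $LRP(Q_j,\tau)$, and $\Var(f^*(X)\mid X\in A)=\sum_k\Var(f_k^*(X_k)\mid X\in A)$; choosing $j^\star\in\argmax_k\Var(f_k^*(X_k)\mid X\in A)$ then gives $V_{m_{j^\star}}(h_{j^\star},Q_{j^\star})\ge\frac1p\Pb(X\in A)\Var(f^*(X)\mid X\in A)$.

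\emph{Step 3 (the univariate lemma, which is the crux).} I would establish: if $g\in LRP([\ell,r],\tau)$ and $\pi$ is a density on $[\ell,r]$ with $0<\pi_{\min}\le\pi\le\pi_{\max}$, then the best single split decreases the $\pi$–variance of $g$ by at least the factor $4\pi_{\min}/(\pi_{\max}\tau^2)$. To prove it, put $G(b):=\int_\ell^b(g-\bar g_\pi)\pi$, so $G(\ell)=G(r)=0$ and $G'=(g-\bar g_\pi)\pi$; integration by parts gives
\[
V_\pi:=\int_\ell^r(g-\bar g_\pi)^2\pi=\int_\ell^r(g-\bar g_\pi)\,G'=-\int_\ell^r g'(t)\,G(t)\,dt\ \le\ \|G\|_\infty\int_\ell^r|g'(t)|\,dt.
\]
The locally reverse Poincar\'e inequality \eqref{ineq:reverse-poincare} on all of $[\ell,r]$ gives $\big(\int_\ell^r|g'|\big)^2\le\frac{\tau^2}{r-\ell}\inf_w\int_\ell^r(g-w)^2\le\frac{\tau^2}{\pi_{\min}(r-\ell)}V_\pi$; substituting this bound and solving for $\|G\|_\infty$ yields $\|G\|_\infty^2\ge\frac{\pi_{\min}(r-\ell)}{\tau^2}V_\pi$. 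Combined with the one–dimensional instance of the Step~1 identity, $\sup_b\Dt_\pi([\ell,r],b)\ge\frac{4\|G\|_\infty^2}{\int_\ell^r\pi}\ge\frac{4\|G\|_\infty^2}{\pi_{\max}(r-\ell)}$, this proves the lemma. Applying it with $g=h_{j^\star}$ and $\pi=m_{j^\star}$ (for which $\pi_{\min}/\pi_{\max}=\ubar\theta/\bar\theta$) and chaining back through Steps~2 and~1 produces $\sup_{j,b}\Dt(A,j,b)\ge\frac{4\ubar\theta}{p\,\bar\theta\,\tau^2}\Pb(X\in A)\Var(f^*(X)\mid X\in A)$, i.e.\ \eqref{ineq: SID} with $\lam=4\ubar\theta/(p\tau^2\bar\theta)$.

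\emph{Main obstacle.} The only point above that uses independence of the coordinates of $X$ is the claim $h_{j^\star}\in LRP(Q_{j^\star},\tau)$ (together with the control of $C_j$ for the heaviest coordinate). For a general density bounded in $[\ubar\theta,\bar\theta]$, $h_j$ carries the correction term $\Ex[\sum_{k\ne j}f_k^*(X_k)\mid X_j=\cdot,\ X\in A]$, which need not be in $LRP$ --- or even differentiable --- and $C_j$ need not be small for the heaviest coordinate. The substantive part of the proof is therefore to run the integration‑by‑parts step of Step~3 on $h_j$ itself and to bound the contribution of the correction term using only $\ubar\theta\le p_X\le\bar\theta$ (controlling the relevant covariance and variance quantities by the density ratio times $\sum_k\Var(f_k^*(X_k)\mid X\in A)$), in such a way that the final constant stays proportional to $\ubar\theta/\bar\theta$ rather than $(\ubar\theta/\bar\theta)^2$. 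The remaining ingredients --- the ANOVA identity, the factor‑$1/p$ loss from selecting one coordinate, and the integration‑by‑parts‑plus‑LRP univariate lemma --- are routine.
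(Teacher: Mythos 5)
Your Step 1 is exactly the paper's starting point (the identity of Lemma~\ref{lemma:population-gap-eq}), and your Step 3 integration-by-parts lemma is a correct univariate statement. But the proof as written has a genuine gap, and it is precisely the one you flag yourself: Step 2 reduces the problem to a univariate LRP estimate only when the coordinates of $X$ are independent. Under Assumption~\ref{ass: basics} the density is merely bounded between $\ubar\theta$ and $\bar\theta$, so the function $h_j(t)=f_j^*(t)+\Ex[\sum_{k\ne j}f_k^*(X_k)\mid X_j=t,\ X\in A]$ need not belong to any LRP class (it need not even be differentiable), the conditional variance need not decompose as $\sum_k\Var(f_k^*(X_k)\mid X\in A)$, and your choice of $j^\star$ by maximal marginal variance loses control of the remainder $C_j$. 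Since handling this dependence is the entire content of the proposition beyond routine manipulations, the proposal is incomplete as it stands.

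The paper closes this gap by never conditioning on $X_j=t$. In Lemma~\ref{lemma:population-gap-ineq} it lower-bounds $\max_b\sqrt{\Dt(A,j,b)}$ by a weighted average over split points $s$ with weight $\sqrt{q_A^{(j)}(s)(1-q_A^{(j)}(s))}\,dV_{f_j^*}([\ell_j,s])$; by the exact identity of Lemma~\ref{lemma:population-gap-eq} the weighted numerator equals $\Pb(X\in A)^{-1/2}\int_{\ell_j}^{u_j}\big|\Ex\big(f^*(X)1_{\{X\in A,\;X^{(j)}>s\}}\big)\big|\,dV_{f_j^*}([\ell_j,s])$, and pulling the absolute value outside followed by Fubini collapses this to $\Pb(X\in A)^{-1/2}\,|\Ex(f^*(X)f_j^*(X^{(j)})1_{\{X\in A\}})|$. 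Summing over $j$ and using additivity gives $\sum_j|\Ex(f^*f_j^*1_{\{X\in A\}})|\ge\Ex((f^*)^2 1_{\{X\in A\}})$ with no independence assumption whatsoever. The density bounds then enter only twice, each contributing one factor of $\ubar\theta/\bar\theta$ or a constant: $\sqrt{q(1-q)}\le 1/2$ together with LRP controls each denominator term by $(\tau/2)\big(\tfrac{1}{u_k-\ell_k}\inf_w\int(f_k^*-w)^2\,dt\big)^{1/2}$, and Lemma~\ref{lemma:super-LIB} shows $\Var(f^*(X)\mid X\in A)\ge\tfrac{\ubar\theta}{\bar\theta}\sum_k\tfrac{1}{u_k-\ell_k}\int(f_k^*-c_k)^2\,dt$, where the cross terms vanish exactly because the comparison is made under Lebesgue measure on the rectangle. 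To salvage your route you would have to run the integration by parts on $h_j$ itself and bound the correction term; the paper's weighted-average-plus-Fubini device is the way it sidesteps that entirely.
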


	As shown by Proposition~\ref{prop:SID-sufficient1}, for the additive model \eqref{additive-model}, when the basic conditions in Assumption~\ref{ass: basics} are satisfied, the SID condition holds true for $f^*$ as long as each component function is in an LRP class. 
 Particularly, the SID condition is satisfied for component functions from Examples \ref{example: strongly-increasing} -- \ref{example: polynomials}. 
 Note that the coefficient $\lam$ in Proposition~\ref{prop:SID-sufficient1} is proportional to $1/p$, which implies that it can be very small when $p$ is large. In particular, with this $\lam = \Theta (1/p)$ (with $\ubar\theta, \bar\theta$ and $\tau$ being constants), by Theorem~\ref{theorem: eb-under-sid}, the prediction error bound in the RHS of \eqref{ineq: eb2-under-sid} is $ O(n^{ -c/p})$ for some constant $c>0$ \footnote{The lower bound is based on general additive models which may not satisfy the SID condition. When the SID condition is satisfied, the lower bound may be better.}. 
 Since the exponent is dependent on $p$, the curse of dimensionality becomes evident.
 This seems to indicate that the analysis of Proposition~\ref{prop:SID-sufficient1} could potentially be refined. 
 However, given the negative results demonstrated by~\cite{tan2022cautionary,cattaneo2022pointwise}, it is likely that for tree-based estimators, even with the additive model, the curse of dimensionality cannot be circumvented.

	Proposition~\ref{prop:SID-sufficient1} requires each component of the additive function to lie in an LRP class. Actually, this condition can be further relaxed as in the following proposition. 

		\begin{proposition}\label{prop:SID-sufficient2}
		Suppose Assumption~\ref{ass: basics} holds true, and $f^*$ has an additive structure as in \eqref{additive-model}. Given integer $r\ge 1$ and constants $\al>0,\beta \ge 1$. 
		Suppose for each $k\in [p]$ there exist $0 = t_0^{(k)} < t_1^{(k)} < \cdots < t_{r-1}^{(k)} < t_r^{(k)} = 1$ with $t_{j }^{(k)} - t_{j-1}^{(k)} \ge  \alpha /r$ and $f_k^*\in LRP([t_{j-1},t_j], \beta)$ for all $j\in [r]$. Then Assumption~\ref{ass: SID} is satisfied with parameter $\lam = \ubar\theta/(p\bar\theta \tau^2)$ where $\tau^2 =  \max\big\{ \frac{2r\bar\theta}{\ubar\theta} , \frac{r^2}{2\al} \big\} {\max\{9\be^2,32+\be^2\}}$.
	\end{proposition}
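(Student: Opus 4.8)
The plan is to follow the proof of Proposition~\ref{prop:SID-sufficient1} essentially verbatim and to replace only its one-dimensional core by a variant that survives under the weaker, piecewise hypothesis. The workhorse is the ANOVA identity
\[
\Dt(A,j,b)=\frac{\Pb(X\in A_L)\,\Pb(X\in A_R)}{\Pb(X\in A)}\Big(\Ex[f^*(X)\mid X\in A_L]-\Ex[f^*(X)\mid X\in A_R]\Big)^2 ,
\]
which exhibits $\Dt(A,j,b)$ as a between-group variance. For a fixed cell $A=\prod_k E_k$, the reduction in the proof of Proposition~\ref{prop:SID-sufficient1} (which I would reuse unchanged) rests on three facts: (a) $\sup_{j,b}\Dt(A,j,b)\ge\frac1p\sum_{k=1}^p\sup_b\Dt(A,k,b)$, so it suffices to work one coordinate at a time, in particular with the coordinate $k^\star$ maximizing $\Var(f^*_k(X_k)\mid X\in A)$; (b) additivity of $f^*$ together with $\ubar\theta\le p_X\le\bar\theta$ — via comparison of $\mu$ restricted to $A$ with the uniform product measure on $A$ — both bounds $\sum_k\Var(f^*_k(X_k)\mid X\in A)$ below by a constant of order $\ubar\theta/\bar\theta$ times $\Var(f^*(X)\mid X\in A)$ and shows that, when one splits along the single coordinate $k^\star$, the cross-coordinate contributions of $f^*_l$, $l\ne k^\star$, to the mean gap in the identity above are controlled relative to the $f^*_{k^\star}$ contribution; (c) these reduce Assumption~\ref{ass: SID} for $A$ to the one-dimensional lemma: writing $g:=f^*_{k^\star}$, $E:=E_{k^\star}$ and abbreviating $\Var(g\mid E):=\Var(g(X)\mid X\in E)$, there is a threshold $b$ in the interior of $E$ whose split produces between-group variance of $g$ on $E$ at least $c_1\,\Pb(X\in E)\Var(g\mid E)$, and then Assumption~\ref{ass: SID} holds with $\lam$ a universal constant times $c_1\ubar\theta/(p\bar\theta)$. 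Under the hypotheses of Proposition~\ref{prop:SID-sufficient1} this lemma is exactly where $g\in LRP([0,1],\tau)$ is used; the task is to re-prove it, with $c_1$ degraded to a universal constant times $1/\tau^2$ for the $\tau^2$ in the statement, from the weaker hypothesis that $g\in LRP([t^{(k^\star)}_{j-1},t^{(k^\star)}_j],\be)$ on each of the $r$ pieces $[t^{(k^\star)}_{j-1},t^{(k^\star)}_j]$, whose lengths are $\ge\al/r$.

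To prove the one-dimensional lemma, fix $E\subseteq[0,1]$ and let $E_1,\dots,E_m$ ($m\le r$) be the nonempty sets $E\cap[t^{(k^\star)}_{j-1},t^{(k^\star)}_j]$, with conditional means $\bar g_i=\Ex[g(X)\mid X\in E_i]$ and $\bar g_E=\Ex[g(X)\mid X\in E]$. The variance decomposition over this partition reads
\[
\Pb(X\in E)\Var(g\mid E)=\sum_{i=1}^m\Pb(X\in E_i)\Var(g\mid E_i)+\sum_{i=1}^m\Pb(X\in E_i)\big(\bar g_i-\bar g_E\big)^2 ,
\]
and I would split into two cases according to which sum carries at least half of the total. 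If the first (within-cell) sum does, then since $m\le r$ some sub-cell $E_{i^\star}$ satisfies $\Pb(X\in E_{i^\star})\Var(g\mid E_{i^\star})\ge\tfrac1{2r}\Pb(X\in E)\Var(g\mid E)$; since $g$ is LRP with parameter $\be$ on the single $t$-piece containing $E_{i^\star}$, the one-dimensional estimate underlying the proof of Proposition~\ref{prop:SID-sufficient1}, applied to $E_{i^\star}$ alone, produces an interior threshold $b$ of $E_{i^\star}$ whose split has between-group variance at least a universal multiple of $\be^{-2}\Pb(X\in E_{i^\star})\Var(g\mid E_{i^\star})$, and a short computation — bounding the masses of the neighbouring sub-cells above and below by $\bar\theta$ and $\ubar\theta$ times their lengths — shows that the \emph{same} $b$, viewed as a split of $E$, retains this quantity up to a factor depending only on $\bar\theta/\ubar\theta$; this yields $c_1$ equal to a universal multiple of $\ubar\theta/(r\bar\theta\be^2)$. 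If instead the second (between-cell) sum carries half of the total, I would use the elementary fact that for a weighted sequence of at most $m$ values there is a prefix/suffix cut whose two-way between-group sum of squares is at least $\tfrac1{4m}$ times the smallest sub-cell mass divided by the total mass, times $\sum_i\Pb(X\in E_i)(\bar g_i-\bar g_E)^2$; since every full (non-boundary) sub-cell has mass $\ge\ubar\theta\al/r$, the cut sits at a partition point $t^{(k^\star)}_j\in E$ and gives a split of $E$ with between-group variance $\ge c_1\Pb(X\in E)\Var(g\mid E)$, now with $c_1$ a universal multiple of $\ubar\theta\al/(r^2\bar\theta)$, after the at most two short boundary sub-cells have been attributed either to a neighbouring full sub-cell or absorbed into the between-cell term. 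No step uses LRP on an interval larger than one of the $r$ given pieces — which is necessary, since a uniform conclusion ``$g\in LRP([0,1],\cdot)$'' is false: take $g$ nearly constant on one piece and with a steep bump just inside the next piece, and let $E$ straddle the partition boundary with a long flat part and a short bump part.

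Putting the two cases together, the one-dimensional lemma holds with $c_1$ a universal constant times $1/\tau^2$ for $\tau^2=\max\{2r\bar\theta/\ubar\theta,\ r^2/(2\al)\}\cdot\max\{9\be^2,\ 32+\be^2\}$, whence Assumption~\ref{ass: SID} holds with $\lam=\ubar\theta/(p\bar\theta\tau^2)$; the two inner maxima record which branch binds — the within-cell branch pays $r$ (from $m\le r$) times the density ratio $\bar\theta/\ubar\theta$ and is quadratic in $\be$, while the between-cell branch pays $r^2$ (the combinatorial loss in the prefix/suffix cut) times $1/\al$ (the minimum spacing, entering the cut masses) and depends on $\be$ only additively through the boundary-sub-cell estimates. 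The step I expect to be the main obstacle is precisely the treatment of the short boundary sub-cells: such an $E_i$ at an end of $E$ may be arbitrarily short while $g$ oscillates substantially on it, so its total variation is comparable to $\Var(g\mid E)$ only through the $m$-way decomposition above and cannot be controlled piece by piece; making the attribution of those pieces work uniformly over all positions of $E$ relative to the $r$ partition points — including the degenerate cases where $E$ sits inside one piece, straddles exactly one partition point, or has full pieces on both sides — and verifying that lifting a within-sub-cell split to a split of all of $E$ costs only a bounded factor, is where the somewhat ungainly form of $\tau^2$ comes from.
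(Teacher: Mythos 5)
Your route is genuinely different from the paper's --- the paper reduces the whole proposition to the purely analytic inequality \eqref{Super-LIB} via the averaging bound of Lemma~\ref{lemma:population-gap-ineq}, and then verifies that inequality for a piecewise-LRP component by splitting the weighted total-variation integral at the partition points (Claims~\ref{claim1}--\ref{claim4}) --- but as written your proposal has two gaps that are structural rather than cosmetic. The first is the multivariate reduction. You propose to split along the single coordinate $k^\star$ maximizing $\Var(f^*_k(X^{(k)})\mid X\in A)$ and assert in step (b) that the cross-coordinate contributions of $f^*_l$, $l\neq k^\star$, to $\Ex(f^*(X)\mid X\in A_L)-\Ex(f^*(X)\mid X\in A_R)$ are controlled relative to the $f^*_{k^\star}$ contribution. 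Assumption~\ref{ass: basics}(i) only bounds the joint density between $\ubar\theta$ and $\bar\theta$; it does not make $\mu$ a product measure. A split in coordinate $k^\star$ therefore shifts the conditional law of every other coordinate, so every $f^*_l$ contributes to the mean gap with arbitrary sign, and when $p$ or $\bar\theta/\ubar\theta$ is large these contributions can cancel the $f^*_{k^\star}$ term; nothing ties the coordinate with the largest marginal variance to a coordinate where no cancellation occurs. This is exactly the difficulty Lemma~\ref{lemma:population-gap-ineq} is built to sidestep: it lower-bounds a weighted average of $\sqrt{\Dt(A,j,b)}$ over both $b$ and $j$ simultaneously, so that the cross terms $\Ex(f^*_lf^*_j(X^{(j)})1_{\{X\in A\}})$ recombine into $\Ex((f^*(X))^21_{\{X\in A\}})$ without any absolute values being lost. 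A single-coordinate version would at best pick the coordinate maximizing $\Ex(f^*(X)f^*_j(X^{(j)})1_{\{X\in A\}})$ by pigeonhole, and relating that quantity to an actual split is again the paper's averaging argument, which lands you on the weighted total variation of $f^*_j$ --- not on $\Var(f^*_j\mid E)$, which is what your one-dimensional lemma takes as input.

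The second gap is the lifting step in Case~A of your one-dimensional lemma. You claim that the threshold $b$ that is good for the sub-cell $E_{i^\star}$ remains good for all of $E$ after ``bounding the masses of the neighbouring sub-cells''. The obstruction is not the masses but the means: the impurity decrease of the split of $E$ at $b$ is proportional to $\big(\Ex(g\mid E\cap\{x\le b\})-\Ex(g\mid E\cap\{x> b\})\big)^2$, and these two conditional means involve the values of $g$ on all the other sub-cells; they can coincide exactly even when the corresponding means restricted to $E_{i^\star}$ are far apart (take $g$ to be a localized bump inside $E_{i^\star}$ whose two halves are balanced by long flat stretches on either side --- the split at the bump's centre then yields zero impurity decrease for $E$). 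A correct argument must relocate the threshold, which your proposal does not do. Secondary to these two points: the Case~B prefix-cut inequality is plausible but unproved, and your accounting of the constant does not match the target, since in $\tau^2=\max\{2r\bar\theta/\ubar\theta,\,r^2/(2\al)\}\cdot\max\{9\be^2,32+\be^2\}$ the two maxima multiply rather than attaching one each to your two branches; but even granting generous constants, the two gaps above mean the proof is not yet there.
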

By Proposition~\ref{prop:SID-sufficient2}, an additive function $f^*$ satisfies the SID condition if each component function is a piecewise function, with each piece belonging to an LRP class. 
Notably, continuity at the joining points of consecutive pieces is not necessary.
However, for technical reasons, it is required that each piece has a length that is not excessively small, as determined by the parameter $\alpha$. 
See Appendix~\ref{appsec:function-classes-with-SID} for the proofs of Proposition~\ref{prop:SID-sufficient1} and \ref{prop:SID-sufficient2}.

\section{Proof sketch of Theorem~\ref{theorem: eb-under-sid}}
We provide a sketch of the proof of Theorem~\ref{theorem: eb-under-sid}. 
For $p\ge 1$ and $d\ge 1$, 
	define
	\begin{equation}
		\cA_{p,d} := \Big\{   \prod_{j=1}^p [\ell_j, u_j] \in \cA ~\Big|~ \# \{ j \in [p] \ | \ [\ell_j, u_j] \neq [0,1] \} \le d \Big\}
	\end{equation}
	That is, each rectangle in $\cA_{p,d}$ has at most $d$ dimensions that are not the full interval $[0,1]$. When $p\ge d$, $\cA_{p,d}$ contains all the rectangles in $[0,1]^p$, i.e. $\cA_{p,d} = \cA$. However, when $d $ is much smaller than $p$ (in a high-dimensional setting), $\cA_{p,d}$ contains much fewer rectangles than $\cA$. 
 Note that for a decision tree with depth $d$, each leaf node represents a rectangle in $\cA_{p,d}$. 

 The proof of Theorem~\ref{theorem: eb-under-sid} builds upon a few technical lemmas which provide uniform bounds between the empirical and populational quantities. These results may hold their own significance in the study of other partition-based algorithms. For $\dt \in (0,1)$, define values
	\begin{equation}\label{def: ttt}
		\begin{aligned}
			\bar t_1 (\dt)=&~\bar t_1 (\dt, n, d) : = ~ \frac{4}{n} \log( 2p^d (n+1)^{2d} / \dt ) \\
			\bar t_2 (\dt)=&~\bar t_2 (\dt, n, d) := ~ 	\frac{ 2\bar\theta e^2 d }{n} \vee \frac{\log(p^d(n+1)^{2d}/\dt)}{n} \\
			\bar t(\dt) =&~\bar t (\dt, n, d) :=~ 	\bar t_1 (\dt, n, d) \vee	\bar t_2 (\dt, n, d) \\
		\end{aligned}
	\end{equation}
 where $\bar\theta$ is the constant in Assumption~\ref{ass: basics} $(i)$. Note that we have $\bar t(\dt) \le O(\frac{d\log(np) + \log(1/\dt)}{n}) $. 
 We have the following uniform bounds on empirical and populational mean on every rectangle in $\cA_{p,d}$. 
 \begin{lemma}\label{lemma: key1-text}
		Suppose Assumption~\ref{ass: basics}
		holds true. Suppose $\bar t_2 (\dt/12) < 3/4$. 
		Then with probability at least $1-\dt$, it holds
		\begin{equation}\label{ineq-key1-text}
			\sup_{A\in \cA_{p,d} } \sqrt{\Pb(X\in A)} \Big|  \Ex(f^*(X)| X\in A) - \bar y_{\cI_A}  \Big| \ \le \ 
			20U\sqrt{\bar t(\dt/12)} 
		\end{equation}
	\end{lemma}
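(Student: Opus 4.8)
The plan is to bound, uniformly over $A\in\cA_{p,d}$, the quantity $\sqrt{\Pb(X\in A)}\,|\Ex(f^*(X)\mid X\in A)-\bar y_{\cI_A}|$ by splitting it into a noise part and a signal part. Write $N_A:=|\cI_A|$, so $\bar y_{\cI_A}=\frac1{N_A}\sum_{i\in\cI_A}y_i = \frac1{N_A}\sum_{i\in\cI_A} f^*(x_i) + \frac1{N_A}\sum_{i\in\cI_A}\ep_i$. Then
\begin{equation}
\Ex(f^*(X)\mid X\in A)-\bar y_{\cI_A}
= \Big(\Ex(f^*(X)\mid X\in A)-\tfrac1{N_A}\textstyle\sum_{i\in\cI_A}f^*(x_i)\Big)
- \tfrac1{N_A}\textstyle\sum_{i\in\cI_A}\ep_i,
\end{equation}
so it suffices to control each of the two terms, multiplied by $\sqrt{\Pb(X\in A)}$, uniformly over $A$. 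Throughout I would use that $|f^*|\le M$, $|\ep_i|\le m$, hence $|y_i|\le U=M+m$.

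The main tool is a VC/covering bound for the class of rectangles. The key combinatorial fact is that $\cA_{p,d}$, viewed through the indicator functions $u\mapsto \bsone_{\{u\in A\}}$, has the property that the induced set system on any $n$ points has at most $p^d(n+1)^{2d}$ distinct traces: a rectangle in $\cA_{p,d}$ is determined by choosing $\le d$ active coordinates (at most $p^d$ ways) and, in each, a left and right endpoint, which on $n$ fixed points gives at most $(n+1)^2$ distinct intervals. This finite-shattering count is exactly what produces the $\log(p^d(n+1)^{2d}/\dt)$ terms in $\bar t_1,\bar t_2$. For the signal term I would apply a Bernstein-type uniform deviation inequality (conditionally on nothing, since both $\Pb(X\in A)$ and $f^*$ are deterministic) to the class $\{u\mapsto f^*(u)\bsone_{\{u\in A\}} : A\in\cA_{p,d}\}$ and also to $\{\bsone_{\{u\in A\}}\}$, to show that simultaneously $\big|\tfrac1n\sum_i f^*(x_i)\bsone_{x_i\in A} - \Pb(X\in A)\Ex(f^*(X)\mid X\in A)\big|$ and $\big|\tfrac{N_A}{n}-\Pb(X\in A)\big|$ are $O(U(\sqrt{\bar t\,\Pb(X\in A)}+\bar t))$; combining these and dividing by $N_A/n$ (which is $\ge \tfrac12\Pb(X\in A)$ once $N_A$ is not too small, controlled by the $\bar t_2$ hypothesis via a relative Chernoff bound, and the case of very small $N_A$ handled separately since then $\sqrt{\Pb(X\in A)}\cdot 2U$ is already $\le O(U\sqrt{\bar t})$) yields the signal contribution $O(U\sqrt{\bar t(\dt/12)})$. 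For the noise term, conditionally on $\{x_i\}$ the set $\cI_A$ is fixed, so $\tfrac1{N_A}\sum_{i\in\cI_A}\ep_i$ is an average of $N_A$ bounded independent zero-mean variables; a union bound over the $\le p^d(n+1)^{2d}$ possible index sets $\cI_A$ together with Hoeffding gives $|\tfrac1{N_A}\sum_{i\in\cI_A}\ep_i|\le m\sqrt{2\log(p^d(n+1)^{2d}/\dt)/N_A}$ simultaneously, and multiplying by $\sqrt{\Pb(X\in A)}\le \sqrt{2N_A/n}$ on the good event again gives $O(m\sqrt{\bar t_1(\dt)})$. Splitting the failure probability as $\dt/12$ across the (few) events invoked accounts for the $\dt/12$ in the statement, and the constant $20$ absorbs all the numerical factors.

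The step I expect to be the main obstacle is the uniform control of the signal term, i.e. the transition from an additive deviation bound on $\tfrac1n\sum_i f^*(x_i)\bsone_{x_i\in A}$ to a multiplicative-in-$\sqrt{\Pb(X\in A)}$ bound on the conditional mean, uniformly including the regime where $\Pb(X\in A)$ is small. Here one must be careful that dividing by the empirical count $N_A$ does not blow up the bound: the resolution is to separate rectangles by the size of $N_A$ (or of $\Pb(X\in A)$) — when $N_A\gtrsim \log(p^d(n+1)^{2d}/\dt)$ a relative Chernoff bound gives $N_A\asymp n\Pb(X\in A)$ so the division is harmless, and when $N_A$ is below that threshold the left-hand side of \eqref{ineq-key1-text} is trivially $\le \sqrt{\Pb(X\in A)}\cdot 2U \lesssim U\sqrt{\bar t_2(\dt)}$ on the same high-probability event that forces $\Pb(X\in A)\lesssim \bar t_2(\dt)$ for such $A$. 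Assembling these cases, together with the Bernstein inequality for the $\sqrt{\Pb(X\in A)}$-weighted empirical-process term, is where all the care (and the choice of the precise form of $\bar t_1,\bar t_2$) goes; the noise term and the combinatorial cardinality bound are comparatively routine.
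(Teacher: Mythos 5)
Your plan is sound and its skeleton matches the paper's proof: the same decomposition of $\Ex(f^*(X)\mid X\in A)-\bar y_{\cI_A}$ into a numerator deviation and a frequency deviation (the paper's terms $H_1(A)$ and $H_2(A)$ in \eqref{com0}), the same trivial handling of cells with $\Pb(X\in A)\lesssim \bar t$ via $\sqrt{\Pb(X\in A)}\cdot 2U\le 2U\sqrt{a}$, and the same $\sqrt{\Pb(X\in A)}$-weighted Bernstein bound with failure probability split across a constant number of events. Two differences are worth noting. First, you separate $y_i$ into signal and noise and treat the noise by conditioning on $\{x_i\}$ and union-bounding over index sets; the paper avoids this by proving its concentration lemma directly for bounded $z_i$ that may depend on $x_i$ (Lemma~\ref{lemma: discrete large set concentration}), so $z_i=y_i$ is handled in one pass --- slightly cleaner, same effect. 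Second, and this is the one place where your plan papers over real work: for the signal and frequency terms the randomness in $\{x_i\}$ is the same randomness that determines the traces, so a union bound over the $\le p^d(n+1)^{2d}$ data-dependent traces does \emph{not} by itself control deviations from the population mean; you would need either symmetrization (a ghost sample, turning the growth function into a legitimate union bound) or, as the paper does, a \emph{deterministic} net: the grid-aligned family $\wtd\cA_{p,d}$ on $\{0,1/n,\dots,1\}$, the enlargement $A\mapsto A'$, and the boundary strips $\cB_{p,d}$ with $\Pb(X\in B)\le 2\bar\theta d/n$ (Lemma~\ref{lemma: AB} and Corollary~\ref{cor1: small set}), which is precisely what the $\bar t_2$ term and the hypothesis $\bar t_2(\dt/12)<3/4$ are for. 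Either fix is routine, and you correctly identified the small-cell/large-cell split as the crux, so I would call this a different but viable route rather than a gap --- just be aware that the ``Bernstein-type uniform deviation inequality over the class'' is the step that actually requires the net or symmetrization machinery, and that the resulting constants (hence the precise $20$ and the arguments of $\bar t$) depend on which device you choose.
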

For a tree-based estimator in practice, the prediction at each leaf node is usually given by the sample average of $y_i$ for all the samples $i$ routed to that leaf, i.e., the value $\har \bar y_A$ for a leaf with rectangle $A$. Therefore, Lemma~\ref{lemma: key1-text} provides a uniform bound on the error if we replace the prediction at each leaf by the populational conditional mean $\Ex(f^*(X)| X\in A)$. 
It is worth noting that the gap between $\har \bar y_A$ and $\Ex(f^*(X)| X\in A)$ is rescaled by the quantity $\sqrt{\Pb(X\in A)}$. Intuitively, when the rectangle $A$ is small and hence $\Pb(X\in A)$ is small, few points are routed to the rectangle $A$. As a result, the corresponding gap between $\har \bar y_A$ and $\Ex(f^*(X)| X\in A)$ can be large -- which leads to a large discrepancy of the estimate and the signal locally in this cell $A$. However, in another perspective, since the cell $A$ is small, 
a large discrepancy in $A$ only makes a small contribution to the overall error $ \| \widehat f^{(d)} - f^*\|_{L^2(X)}^2 $. Therefore, the factor $\sqrt{\Pb(X\in A)}$ in the LHS of \eqref{ineq-key1-text} is a proper balance of these two aspects. This is the key technical argument that differs from the analysis in \cite{chi2022asymptotic}. It helps establish 
the tighter bounds in Theorem~\ref{theorem: eb-under-sid}.



\begin{lemma}\label{lemma: key3-text}
		Suppose Assumption~\ref{ass: basics} 
		holds true. 
		Given a constant $\al>0$. 
		Given any $\dt \in (0,1)$, suppose $\bar t_2 (\dt/36) < 3/4$. 
		Then
		with probability at least $1-\dt$, it holds
		\begin{equation}\label{ineq-key3-1-text}
			\Dt (A, j, b) \le (1+\al) \widehat\Dt (A, j, b) + (1+1/\al)\cdot C'U^2  \bar t(\dt/36)   \quad \forall \ A\in \cA_{p,d-1}, ~ j \in [p] , ~ b\in \R
   \nonumber
		\end{equation}
		\begin{equation}\label{ineq-key3-2-text}
			\widehat \Dt (A, j, b) \le (1+\al) \Dt (A, j, b) + (1+1/\al)\cdot C'U^2  \bar t(\dt/36)  \quad \forall \ A\in \cA_{p,d-1}, ~ j \in [p] , ~ b\in \R
   \nonumber
		\end{equation}
  for some universal constant $C'>0$.
	\end{lemma}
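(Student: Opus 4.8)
The plan is to rewrite both impurity decreases as squared Euclidean norms of two-dimensional vectors of conditional means, reduce the two claimed inequalities to a single uniform bound on the distance between the population and empirical such vectors, and then control that distance using Lemma~\ref{lemma: key1-text} together with a companion uniform concentration bound for the rectangle masses $\Pb(X\in B)$.

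First I would set up the algebraic reduction. Fix $A\in\cA_{p,d-1}$, $j\in[p]$, $b\in\R$, let $A_L,A_R$ be the two cells of the split, and write $q_B:=\Pb(X\in B)$, $\hat q_B:=|\cI_B|/n$, $\bar f_B:=\Ex(f^*(X)\mid X\in B)$. Applying the law of total variance to the partition $\{A_L,A_R\}$ of $A$ — once under $\mu$ and once under the empirical measure, the latter using the identity $\sum_{i\in\cI}(y_i-\bar y_{\cI})^2=\sum_{i\in\cI}y_i^2-|\cI|\,\bar y_{\cI}^2$ and $\cI_A=\cI_{A_L}\sqcup\cI_{A_R}$ — yields the ``between-group'' forms
\begin{align*}
\Dt(A,j,b)&=q_{A_L}(\bar f_{A_L}-\bar f_A)^2+q_{A_R}(\bar f_{A_R}-\bar f_A)^2,\\
\widehat\Dt(A,j,b)&=\hat q_{A_L}(\bar y_{\cI_{A_L}}-\bar y_{\cI_A})^2+\hat q_{A_R}(\bar y_{\cI_{A_R}}-\bar y_{\cI_A})^2 .
\end{align*}
Consequently $\sqrt{\Dt(A,j,b)}=\|v\|$ and $\sqrt{\widehat\Dt(A,j,b)}=\|\hat v\|$, where $v:=\big(\sqrt{q_{A_L}}(\bar f_{A_L}-\bar f_A),\,\sqrt{q_{A_R}}(\bar f_{A_R}-\bar f_A)\big)$ and $\hat v:=\big(\sqrt{\hat q_{A_L}}(\bar y_{\cI_{A_L}}-\bar y_{\cI_A}),\,\sqrt{\hat q_{A_R}}(\bar y_{\cI_{A_R}}-\bar y_{\cI_A})\big)$, so the triangle inequality gives $\big|\sqrt{\Dt(A,j,b)}-\sqrt{\widehat\Dt(A,j,b)}\big|\le\|v-\hat v\|$.

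Next I would bound $\|v-\hat v\|$ uniformly over $A\in\cA_{p,d-1}$, $j$, $b$. Writing the $A_L$-coordinate of $v-\hat v$ as $\sqrt{q_{A_L}}\big[(\bar f_{A_L}-\bar y_{\cI_{A_L}})-(\bar f_A-\bar y_{\cI_A})\big]+(\sqrt{q_{A_L}}-\sqrt{\hat q_{A_L}})(\bar y_{\cI_{A_L}}-\bar y_{\cI_A})$ and using $q_{A_L}\le q_A$ together with Lemma~\ref{lemma: key1-text} applied to $A$ and to $A_L$ (both belong to $\cA_{p,d}$ since $A\in\cA_{p,d-1}$), the first bracket contributes at most $\sqrt{q_{A_L}}|\bar f_{A_L}-\bar y_{\cI_{A_L}}|+\sqrt{q_A}|\bar f_A-\bar y_{\cI_A}|\le 40U\sqrt{\bar t(\dt/36)}$. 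Since $|\bar y_{\cI_{A_L}}-\bar y_{\cI_A}|\le 2U$ (as $|y_i|\le M+m=U$), the remaining term is at most $2U\,|\sqrt{q_{A_L}}-\sqrt{\hat q_{A_L}}|$, so it suffices to establish the companion bound $\sup_{B\in\cA_{p,d}}|\sqrt{q_B}-\sqrt{\hat q_B}|=O\big(\sqrt{\bar t(\dt/36)}\big)$. This follows from a relative (Bernstein-type) deviation inequality applied uniformly over the at most $p^d(n+1)^{2d}$ rectangles that the sample can distinguish — the same device behind Lemma~\ref{lemma: key1-text}, and the reason the hypothesis requires $\bar t_2(\dt/36)<3/4$: if $q_B$ is at least a fixed multiple of $\bar t(\dt/36)$ then $|q_B-\hat q_B|=O\big(\sqrt{q_B\,\bar t(\dt/36)}\big)$, so $|\sqrt{q_B}-\sqrt{\hat q_B}|=|q_B-\hat q_B|/(\sqrt{q_B}+\sqrt{\hat q_B})=O\big(\sqrt{\bar t(\dt/36)}\big)$, while if $q_B$ is smaller a multiplicative Chernoff bound forces $\hat q_B=O\big(\bar t(\dt/36)\big)$, so $\sqrt{q_B}+\sqrt{\hat q_B}=O\big(\sqrt{\bar t(\dt/36)}\big)$ directly. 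Combining, each coordinate of $v-\hat v$ is $O\big(U\sqrt{\bar t(\dt/36)}\big)$, whence $\|v-\hat v\|\le c'U\sqrt{\bar t(\dt/36)}$ for a universal constant $c'$.

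Then I would conclude. On the intersection of the event of Lemma~\ref{lemma: key1-text}, used with confidence level $\dt/3$ (so that the scale $\bar t(\dt/36)$ appears), and the event of the companion bound, used with the remaining probability $2\dt/3$ — an event of probability at least $1-\dt$ — the above shows $\big|\sqrt{\Dt(A,j,b)}-\sqrt{\widehat\Dt(A,j,b)}\big|\le c'U\sqrt{\bar t(\dt/36)}$ simultaneously for all $A\in\cA_{p,d-1}$, $j\in[p]$, $b\in\R$. For any $\al>0$, the elementary inequality $(s+t)^2\le(1+\al)s^2+(1+1/\al)t^2$ (valid for all real $s,t$) with $s=\sqrt{\widehat\Dt(A,j,b)}$ and $t=\sqrt{\Dt(A,j,b)}-\sqrt{\widehat\Dt(A,j,b)}$ yields $\Dt(A,j,b)\le(1+\al)\widehat\Dt(A,j,b)+(1+1/\al)(c')^2U^2\bar t(\dt/36)$, and the same inequality with the roles of $\Dt$ and $\widehat\Dt$ interchanged yields the second claim; take $C':=(c')^2$.

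The hard part will be the companion bound in the second step. A plain Hoeffding estimate plus a union bound only delivers $|q_B-\hat q_B|=O\big(\sqrt{\bar t(\dt/36)}\big)$ and hence the much weaker $|\sqrt{q_B}-\sqrt{\hat q_B}|=O\big(\big(\bar t(\dt/36)\big)^{1/4}\big)$, which would leave a remainder of order $U^2\sqrt{\bar t}$ rather than $U^2\bar t$ and, worse, could not be absorbed into the $(1+\al)\widehat\Dt$ term for arbitrarily small $\al$ — precisely the regime $\al=1/d$ used in the proof of Theorem~\ref{theorem: eb-under-sid}. One therefore needs the two-regime relative-deviation argument above, carried out uniformly over the data-dependent but combinatorially bounded family of axis-aligned rectangles; this is the technical core shared with Lemma~\ref{lemma: key1-text}, and the reason the paper introduces the two scales $\bar t_1$ and $\bar t_2$.
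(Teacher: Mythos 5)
Your proposal is correct and follows essentially the same route as the paper: the ``between-group'' identity for $\Dt$ and $\widehat\Dt$ is the paper's Lemma~\ref{lemma: key-impurity-decrease-identity}, the triangle-inequality decomposition of each coordinate combined with Lemma~\ref{lemma: key1-text} and the final step via $(s+t)^2\le(1+\al)s^2+(1+1/\al)t^2$ mirror the paper's argument term by term, and the ``companion bound'' $\sup_{B}|\sqrt{\Pb(X\in B)}-\sqrt{|\cI_B|/n}|\lesssim\sqrt{\bar t}$ that you correctly identify as the technical core is exactly the paper's Lemma~\ref{lemma: key2}, proved there by the same two-regime (large-mass Bernstein / small-mass Chernoff) argument you sketch. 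Your packaging of the two sides into a single Euclidean norm of a two-dimensional vector, rather than applying Young's inequality to $\Dt_L$ and $\Dt_R$ separately and summing, is only a cosmetic difference.
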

Lemma~\ref{lemma: key3-text} establishes upper bounds on the discrepancy between empirical and populational impurity decrease. 
It builds an avenue to translate the SID condition into an empirical counterpart, which is further used to derive the decrease of objective value. 

With Lemmas~\ref{lemma: key1-text} and \ref{lemma: key3-text} at hand, we are ready to wrap up the proof of Theorem~\ref{theorem: eb-under-sid}. 
First, we have
\begin{equation}\label{decomp-text}
	\| \widehat f^{(k)} - f^* \|_{L^2(X)}^2 \ \le \ 
	2 \|  f^* - \wtd f^{(k)}  \|_{L^2(X)}^2 
	+ 2 \| \wtd f^{(k)} - \widehat f^{(k)} \|_{L^2(X)}^2 
	\ := \ 2J_1 (k) +  2J_2 (k)
\end{equation}
where $\wtd f^{(k)}$ is a tree with the same splitting rule as $\widehat f^{(k)}$ but leaf predictions replaced by the populational means (in that leaf). The term
$J_2(k)$ can be bounded as
\begin{equation}\label{J2(d)-bound-text}
	\begin{aligned}
		J_2(k)  \ = \ & \sum_{t\in \cL^{(k)} } \Pb( X\in A_t )  \big( 
		\Ex( f^*(X) | X\in A_t, \cX_1^n  ) - \bar y_{\cI_{A_t}}
		\big)^2 \le CU^2 2^k \cdot \frac{k\log(np) + \log(1/\dt)}{n}
	\end{aligned}
 \nonumber
\end{equation}
where $\cL^{(k)}$ is the set of leaves of $\widehat f^{(k)}$ at depth $k$; $A_t$ is the corresponding rectangle for a leaf $t$; and $C$ is a universal constant.
The last inequality in \eqref{J2(d)-bound-text} is by Lemma~\ref{lemma: key1-text}. The term
$J_1(k)$ can be written as
\begin{equation}
	\begin{aligned}
		J_1(k)  
		\ = \  \sum_{t\in \cL^{(k)}} \Pb( X\in A_t | \cX_1^n)  \cdot \Var ( f^*(X) | X\in A_t, \cX_1^n  )
	\end{aligned}
 \nonumber
\end{equation}
	Making use of Lemma~\ref{lemma: key3-text} and by some algebra (see Appendix~\ref{appsec: proof of theorem eb-sid} for details), a recursive inequality can be established:
	\begin{equation}
	J_1 (k+1) \le 	\Big( 1 - \frac{\lam}{(1+\al)^2} \Big)  J_1(k) + 2^k \cdot \frac{CU^2(k\log(np)+\log(1/\dt))}{n} 
 \nonumber
\end{equation}
The proof of Theorem~\ref{theorem: eb-under-sid} is complete by telescoping the inequality above, and combining the upper bounds for $J_1(k)$ and $J_2(k)$ with \eqref{decomp-text}.

\section{Conclusion and discussions}

We have proved an error bound on the prediction error of CART for a regression problem when $f^*$ satisfies the SID condition. This error bound cannot be improved by more than a log factor. We have also discussed a few sufficient conditions under which we can show that an additive model satisfies the SID condition. 

One possible limitation of this work is that: it seems that the SID coefficients $\lam$ derived in Section~\ref{sec:function-classes-SID} are not tight. Since $\lam$ appeared in the exponent of the rate in Theorem~\ref{theorem: eb-under-sid}, a tighter estimate of $\lam$ leads to an improved convergence rate in \eqref{ineq: eb2-under-sid}. We leave it as a future work for a more precise analysis of the SID condition.

\nocite{agarwal2022hierarchical}

	\appendix

	\section{Proof of Theorem~\ref{theorem: eb-under-sid}}\label{appsec: proof of theorem eb-sid}
	
	\subsection{Notations}
	We first define some notations in the context of the model \eqref{model1}. For $p\ge 1$ and $d\ge 1$, 
	define
	\begin{equation}
		\cA_{p,d} := \Big\{   \prod_{j=1}^p [\ell_j, u_j] \in \cA ~\Big|~ \# \{ j \in [p] \ | \ [\ell_j, u_j] \neq [0,1] \} \le d \Big\}
	\end{equation}
	That is, each rectangle in $\cA_{p,d}$ has at most $d$ dimensions that are not the full interval $[0,1]$. Note that for a decision tree with depth $d$, each leave node represents a rectangle in $\cA_{p,d}$. Furthermore, for $\dt\in (0,1)$, define values
	\begin{equation}\label{def: ttt}
		\begin{aligned}
			\bar t_1 (\dt)=&~\bar t_1 (\dt, n, d) : = ~ \frac{4}{n} \log( 2p^d (n+1)^{2d} / \dt ) \\
			\bar t_2 (\dt)=&~\bar t_2 (\dt, n, d) := ~ 	\frac{ 2\bar\theta e^2 d }{n} \vee \frac{\log(p^d(n+1)^{2d}/\dt)}{n} \\
			\bar t(\dt) =&~\bar t (\dt, n, d) :=~ 	\bar t_1 (\dt, n, d) \vee	\bar t_2 (\dt, n, d) \\
		\end{aligned}
	\end{equation}
where $\bar\theta$ is the constant in Assumption~\ref{ass: basics} $(i)$. 
	Note that we have $\bar t(\dt) \le O(d\log(np/\dt)/n) $. 
	
	For two values $a,b>0$, we write $a \lesssim b$ if there is a universal constant $C>0$ such that $a\le Cb$. We write $a\lesssim_{r} b$ if there is a constant $C_r$ that only depends on $r$ such that $a\le C_rb$.

	\subsection{Technical lemmas}
	
	Now we can introduce the major technical results to establish the error bound. 
	
	\begin{lemma}\label{lemma: key1}
		Suppose Assumption~\ref{ass: basics}
		holds true. Suppose $\bar t_2 (\dt/12) < 3/4$. 
		Then with probability at least $1-\dt$, it holds
		\begin{equation}\label{ineq-key1}
			\sup_{A\in \cA_{p,d} } \sqrt{\Pb(X\in A)} \Big|  \Ex(f^*(X)| X\in A) - \bar y_{\cI_A}  \Big| \ \le \ 
			20U\sqrt{\bar t(\dt/12)}
		\end{equation}
	\end{lemma}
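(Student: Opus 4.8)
The plan is to fix $A\in\cA_{p,d}$, write $q_A:=\Pb(X\in A)$ and $N_A:=|\cI_A|$, introduce the empirical measure $\mu_n:=\tfrac1n\sum_{i=1}^n\delta_{x_i}$ (so $(\mu_n-\mu)(g)=\tfrac1n\sum_i g(x_i)-\Ex g(X)$, and with $\bsone_A$ the indicator of $A$ we have $\mu(\bsone_A)=q_A$, $\mu_n(\bsone_A)=N_A/n$), and split the error as
\[
\bar y_{\cI_A}-\Ex\!\big(f^*(X)\mid X\in A\big)
=\underbrace{\tfrac1{N_A}\sum_{i\in\cI_A}\ep_i}_{\text{noise}}
+\underbrace{\Big(\widehat\mu_A-\Ex\!\big(f^*(X)\mid X\in A\big)\Big)}_{\text{signal}},
\qquad \widehat\mu_A:=\tfrac1{N_A}\sum_{i\in\cI_A}f^*(x_i),
\]
(with $\bar y_{\cI_A}=\widehat\mu_A:=0$ when $N_A=0$), and then to bound each term uniformly in $A$ after rescaling by $\sqrt{q_A}$. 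The first observation is that it suffices to treat rectangles with $q_A>\bar t(\dt/12)$: when $q_A\le\bar t(\dt/12)$, boundedness ($|f^*|\le M$, $|\ep_i|\le m$, hence $|\bar y_{\cI_A}|\le U$) gives $\sqrt{q_A}\,|\bar y_{\cI_A}-\Ex(f^*(X)\mid X\in A)|\le 2U\sqrt{\bar t(\dt/12)}$ deterministically, which is already within the asserted bound.

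For $q_A>\bar t(\dt/12)$ the argument rests on three uniform (over $\cA_{p,d}$) high-probability facts, each obtained by a union bound over the at most $p^d(n+1)^{2d}$ distinct footprints $\{\cI_A:A\in\cA_{p,d}\}$ that rectangles in $\cA_{p,d}$ can cut out of $x_1,\dots,x_n$ — this count is the growth function of the class, and its logarithm is exactly the factor in $\bar t_1$. First, a multiplicative Chernoff bound, uniformized over this class, shows that on an event $\cF_1$ of probability $\ge 1-\dt/12$ one has $N_A\ge \tfrac12 n q_A$ for every $A$ with $q_A>\bar t_2(\dt/12)$; the hypothesis $\bar t_2(\dt/12)<3/4$ is what makes this threshold meaningful, and the factor $\bar\theta e^2 d$ in $\bar t_2$ is the price of passing from the finitely many footprints to the continuum of rectangles via the density bound. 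Second, a relative-deviation Bernstein inequality for the VC class of rectangles gives, on an event $\cF_2$ of probability $\ge 1-\dt/12$, the simultaneous bounds $|(\mu_n-\mu)(\bsone_A)|\lesssim \sqrt{q_A\,\bar t(\dt/12)}+\bar t(\dt/12)$ and $|(\mu_n-\mu)(f^*\bsone_A)|\lesssim M\big(\sqrt{q_A\,\bar t(\dt/12)}+\bar t(\dt/12)\big)$ for all $A$, using $\Var(\bsone_A)\le q_A$ (range $1$) and $\Var(f^*\bsone_A)\le M^2 q_A$ (range $\le M$). Third, conditionally on $x_1,\dots,x_n$, Hoeffding's inequality applied to $\sum_{i\in\cI_A}\ep_i$ for each footprint, union-bounded at level $\dt/12$, gives on an event $\cF_3$ that $\tfrac1{N_A}\big|\sum_{i\in\cI_A}\ep_i\big|\le m\sqrt{2\log(\#\text{footprints}\cdot 24/\dt)/N_A}$ simultaneously for all $A$.

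On $\cF_1\cap\cF_2\cap\cF_3$ the two pieces are now controlled. For the signal piece the identity
\[
\widehat\mu_A-\Ex\!\big(f^*(X)\mid X\in A\big)
=\frac{q_A\,(\mu_n-\mu)(f^*\bsone_A)\;-\;\mu(f^*\bsone_A)\,(\mu_n-\mu)(\bsone_A)}{q_A\,(N_A/n)},
\]
together with $|\mu(f^*\bsone_A)|\le M q_A$, $N_A\ge\tfrac12 nq_A$, and the bounds on $\cF_2$, yields $\sqrt{q_A}\,|\widehat\mu_A-\Ex(f^*(X)\mid X\in A)|\lesssim M\sqrt{\bar t(\dt/12)}$ — the stray term $\bar t/\sqrt{q_A}$ is $\le\sqrt{\bar t}$ precisely because $q_A>\bar t$. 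For the noise piece, combining $\cF_3$ with $N_A\ge\tfrac12 nq_A$ and the definition of $\bar t_1$ gives $\sqrt{q_A}\,\big|\tfrac1{N_A}\sum_{i\in\cI_A}\ep_i\big|\le m\sqrt{\bar t_1(\dt/12)}\le U\sqrt{\bar t(\dt/12)}$. Adding the two pieces, tracking constants, and taking a final union bound over the constantly many exceptional events (of total mass $\le\dt$) gives the claimed estimate, with the constant $20$ absorbing the numerical factors.

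The step I expect to be the main obstacle is the second uniform fact: since $\cA_{p,d}$ is infinite, the $\sqrt{q_A}$-rescaling cannot be handled by union-bounding over the finitely many realized footprints alone — it requires a genuine localized, VC-type empirical-process bound that is simultaneously sharp in the Gaussian regime (deviation of order $\sqrt{q_A\,\bar t}$ when $q_A$ is of constant order) and in the Poisson regime (deviation of order $\bar t$ when $q_A$ is comparable to $\bar t$, where cells hold only $O(\log)$ points). Obtaining exactly this $\sqrt{\Pb(X\in A)}$-weighted control is what produces the improvement over \cite{chi2022asymptotic}; by comparison the algebraic decomposition, the Hoeffding step, and the small-$q_A$ case are routine.
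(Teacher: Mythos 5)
Your overall architecture is the same as the paper's: handle cells with $\Pb(X\in A)\le \bar t$ deterministically by boundedness, and for larger cells combine a uniform relative-deviation bound for the empirical measure of rectangles (your $\cF_1,\cF_2$; the paper's events $\cE_1,\cE_2$ built from Lemma~\ref{lemma: large set concentration}) with the ratio decomposition of $\bar y_{\cI_A}-\Ex(f^*(X)\mid X\in A)$ (the paper's $H_1(A)+H_2(A)$ in \eqref{com0}; your noise/signal split plus the identity for $\widehat\mu_A$ is an equivalent rearrangement, with the noise folded into $z_i=y_i$ rather than treated by a separate conditional Hoeffding step). The constants, the role of $\bar t_1$ versus $\bar t_2$, and the reason the $\sqrt{\Pb(X\in A)}$ weighting is the right normalization are all correctly identified.

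The genuine gap is exactly the step you flag: the two uniform facts $\cF_1$ and $\cF_2$ are asserted, not proved, and they cannot be obtained by union-bounding over realized footprints alone, because the weight $\sqrt{\Pb(X\in A)}$ and the centering $\Ex(z_1 1_{\{x_1\in A\}})$ are population quantities that vary continuously across rectangles sharing a footprint. Invoking a generic ``relative-deviation Bernstein inequality for the VC class of rectangles'' for the weighted class $\{f^*\bsone_A\}$ is precisely the nontrivial content of the lemma, so the proof as written is circular at its core. The paper closes this gap constructively rather than by citation: it discretizes to the finite net $\wtd\cA_{p,d}$ of rectangles with endpoints on the $1/n$-grid (cardinality at most $p^d(n+1)^{2d}$, which is where $\bar t_1$ comes from), applies Bernstein's inequality with variance proxy $\Pb(X\in A)$ on that net (Lemma~\ref{lemma: discrete large set concentration}), and then transfers to arbitrary $A\in\cA_{p,d}$ by enclosing $A$ in its grid hull $A'$ and controlling the sliver $A'\setminus A$ through the auxiliary collection $\cB_{p,d}$, the density upper bound $\Pb(X\in A'\setminus A)\le 2\bar\theta d/n$, and a separate binomial tail bound for small sets (Lemma~\ref{lemma: small set} and Corollary~\ref{cor1: small set}); this is where $\bar t_2$ and the hypothesis $\bar t_2<3/4$ enter. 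If you want to complete your version, you either need to reproduce this net-plus-sliver argument (or an equivalent symmetrization/ghost-sample argument that replaces $\Pb(X\in A)$ by an empirical surrogate), and the same repair is needed for $\cF_1$, which suffers from the identical continuum issue.
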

	The proof of Lemma~\ref{lemma: key1} is presented in Section~\ref{section: proof of lemma key 1}. Note that Lemma~\ref{lemma: key1} provides a uniform bound on the gap between the populational mean $\Ex(f^*(X)| X\in A)$ and the sample mean $\bar y_{\cI_A}$. This is used to derive the geometric decrease of the bias, using the SID assumption. 
	
	\begin{lemma}\label{lemma: key2}
		Suppose Assumption~\ref{ass: basics}
		holds true. 
		Given any $\dt \in (0,1)$, suppose $\bar t_2 (\dt/4) < 3/4$. Then with probability at least $1-\dt$ it holds
		\begin{equation}\label{ineq: square-root-concentration}
			\sup_{A\in \cA_{p,d}} \lt|  \sqrt{\Pb(X\in A)} - \sqrt{ |\cI_A| /n } \ \rt|  \le 5  \sqrt{\bar t(\dt/4)}
		\end{equation}
	\end{lemma}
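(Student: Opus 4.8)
\medskip\noindent\emph{Proof plan.}
The plan is to establish a \emph{multiplicative} (relative) deviation bound for the binomial random variable $|\cI_A| = \sum_{i=1}^n \bsone\{x_i\in A\}$, rather than the additive VC/Hoeffding control one would use for $\sup_A \big| \Pb(X\in A) - |\cI_A|/n \big|$: the latter is only $O(\sqrt{\bar t})$, so after the elementary inequality $|\sqrt a-\sqrt b|\le\sqrt{|a-b|}$ it would give just $O(\bar t^{1/4})$, which is too weak. Instead I would use the identity
\begin{equation}
	\Big| \sqrt{\Pb(X\in A)} - \sqrt{|\cI_A|/n}\, \Big| \ = \ \frac{\big| \Pb(X\in A) - |\cI_A|/n \big|}{\sqrt{\Pb(X\in A)} + \sqrt{|\cI_A|/n}} ,
\end{equation}
so it is enough to show that the numerator is, uniformly over $A$, of order $\sqrt{\Pb(X\in A)}\,\sqrt{\bar t}+\bar t$ with probability at least $1-\dt$; dividing by the denominator (and using that in the regime $\Pb(X\in A)\ge\bar t$ the empirical mass is also $\gtrsim\bar t$) then produces an $O(\sqrt{\bar t})$ bound, with the constant tracked to be at most $5$.

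By a finite reduction (below) it suffices to control, for each rectangle $A$ in a fixed finite family $\cR_0$ with $|\cR_0|\le p^d(n+1)^{2d}=:N$, the binomial $|\cI_A|\sim\mathrm{Bin}(n,q_A)$, $q_A:=\Pb(X\in A)$. I would split into two regimes. If $q_A\ge\bar t$, Bernstein's inequality with variance proxy $q_A(1-q_A)\le q_A$ gives $\big||\cI_A|/n-q_A\big|\lesssim\sqrt{q_A\bar t}+\bar t\lesssim\sqrt{q_A\bar t}$ with probability $1-O(\dt/N)$, and then the identity above yields $\big|\sqrt{q_A}-\sqrt{|\cI_A|/n}\big|\le\big||\cI_A|/n-q_A\big|/\sqrt{q_A}\lesssim\sqrt{\bar t}$. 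If $q_A<\bar t$, then $\sqrt{q_A}<\sqrt{\bar t}$ and it only remains to bound $|\cI_A|/n$ from above; here I would use the upper-tail Chernoff estimate $\Pb\big(\mathrm{Bin}(n,q_A)\ge k\big)\le\binom nk q_A^k\le(enq_A/k)^k$, which for $k=e^2 n\bar t$ and $q_A\le\bar t$ is at most $e^{-e^2 n\bar t}$; since $n\bar t\ge n\bar t_1(\dt/4)\gtrsim\log(N/\dt)$, a union bound over $\cR_0$ makes this exceptional event negligible, giving $|\cI_A|/n\le e^2\bar t$ and hence $\big|\sqrt{q_A}-\sqrt{|\cI_A|/n}\big|\le(1+e)\sqrt{\bar t}$. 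The choice $k=e^2 n\bar t$ (via $k!\ge(k/e)^k$) together with the active-dimension count is the origin of the $2\bar\theta e^2 d/n$ summand in $\bar t_2$; the hypothesis $\bar t_2(\dt/4)<3/4$ keeps these quantities in the valid range.

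For the finite reduction, let $\cR_0$ consist of the rectangles $A=\prod_j I_j\in\cA_{p,d}$ whose non-trivial sides all have endpoints in $\{x_1^{(j)},\dots,x_n^{(j)}\}\cup\{0,1\}$, so $|\cR_0|\le N$ by counting at most $d$ active coordinates and at most $n+1$ admissible positions for each of the two endpoints. Given a general $A\in\cA_{p,d}$ with $\cI_A\ne\emptyset$, let $A^{\downarrow}\in\cR_0$ be the bounding box of $\{x_i:i\in\cI_A\}$ along the active coordinates; then $A^{\downarrow}\subseteq A$ and $\cI_{A^{\downarrow}}=\cI_A$, so $|\cI_A|/n$ is unchanged and $\Pb(X\in A^{\downarrow})\le\Pb(X\in A)$, which transfers the one-sided bound $\sqrt{\Pb(X\in A)}-\sqrt{|\cI_A|/n}\ge -5\sqrt{\bar t}$ from $\cR_0$ to all of $\cA_{p,d}$. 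For the matching inequality $\sqrt{\Pb(X\in A)}-\sqrt{|\cI_A|/n}\le 5\sqrt{\bar t}$ one writes $\Pb(X\in A)=\Pb(X\in A^{\downarrow})+\Pb(X\in A\setminus A^{\downarrow})$ and uses $\sqrt{a+b}\le\sqrt a+\sqrt b$; the ``collar'' $A\setminus A^{\downarrow}$ is a union of at most $2d$ slabs lying in $\cA_{p,d}$, each \emph{sample-free}, and I would bound $\Pb(X\in A\setminus A^{\downarrow})$ by combining the density bound $\Pb(X\in\,\cdot\,)\le\bar\theta\,\mathrm{vol}(\,\cdot\,)$ of Assumption~\ref{ass: basics}(i) with the (Chernoff) fact that a sample-free rectangle in $\cA_{p,d}$ of non-negligible $\mu$-mass is rare, discretising at sample-point resolution so that the volume correction is $O(\bar\theta d/n)$ — again the source of the $\bar\theta$-dependent term in $\bar t_2$. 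Splitting the failure probability $\dt$ into four pieces of size $\dt/4$ (matching $\bar t(\dt/4)$) — upper-tail Chernoff over $\cR_0$, the two Bernstein tails over $\cR_0$, and the sample-free-collar event — and assembling the pieces gives the claim.

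I expect the main obstacle to be precisely this collar/reduction step: controlling $\sup_{A\in\cA_{p,d}}\Pb(X\in A\setminus A^{\downarrow})$ — equivalently, ruling out sample-free rectangles with few active coordinates carrying non-negligible $\mu$-mass — \emph{without} incurring an extra $\sqrt d$ factor (which would destroy the universal constant $5$) and while reproducing the exact $\bar\theta$-dependence of $\bar t_2$. The per-rectangle binomial estimates and the passage from $|\,\Pb-\hat p\,|$ to $|\sqrt\Pb-\sqrt{\hat p}\,|$ are routine once the finite family is fixed.
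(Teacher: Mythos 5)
Your overall strategy coincides with the paper's: the identity $|\sqrt{a}-\sqrt{b}|=|a-b|/(\sqrt a+\sqrt b)$, a case split according to whether $\Pb(X\in A)$ exceeds a threshold of order $\bar t$, a relative (Bernstein-type) deviation bound in the large-mass regime, an upper bound on the empirical mass in the small-mass regime, and a reduction to a finite net of cardinality about $p^d(n+1)^{2d}$ with a collar correction. The paper's proof is exactly this, packaged as Lemma~\ref{lemma: large set concentration} (applied with $z_i\equiv 1$) for the large-mass regime and Lemma~\ref{lemma: small set}/Corollary~\ref{cor1: small set} for the small-mass regime.

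The genuine gap is in your net. You take $\cR_0$ to be the rectangles whose endpoints lie among the \emph{sample coordinates}, and then apply per-rectangle Bernstein and Chernoff bounds for $\mathrm{Bin}(n,q_A)$ followed by a union bound over $\cR_0$. But $\cR_0$ is a random family: conditionally on a rectangle selected from it, $|\cI_A|$ is not distributed as $\mathrm{Bin}(n,q_A)$, and a union bound over a data-dependent collection of events is not legitimate without a symmetrization/ghost-sample argument, which you do not supply. The same problem infects the collar step: $A^{\downarrow}$ is the (random) bounding box of the points falling in $A$, and bounding $\Pb(X\in A\setminus A^{\downarrow})$ amounts to the uniform claim that no sample-free slab carries mass much larger than $\bar t$ --- again a uniform-convergence statement over a random family, which you yourself flag as the unresolved ``main obstacle.'' The paper sidesteps both issues by rounding \emph{outward} to the deterministic grid $\cS=\{0,1/n,\dots,1\}$: the net $\wtd\cA_{p,d}$ and the collar sets $\cB_{p,d}$ are then fixed, non-random families of cardinality at most $p^d(n+1)^{2d}$; each collar set has population mass at most $2\bar\theta d/n$ deterministically by the density upper bound (Lemma~\ref{lemma: AB}); and the fixed-set Bernstein and binomial-tail bounds plus union bounds apply verbatim. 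Replacing your data-dependent $\cR_0$ and $A^{\downarrow}$ by this deterministic discretization is exactly the missing ingredient; with it, the remainder of your outline goes through.
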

	The proof of Lemma~\ref{lemma: key2} is presented in Section~\ref{section: proof of lemma key2}. Lemma~\ref{lemma: key2} provides a uniform deviation gap between the square root of probability and sample frequency over all sets in $\cA_{p,d} $. Note that this uniform bound is stronger than a result without a square root (which can be obtained easily via Hoeffding's inequality and a union bound), and is useful to prove the final error bound in Theorem~\ref{theorem: eb-under-sid}. 
 
For any rectangle $A \in \cA$, $j\in [p]$ and $b\in \R$, define
     \begin{eqnarray}
    \Dt_L ( A, j , b )  &: =& \Pb (X \in A_L) \Big(  \Ex(f^*(X)|X\in A) - \Ex(f^*(X)|X\in A_L)  \Big)^2 \nonumber\\
    \Dt_R ( A, j , b )  &: =& \Pb (X \in A_R) \Big(  \Ex(f^*(X)|X\in A) - \Ex(f^*(X)|X\in A_R)  \Big)^2 \nonumber\\
    \widehat \Dt_L ( A, j , b ) &:=&
    \frac{|\cI_{A_L}|}{n} ( \bar y_{\cI_{A_L} } - \bar y_{\cI_A})^2 \nonumber\\
    \widehat \Dt_R ( A, j , b ) &:=&
    \frac{|\cI_{A_R}|}{n} ( \bar y_{\cI_{A_R} } - \bar y_{\cI_A})^2 \nonumber
\end{eqnarray}

We have the following identity regarding the impurity decrease of each split. 
     
 \begin{lemma}\label{lemma: key-impurity-decrease-identity}
     For any rectangle $A \in \cA$, $j\in [p]$ and $b\in \R$, it holds
     \begin{equation}
    \begin{aligned}
            \Dt ( A, j , b ) &= \ \Dt_L ( A, j , b ) + \Dt_R ( A, j , b ) \\
        \widehat\Dt ( A, j , b ) &= \ \widehat\Dt_L ( A, j , b ) + \widehat\Dt_R ( A, j , b ) 
    \end{aligned}
\end{equation}
 \end{lemma}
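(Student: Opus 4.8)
The plan is to derive both identities from a single classical source: the within/between decomposition of a sum of squares (one-way ANOVA), applied once to the empirical measure and once to $\mu$.

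First I would record the elementary identity underlying everything: for any finite multiset of reals $\{a_i\}_{i\in\cI}$ partitioned as $\cI=\cI_L\sqcup\cI_R$, with $\bar a,\bar a_L,\bar a_R$ the corresponding arithmetic means,
\[
\sum_{i\in\cI}(a_i-\bar a)^2 \;=\; \sum_{i\in\cI_L}(a_i-\bar a_L)^2 + \sum_{i\in\cI_R}(a_i-\bar a_R)^2 + |\cI_L|\,(\bar a_L-\bar a)^2 + |\cI_R|\,(\bar a_R-\bar a)^2 .
\]
This follows by writing $a_i-\bar a=(a_i-\bar a_L)+(\bar a_L-\bar a)$ on $\cI_L$ (and symmetrically on $\cI_R$), squaring, and observing that the cross terms vanish since $\sum_{i\in\cI_L}(a_i-\bar a_L)=0$. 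Taking $a_i=y_i$, $\cI=\cI_A$, $\cI_L=\cI_{A_L}$, $\cI_R=\cI_{A_R}$ and dividing by $n$, the left-hand side becomes the first term in \eqref{def: hat-Dt}, the first two terms on the right become the two subtracted sums there, and the last two become exactly $\widehat\Dt_L(A,j,b)$ and $\widehat\Dt_R(A,j,b)$; rearranging yields the empirical identity.

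For the population identity I would run the same computation against $\mu$. Write $p_A=\Pb(X\in A)$, $p_L=\Pb(X\in A_L)$, $p_R=\Pb(X\in A_R)$, so $p_A=p_L+p_R$ because $A_L,A_R$ partition $A$; and let $\mu_A,\mu_L,\mu_R$ be the conditional means of $f^*(X)$ on $A,A_L,A_R$, so that $p_A\mu_A=p_L\mu_L+p_R\mu_R$. Expanding $f^*(X)-\mu_A=(f^*(X)-\mu_L)+(\mu_L-\mu_A)$ on $A_L$ and symmetrically on $A_R$, integrating against $\mu$, and using that the cross terms vanish, gives
\[
p_A\Var(f^*(X)\mid X\in A)=p_L\Var(f^*(X)\mid X\in A_L)+p_R\Var(f^*(X)\mid X\in A_R)+p_L(\mu_L-\mu_A)^2+p_R(\mu_R-\mu_A)^2 ,
\]
which rearranges to $\Dt(A,j,b)=\Dt_L(A,j,b)+\Dt_R(A,j,b)$. (Equivalently, this is the law of total variance for $f^*(X)$ conditional on $X\in A$, conditioning further on $\bsone_{\{X\in A_L\}}$.)

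There is essentially no obstacle beyond bookkeeping; the only genuine care is with degeneracies. If $p_L=0$ (resp.\ $\cI_{A_L}=\emptyset$) then $\mu_L$ (resp.\ $\bar y_{\cI_{A_L}}$) is undefined, but by the standing convention the products $p_L\Var(f^*(X)\mid X\in A_L)$ and $p_L(\mu_L-\mu_A)^2$ (resp.\ their empirical analogues, carrying the factor $|\cI_{A_L}|/n=0$) are read as $0$, in which case $A_R$ equals $A$ up to a $\mu$-null set and both identities collapse to trivialities; the symmetric remark covers $p_R=0$. With that understood, the two displays above hold verbatim and the lemma follows.
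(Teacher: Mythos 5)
Your proof is correct and follows essentially the same route as the paper's: both identities are the standard within/between (ANOVA) decomposition, obtained by expanding around the sub-cell means so that the cross terms vanish, exactly as in the paper's expansion \eqref{well2}--\eqref{well3}. You are in fact slightly more complete, since you write out the population case (which the paper dismisses as ``similar'') and handle the degenerate empty-cell cases explicitly.
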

 \begin{proof}
     We just present the proof of the second equality. The proof of the first equality can be proved similarly. 
     Note that
     \begin{equation}\label{well1}
         \begin{aligned}
            \widehat \Dt (A,j,b) = \ &
            \frac{1}{n} \sum_{i\in \cI_{A}} ( y_i - \bar y_{\cI_A} )^2 - 
		\frac{1}{n} \sum_{i\in \cI_{A_L}} ( y_i - \bar y_{\cI_{A_L}} )^2 - 
		\frac{1}{n} \sum_{i\in \cI_{A_R}} ( y_i - \bar y_{\cI_{A_R}} )^2 \\
  = \ & 
  \frac{1}{n} \sum_{i\in \cI_{A_L}} \Big[ (y_i- \bar y_{\cI_A})^2 - 
  (y_i- \bar y_{\cI_{A_L}})^2
  \Big] + 
   \frac{1}{n} \sum_{i\in \cI_{A_R}} \Big[ (y_i- \bar y_{\cI_A})^2 - 
  (y_i- \bar y_{\cI_{A_R}})^2
  \Big] 
         \end{aligned}
     \end{equation}
    For the first term, we have
    \begin{equation}\label{well2}
    \begin{aligned}
    & 
    \frac{1}{n} \sum_{i\in \cI_{A_L}} \Big[ (y_i- \bar y_{\cI_A})^2 - 
  (y_i- \bar y_{\cI_{A_L}})^2
  \Big]          \\
  = \ & 
   \frac{1}{n} \sum_{i\in \cI_{A_L}} \Big[ (y_i- \bar y_{\cI_A})^2 - 
  (y_i- \bar y_{\cI_{A}})^2 - 2 (y_i - \bar y_{\cI_A}) ( \bar y_{\cI_A} - \bar y_{\cI_{A_L}} ) - ( \bar y_{\cI_A} - \bar y_{\cI_{A_L}} )^2
  \Big]    \\
  = \ & \frac{|\cI_{A_L}|}{n} ( \bar y_{\cI_A} - \bar y_{\cI_{A_L}} )^2 \ = \ 
  \widehat \Dt_L(A,j,b)
    \end{aligned}
    \end{equation}
    Similarly, we have 
    \begin{equation}\label{well3}
     \frac{1}{n} \sum_{i\in \cI_{A_R}} \Big[ (y_i- \bar y_{\cI_A})^2 - 
  (y_i- \bar y_{\cI_{A_R}})^2
  \Big] \ = \ \widehat \Dt_R(A,j,b)
    \end{equation}
    The proof is complete by combining \eqref{well1}, \eqref{well2} and \eqref{well3}. 
 \end{proof}
	
	\begin{lemma}\label{lemma: key3}
		Suppose Assumption~\ref{ass: basics} 
		holds true. 
		Given a constant $\al>0$. 
		Given any $\dt \in (0,1)$, suppose $\bar t_2 (\dt/36) < 3/4$. 
		Then
		with probability at least $1-\dt$, it holds
		\begin{equation}\label{ineq-key3-1}
			\Dt (A, j, b) \le (1+\al) \widehat\Dt (A, j, b) + (1+1/\al)\cdot 5000U^2  \bar t(\dt/36)   \quad \forall \ A\in \cA_{p,d-1}, ~ j \in [p] , ~ b\in \R
		\end{equation}
		and 
		\begin{equation}\label{ineq-key3-2}
			\widehat \Dt (A, j, b) \le (1+\al) \Dt (A, j, b) + (1+1/\al)\cdot 5000U^2  \bar t(\dt/36)  \quad \forall \ A\in \cA_{p,d-1}, ~ j \in [p] , ~ b\in \R
		\end{equation}
	\end{lemma}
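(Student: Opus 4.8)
The plan is to reduce the claim to the left/right decomposition of Lemma~\ref{lemma: key-impurity-decrease-identity} and then, for each child separately, compare the \emph{square roots} $\sqrt{\Dt_L(A,j,b)}$ and $\sqrt{\widehat\Dt_L(A,j,b)}$ rather than the quantities themselves; these square roots are exactly the $\sqrt{\Pb}$-reweighted mean gaps that Lemmas~\ref{lemma: key1} and~\ref{lemma: key2} are designed to control. First note that if $A\in\cA_{p,d-1}$ then for any split the children $A_L,A_R$ lie in $\cA_{p,d}$, so Lemmas~\ref{lemma: key1} and~\ref{lemma: key2} apply simultaneously at $A$, $A_L$ and $A_R$. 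Write $g_A:=\Ex(f^*(X)\mid X\in A)$, $\hat g_A:=\bar y_{\cI_A}$ and $\bar t:=\bar t(\dt/36)$; under Assumption~\ref{ass: basics} one has $|y_i|\le M+m=U$, hence $|g_A|\le M\le U$ and $|\hat g_A|\le U$, so that $|g_A-g_{A_L}|\le 2U$ and $|\hat g_A-\hat g_{A_L}|\le 2U$.

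The core estimate I aim for is the following: on the intersection $\cG$ of the events of Lemmas~\ref{lemma: key1} and~\ref{lemma: key2}, each invoked with parameter $\dt/3$ -- so that, by monotonicity of $\bar t(\cdot)$ and $\bar t_2(\cdot)$ in their first argument, every bound there is at most $\bar t=\bar t(\dt/36)$ and every hypothesis follows from $\bar t_2(\dt/36)<3/4$ -- one has, for all $A\in\cA_{p,d-1}$, $j\in[p]$, $b\in\R$,
\[
	\Big|\,\sqrt{\Dt_L(A,j,b)}-\sqrt{\widehat\Dt_L(A,j,b)}\,\Big|\ \le\ 50\,U\sqrt{\bar t},
\]
and similarly for the right child. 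To prove it I expand $\sqrt{\Pb(X\in A_L)}\,(g_A-g_{A_L})$ as the telescoping sum $\sqrt{\Pb(X\in A_L)}(g_A-\hat g_A)+\sqrt{\Pb(X\in A_L)}(\hat g_A-\hat g_{A_L})+\sqrt{\Pb(X\in A_L)}(\hat g_{A_L}-g_{A_L})$. The first and third summands are each at most $20U\sqrt{\bar t}$ by Lemma~\ref{lemma: key1} (for the first one also using $\Pb(X\in A_L)\le\Pb(X\in A)$ and $A\in\cA_{p,d}$). In the middle summand I swap $\sqrt{\Pb(X\in A_L)}$ for $\sqrt{|\cI_{A_L}|/n}$ at a cost of $\big|\sqrt{\Pb(X\in A_L)}-\sqrt{|\cI_{A_L}|/n}\big|\cdot|\hat g_A-\hat g_{A_L}|\le 5\sqrt{\bar t}\cdot 2U$ by Lemma~\ref{lemma: key2}, and the leftover $\sqrt{|\cI_{A_L}|/n}\,(\hat g_A-\hat g_{A_L})$ equals $\pm\sqrt{\widehat\Dt_L(A,j,b)}$ by the definition of $\widehat\Dt_L$. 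Summing the three error contributions yields the displayed bound; the degenerate case $\cI_{A_L}=\emptyset$ (where $\widehat\Dt_L=0$) is harmless, since Lemma~\ref{lemma: key2} then forces $\Pb(X\in A_L)\le 25\bar t$, hence $\Dt_L\le 100U^2\bar t$, which is dominated by the final bound.

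It remains to square. Using $(a+b)^2\le(1+\al)a^2+(1+1/\al)b^2$ with $a=\sqrt{\widehat\Dt_L}$ and $b=50U\sqrt{\bar t}$ gives $\Dt_L(A,j,b)\le(1+\al)\widehat\Dt_L(A,j,b)+(1+1/\al)\,2500\,U^2\bar t$, and the same for the right child; adding the two and applying the identities of Lemma~\ref{lemma: key-impurity-decrease-identity} then produces \eqref{ineq-key3-1} with the stated constant $5000$. Squaring the core estimate in the reverse direction gives \eqref{ineq-key3-2} in the same way. Finally $\Pb(\cG^c)\le\dt/3+\dt/3\le\dt$ by a union bound over the two invoked lemmas, which completes the argument.

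The only genuine subtlety I anticipate is the decision to work with square roots: bounding $\Dt_L-\widehat\Dt_L$ directly would force one to control a product of two fluctuating factors -- $\Pb(X\in A_L)$ and a squared gap of conditional means -- and would waste the favorable $\sqrt{\Pb(X\in A_L)}$-rescaling that is precisely what Lemmas~\ref{lemma: key1}--\ref{lemma: key2} provide. Beyond that, the proof is constant-tracking together with the routine observation that the children of a cell in $\cA_{p,d-1}$ stay in $\cA_{p,d}$.
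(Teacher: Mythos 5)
Your proof is correct and follows essentially the same route as the paper's: decompose via Lemma~\ref{lemma: key-impurity-decrease-identity}, telescope the square root $\sqrt{\Dt_L}=\sqrt{\Pb(X\in A_L)}\,|g_A-g_{A_L}|$ through the empirical means using Lemma~\ref{lemma: key1} (twice) and Lemma~\ref{lemma: key2} (once) to get $|\sqrt{\Dt_L}-\sqrt{\widehat\Dt_L}|\le 50U\sqrt{\bar t}$, then square by Young's inequality and add the left and right contributions, exactly matching the constant $5000$. The only additions beyond the paper's argument are your explicit treatment of the degenerate case $\cI_{A_L}=\emptyset$ and the symmetric statement of the core estimate, both of which are fine.
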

	
	\begin{proof}
		For $A\in \cA_{p,d-1}$, $j\in [p]$ and $a\in \R$, by Lemma~\ref{lemma: key-impurity-decrease-identity} we have
		\begin{equation}\label{kkk}
			\begin{aligned}
					\Dt ( A, j , b ) &= \ \Dt_L ( A, j , b ) + \Dt_R ( A, j , b ) \\
				\widehat\Dt ( A, j , b ) &= \ \widehat\Dt_L ( A, j , b ) + \widehat\Dt_R ( A, j , b ) 
			\end{aligned}
		\end{equation}
		Define the events $\cE_1 $ and $\cE_2$:
		\begin{equation}
			\begin{aligned}
				\cE_1 &:= \lt\{
				\sup_{A\in \cA_{p,d} } \sqrt{\Pb(X\in A)} \Big|  \Ex(f^*(X)| X\in A) - \bar y_{\cI_A}  \Big| \ \le \ 
				20U \sqrt{\bar t(\dt/36)}
				\rt\} \\
				\cE_2 & := \lt\{
				\sup_{A\in \cA_{p,d}} \lt|  \sqrt{\Pb(X\in A)} - \sqrt{   |\cI_{A}| /n   } \ \rt|  \le 5  \sqrt{\bar t(\dt/12)}
				\rt\} 
			\end{aligned}
		\nonumber
		\end{equation} 
		Then by Lemmas \ref{lemma: key1} and \ref{lemma: key2}, we have $\Pb(\cE_i ) \ge 1-\dt/3$ for $i=1,2$, so we have $\Pb(\cap_{i=1}^2 \cE_i) \ge 1-\dt  $. Below we prove \eqref{ineq-key3-1} and \eqref{ineq-key3-2} conditioned on the events $\cE_1$ and $\cE_2$.

		Note that 
		\begin{equation}\label{ineq--1}
			\begin{aligned}
				\sqrt{\Dt_L ( A, j , a ) }  = \ & 
				\sqrt{\Pb(X\in A_L)}  \Big|   \Ex(f^*(X)|X\in A) - \Ex(f^*(X)|X\in A_L) 
				\Big| \\
				\le \ &
				\sqrt{\Pb(X\in A_L)}  \Big|   \Ex(f^*(X)|X\in A) - \bar y_{\cI_A}
				\Big| + 
				\sqrt{\Pb(X\in A_L)}  \Big| 
				\bar y_{\cI_A} - \bar y_{\cI_{A_L}}
				\Big| \\
				& + \sqrt{\Pb(X\in A_L)}  \Big| \bar y_{\cI_{A_L}} - \Ex(f^*(X)|X\in A_L) 
				\Big| \\
				:= \ & J_1 + J_2 + J_3
			\end{aligned}
		\end{equation}
		To bound $J_1$, we have 
		\begin{equation}\label{ineq--2}
			J_1 \le \sqrt{\Pb(X\in A)}  \Big|   \Ex(f^*(X)|X\in A) - \bar y_{\cI_A}
			\Big| \le 20U \sqrt{\bar t(\dt/36)}
		\end{equation}
		where the second inequality is by event $\cE_1$. 
		Similarly, to bound $J_3$, we have
		\begin{equation}\label{ineq--3}
			J_3 = \sqrt{\Pb(X\in A_L)}  \Big| \bar y_{\cI_{A_L}} - \Ex(f^*(X)|X\in A_L) 
			\Big| \le 20U \sqrt{\bar t(\dt/36)}
		\end{equation}
		To bound $J_2$, note that 
		\begin{equation}\label{ineq--4}
			\begin{aligned}
				J_2 \le \ &
				\Big| \sqrt{\Pb(X\in A_L)} - \sqrt{|\cI_{A_L}|/n} \Big|  \cdot |\bar y_{\cI_A} - \bar y_{\cI_{A_L}}|   +   \sqrt{|\cI_{A_L}|/n} \cdot |\bar y_{\cI_A} - \bar y_{\cI_{A_L}}| 
				\\
				\le \ & 5\sqrt{\bar t(\dt/12)} \cdot 2U +  \sqrt{|\cI_{A_L}|/n} \cdot |\bar y_{\cI_A} - \bar y_{\cI_{A_L}}| 
			\end{aligned}
		\end{equation}
	where the second inequality made use of the event $\cE_2$. 
		Combining \eqref{ineq--1} -- \eqref{ineq--4},  
		we have 
		\begin{equation}
			\begin{aligned}
				\sqrt{\Dt_L ( A, j , b ) } \ \le \ & 40U \sqrt{\bar t(\dt/36)} + 10U \sqrt{\bar t (\dt/12)} +  \sqrt{|\cI_{A_L}|/n} \cdot |\bar y_{\cI_A} - \bar y_{\cI_{A_L}}| \\
				\le \ & 
				50U \sqrt{\bar t(\dt/36)} +  \sqrt{|\cI_{A_L}|/n} \cdot |\bar y_{\cI_A} - \bar y_{\cI_{A_L}}| 
			\end{aligned} \nonumber
		\end{equation}
		which implies (by Young's inequality)
		\begin{equation}\label{ineq--5}
			\Dt_L ( A, j , a ) \ \le \ 
			(1+1/\al)\cdot 2500U^2  \bar t(\dt/36)  +  (1+\al)\frac{|\cI_{A_L}|}{n} |\bar y_{\cI_A} - \bar y_{\cI_{A_L}}|^2 
		\end{equation}
		By a similar argument, we have 
		\begin{equation}\label{ineq--6}
			\Dt_R ( A, j , a ) \ \le \ 
			(1+1/\al)\cdot 2500U^2  \bar t(\dt/36)  +  (1+\al)\frac{|\cI_{A_R}|}{n} |\bar y_{\cI_A} - \bar y_{\cI_{A_R}}|^2
		\end{equation}
		Summing up \eqref{ineq--5} and \eqref{ineq--6}, and by \eqref{kkk},
		we have 
		\begin{equation}
			\Dt(A, j, a) \ \le \ (1+1/\al)\cdot 5000U^2  \bar t(\dt/36)  + (1+\al) \widehat\Dt(A, j, a) \nonumber
		\end{equation}
		This completes the proof of \eqref{ineq-key3-1}. The proof of \eqref{ineq-key3-2} is by a similar argument. 
		
	\end{proof}
	
	Lemma~\ref{lemma: key3} provides upper bounds between $\Dt (A, j, b)$ and $\widehat \Dt (A, j, b)$, which serves as a link to translate the population impurity decrease to sample impurity decrease. 
	With all these technical lemmas at hand, we are ready to present the proof Theorem~\ref{theorem: eb-under-sid}, as shown in the next subsection. 
	
	\subsection{Completing the proof of Theorem~\ref{theorem: eb-under-sid}}

	Define events 
	\begin{equation}
		\begin{aligned}
				\cE_1 &:= \Big\{
			\sup_{A\in \cA_{p,d} } \sqrt{\Pb(X\in A)} \Big|  \Ex(f^*(X)| X\in A) - \bar y_{\cI_A}  \Big| \ \le \ 
			20U \sqrt{\bar t(\dt/24)}
			\Big\}  \\
			\cE_2 &:= \lt\{
			\Dt (A, j, a) \le (1+\al) \widehat\Dt (A, j, a) + (1+1/\al)\cdot 5000U^2  \bar t(\dt/72)   \quad \forall \ A\in \cA_{p,d-1}, ~ j \in [p] , ~ a\in \R
			\rt\}  \\
			\cE_3 &:= \lt\{
			\widehat \Dt (A, j, a) \le (1+\al) \Dt (A, j, a) + (1+1/\al)\cdot 5000 U^2   \bar t(\dt/72)  \quad \forall \ A\in \cA_{p,d-1}, ~ j \in [p] , ~ a\in \R
			\rt\} 
		\end{aligned}
	\nonumber
	\end{equation}
	Then by Lemmas \ref{lemma: key1} and \ref{lemma: key3}, and note that from the statement of Theorem~\ref{theorem: eb-under-sid}, $\bar t_2(\dt/72) <3/4$, so
	we have $\Pb(\cE_1) \ge 1-\dt/2 $ and $\Pb(\cE_2 \cup \cE_3) \ge 1-\dt/2$, which implies $\Pb(\cup_{i=1}^3 \cE_i ) \ge 1 - \dt$. In the following, we prove \eqref{ineq: eb-under-sid} using a deterministic argument conditioned on $\cup_{i=1}^3 \cE_i$.

	For any $k \in [d]$ and any leave node $t$ of $\widehat f^{(k)}$ (recall that $\widehat f^{(k)}$ is the decision tree by CART with depth $k$), let $A_t^{(k)}$ be the corresponding cube, that is, for any $x\in \R^p$, $x\in A_t^{(k)}$ if and only if $x$ is routed to $t$ in $\widehat f^{(k)}$. Let $\cL^{(k)}$ be the set of all leave nodes of $\widehat f^{(k)}$. Then we have 
	\begin{equation}
		\widehat f^{(k)} (x) = \sum_{t\in \cL^{(k)}} \bar y_{\cI_{A_t^{(k)}}}  1_{\{x\in A_t^{(k)}\}}
	\end{equation}
	Define a function 
	\begin{equation}
		\wtd f^{(k)} (x)   :=  \sum_{t\in \cL^{(k)}} \Ex \Big( f^*(X) \Big| X\in A_t^{(k)}, \cX_{1}^n \Big) \cdot 1_{\{x\in A_t^{(k)}\}}
	\end{equation}
	where $\cX_1^n$ is the set of iid random variables $\{x_1,...,x_n\}$, and $X$ is a random variable having the same distribution as $x_1$ but independent of $\cX_1^n$. 
	In other words, $\wtd f^{(k)}$ is a tree with the same splitting structure as $\widehat f^{(k)}$ and replaces the prediction value of each leave node as the populational
	conditional mean of $f^*(\cdot)$. 
	
	First, using Cauchy-Schwarz inequality, we have
	\begin{equation}\label{J1(d) and J2(d)}
		\| \widehat f^{(k)} - f^* \|_{L^2(X)}^2 \ \le \ 
		2 \|  f^* - \wtd f^{(k)}  \|_{L^2(X)}^2 
		+ 2 \| \wtd f^{(k)} - \widehat f^{(k)} \|_{L^2(X)}^2 
		\ := \ 2J_1 (k) +  2J_2 (k)
	\end{equation}
	
	To bound $J_1(d)$, we derive recursive inequalities between $J_1(k)$ and $J_1(k+1)$ for all $0\le k \le d-1$. Note that
	\begin{equation}
		\begin{aligned}
			J_1(k)  \  = \ & \Ex \lt( ( f^*(X) - \wtd f^{(k)}(X) )^2  \Big| \cX_1^n \rt) \\
			\ = \ & \sum_{t\in \cL^{(k)}} \Pb( X\in A_t | \cX_1^n)  \cdot \Var ( f^*(X) | X\in A_t, \cX_1^n  )
		\end{aligned}
	\end{equation}
	For each $t\in \cL^{(k)}$, let $t_L$ and $t_R $ be the two children of $t$, then we have 
	\begin{equation}\label{ineq_1}
		\begin{aligned}
			& \Pb( X\in A_t | \cX_1^n)  \cdot \Var ( f^*(X) | X\in A_t, \cX_1^n  ) \\
			\ = \ & 
			\Pb( X\in A_{t_L} | \cX_1^n)  \cdot \Var ( f^*(X) | X\in A_{t_L}, \cX_1^n  ) \\
			& + 
			\Pb( X\in A_{t_R} | \cX_1^n)  \cdot \Var ( f^*(X) | X\in A_{t_R}, \cX_1^n  ) 
			 + \Dt(A_t, \hat j_t, \hat b_t)
		\end{aligned}
	\end{equation}
	where 
	$$
	(\hat j_t, \hat b_t) \in \argmax_{j\in [p], b\in \R} \widehat \Dt (A_t, j ,b ) 
	$$ 
	Let us define 
	$$
	( j_t,  b_t) \in \argmax_{j\in [p], b\in \R}  \Dt (A_t, j ,b ) 
	$$ 
	Then we have 
	\begin{equation}\label{ineq_2}
		\begin{aligned}
			\Dt(A_t, \hat j_t, \hat b_t)  
			\ \ge \ & 
			\frac{1}{1+\al} \widehat 	\Dt(A_t, \hat j_t, \hat b_t)  
			- (5000/\al)U^2  \bar t(\dt/72)  \\
			\ \ge \ & 
			\frac{1}{1+\al} \widehat 	\Dt(A_t,  j_t,  b_t)  
			- (5000/\al )U^2  \bar t(\dt/72)  \\
			\ \ge \ &
			\frac{1}{(1+\al)^2}  	\Dt(A_t,  j_t,  b_t)  
			- \frac{2+\al}{\al(1+\al)} 5000U^2  \bar t(\dt/72)  
		\end{aligned}
	\end{equation}
	where the first inequality is by event $\cE_3$, the second inequality is by the definition of $ (\hat j_t, \hat b_t)$, and the third inequality is because of event $\cE_2$. 
	By Assumption \ref{ass: SID}, we have 
	\begin{equation}\label{ineq_3}
		\Dt(A_t,  j_t,  b_t)  \ = \
		\sup_{j\in [p], b\in \R} \Dt(A_t, j,b) \ \ge \ 
		\lam \cdot \Pb(X\in A_t | \cX_1^n )
		\Var(f^*(X) | X\in A_t, \cX_1^n) 
	\end{equation}
	Combining \eqref{ineq_1}, \eqref{ineq_2} and \eqref{ineq_3}, we have 
	\begin{equation}
		\begin{aligned}
			&	\Pb( X\in A_{t_L} | \cX_1^n)  \cdot \Var ( f^*(X) | X\in A_{t_L}, \cX_1^n  ) + 
			\Pb( X\in A_{t_R} | \cX_1^n)  \cdot \Var ( f^*(X) | X\in A_{t_R}, \cX_1^n  ) \\
			\ \le \ &
			\Big( 1 - \frac{\lam}{(1+\al)^2} \Big)  \Pb( X\in A_t | \cX_1^n)  \cdot \Var ( f^*(X) | X\in A_t, \cX_1^n  ) +  \frac{2+\al}{\al(1+\al)} 5000U^2  \bar t(\dt/72)  
		\end{aligned}
		\nonumber
	\end{equation}
	Summing up the inequality above for all $t\in \cL^{(k)}$, we have 
	\begin{equation}
		J_1 (k+1) \le 	\Big( 1 - \frac{\lam}{(1+\al)^2} \Big)  J_1(k) + 2^k \cdot \frac{2+\al}{\al(1+\al)} 5000U^2  \bar t(\dt/72)  
		\nonumber
	\end{equation}
	Using the inequality above recursively for $k = 0,1,....,d-1$, we have 
	\begin{equation}\label{J1(d)-bound}
		\begin{aligned}
			J_1(d)
			\ \le \ & 
			\Big( 1 - \frac{\lam}{(1+\al)^2} \Big)^{d}  J_1(0)+ \frac{2+\al}{\al(1+\al)} 5000U^2  \bar t(\dt/72)  \sum_{k=1}^d 2^{k-1} \\
			\ \le \ & 		
			\Big( 1 - \frac{\lam}{(1+\al)^2} \Big)^{d} \Var (f^*(X)) +2^d \cdot \frac{2+\al}{\al(1+\al)} 5000U^2 \bar t(\dt/72)
		\end{aligned}
	\end{equation}
	
	To bound $J_2(d)$, we have 
	\begin{equation}\label{J2(d)-bound}
		\begin{aligned}
			J_2(d)  \ = \ & \sum_{t\in \cL^{(d)} } \Pb( X\in A_t )  \Big( 
			\Ex( f^*(X) | X\in A_t, \cX_1^n  ) - \bar y_{\cI_{A_t}}
			\Big)^2 \\
			\ \le \ & 2^d \cdot 400U^2 \bar t (\dt/24)
		\end{aligned}
	\end{equation}
where the inequality made use of event $\cE_1$. 

	Using \eqref{J1(d)-bound} and \eqref{J2(d)-bound}, and recalling \eqref{J1(d) and J2(d)}, we have 
	\begin{equation}\label{cc1}
		\begin{aligned}
			\| \widehat f^{(k)} - f^* \|_{L^2(X)}^2 \ \le \ &
			2 \Big( 1 - \frac{\lam}{(1+\al)^2} \Big)^{d} \Var (f^*(X))  
			+2^{d+1} \cdot \frac{2+\al}{\al(1+\al)} 5000U^2 \bar t(\dt/72) \\
			& \ 
			+ 2^{d+1} \cdot 400U^2 \bar t (\dt/24) \\
			\ \lesssim \ &
			\Var(f^*(X)) \cdot (1-\lam/(1+\al)^2)^d +  \frac{2+\al}{\al(1+\al)}  \frac{2^d (d \log (np) + \log(1/\dt))}{n} U^2
			\\
			\ \lesssim \ & 
			\Var(f^*(X)) \cdot (1-\lam/(1+\al)^2)^d +    \frac{2^d (d \log (np) + \log(1/\dt))}{ \al n} U^2
		\end{aligned}
	\end{equation}
	This completes the proof of \eqref{ineq: eb-under-sid}. To prove \eqref{ineq: eb2-under-sid}, by taking $\al = 1/d$ and $d = \lceil \log_2(n)/ ( 1-\log_2(1-\lam) ) \rceil$, we have 
	\begin{equation}\label{cc2}
		\begin{aligned}
			\Big( 1- \frac{\lam}{(1+\al)^2} \Big)^d &= (1-\lam)^d \Big( 
			1 + \frac{\lam}{1-\lam} ( 1-(1+\al)^{-2} )
			\Big)^d 		\\
			&= 
			(1-\lam)^d \Big( 1+ \frac{\lam}{1-\lam} \frac{ 2/d + 1/d^2 }{ (1+1/d)^2 }  \Big)^d 
			\ \lesssim_{\lam} \ (1-\lam)^d
		\end{aligned}
	\end{equation}
	Note that for $s = \log_2(n)/ ( 1-\log_2(1-\lam))$ we have $(1-\lam)^s = 2^s/n$, hence
	by taking $d = \lceil \log_2(n)/ ( 1-\log_2(1-\lam) ) \rceil$, we have 
	\begin{equation}\label{cc3}
		(1-\lam)^d \le \frac{2^d}{n} \le 2 n^{-1+ \frac{1}{1-\log_2(1-\lam)}} = 2 n^{-\phi(\lam)}. 
	\end{equation}
	Combining \eqref{cc1}, \eqref{cc2} and \eqref{cc3} and note that $\Var(f^*(X)) \le M <U$, we have 
	\begin{equation}
		\begin{aligned}
			\| \widehat f^{(k)} - f^* \|_{L^2(X)}^2 \lesssim_{\lam,U} \ & n^{-\phi(\lam)} ( d^2 \log(np) + d\log(1/\dt) ) \\
			\lesssim_{\lam,U}  \ &
			n^{-\phi(\lam)} ( \log^2(n) \log(np) + \log(n) \log(1/\dt) ) \nonumber		
		\end{aligned}
	\end{equation}
	this completes the proof of \eqref{ineq: eb2-under-sid}.

	\subsection{Proof of Lemma~\ref{lemma: key1}}\label{section: proof of lemma key 1}
	The main idea of proving Lemma~\ref{lemma: key1} is to find a proper finite net of the set $\cA_{p,d}$, control the gap on this net, and finally prove the result for all $A \in \cA_{p,d}$ based on the approximation gap of the net. 
	We need a few auxiliary results. 
	Let $\cS := \{ 0,1/n, 2/n, ..., (n-1)/n, 1\}$, and define 
	\begin{equation}
		\wtd \cA_{p,d} := \lt\{  \prod_{j=1}^p [\ell_j, u_j] \in \cA_{p,d} ~\Bigg|~   \ell_j, u_j \in \cS  \text{ for all } j\in [p]\rt\}
		\nonumber
	\end{equation}
	For any $A = \prod_{j=1}^p [\ell_j, u_j] \in \cA_{p,d}$, define 
	\begin{equation}
		A' = \prod_{j=1}^p [\ell_j', u_j'] \nonumber
	\end{equation}
	where $\ell_j' := \max \lt\{ s\in \cS \ | \ s\le \ell_j  \rt\} $, and $ u_j' := \min \lt\{ s\in \cS \ | \ s\ge u_j  \rt\}$. 
	Roughly speaking, 
	$A'$ is the smallest box with all edges in $\cS$ that contains $A$. 
	For any $\wtd A = \prod_{j=1}^p [ \td\ell_{j}, \td u_j ] \in \wtd \cA_{p,d}$ with $\td u_j - \td \ell_j \ge 2/n $ for all $j\in [p]$, define 
	\begin{equation}
		B(\wtd A) := \wtd A \ \setminus \ \prod_{j=1}^p \Big[ \td \ell_j +(1/n)\cdot 1_{\{ \td \ell_j \neq 0 \}} \ , \ \td u_j - (1/n)\cdot 1_{\{ \td u_j \neq 1 \}} \Big]. 
		\nonumber
	\end{equation} 
	and define $\cB_{p,d}$ to be the set of all such sets, that is
	\begin{equation}
		\cB_{p,d} := \lt\{   B (\wtd A) ~\Bigg|~ \wtd A = \prod_{j=1}^p [ \td\ell_{j}, \td u_j ]  \in \wtd\cA_{p,d} \text{ with }  \td u_j - \td \ell_j \ge 2/n    \rt\} \nonumber
	\end{equation}
	The following lemma can be easily verified from the definitions of $	\wtd \cA_{p,d}$ and $\cB_{p,d} $. 
	\begin{lemma}\label{lemma: AB}
		(1) For any $A \in \cA_{p,d}$, there exists $B\in \cB_{p,d}$ such that $A' \setminus A \subseteq B$. 
		
		(2) $\Pb(X\in B) \le 2\bar\theta d/n$ for all $B \in \cB_{p,d}$. 
		
		(3) The cardinality 
		\begin{equation}
			|\cB_{p,d}| \le | \wtd \cA_{p,d} | \le \tbinom{p}{d} (n+1)^{2d} \le p^d (n+1)^{2d} \nonumber
		\end{equation}
	\end{lemma}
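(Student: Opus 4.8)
The plan is to dispatch the three parts in the order (3), (2), (1): part (3) is a counting estimate, part (2) a volume bound, and part (1) the only part with genuine geometric content. For (3), I would encode each $\wtd A\in\wtd\cA_{p,d}$ by choosing a $d$-element set $S\subseteq[p]$ containing every coordinate $j$ with $[\ell_j,u_j]\neq[0,1]$, together with, for each $j\in S$, an interval with both endpoints in $\cS$ (the interval being allowed to equal $[0,1]$ on the ``padding'' coordinates of $S$); since $\wtd A$ is recovered from such data, and there are at most $\binom{p}{d}$ choices of $S$ and at most $(n+1)^2$ intervals per coordinate (as $|\cS|=n+1$), this gives $|\wtd\cA_{p,d}|\le\binom{p}{d}(n+1)^{2d}\le p^d(n+1)^{2d}$. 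Because $\cB_{p,d}$ is by definition the image of a subset of $\wtd\cA_{p,d}$ under $\wtd A\mapsto B(\wtd A)$, we also get $|\cB_{p,d}|\le|\wtd\cA_{p,d}|$.

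For (2), write $B(\wtd A)=\wtd A\setminus C$ where $C$ is the inner box in the definition. A point $x\in B(\wtd A)$ must fall outside the $j$-th factor of $C$ for some $j$; this is possible only when that factor is a proper subinterval of $[\td\ell_j,\td u_j]$, i.e. only when $[\td\ell_j,\td u_j]\neq[0,1]$, and then $x_j$ lies in $[\td\ell_j,\td\ell_j+1/n)$ or $(\td u_j-1/n,\td u_j]$. Hence $B(\wtd A)$ is covered by at most $2d$ sets of the form $\{x\in\wtd A:x_j\in I\}$ with $I$ an interval of length $\le1/n$ (at most two per non-full coordinate, and $\wtd A$ has at most $d$ non-full coordinates). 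Each such set has Lebesgue measure at most $(1/n)\prod_{i\neq j}(\td u_i-\td\ell_i)\le1/n$, hence $\mu$-measure at most $\bar\theta/n$ by Assumption~\ref{ass: basics}(i), so a union bound yields $\Pb(X\in B(\wtd A))\le2\bar\theta d/n$.

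For (1), given $A=\prod_j[\ell_j,u_j]\in\cA_{p,d}$ I would build $\wtd A$ coordinatewise: keep a full coordinate full; for a non-full coordinate take the grid round-out $[\ell_j',u_j']$ and, if its length is below $2/n$, enlarge it to a grid interval of length exactly $2/n$ still contained in $[0,1]$ (possible for $n$ sufficiently large, as assumed), denoting the result $[\td\ell_j,\td u_j]$. Then $\wtd A\in\wtd\cA_{p,d}$ (its endpoints lie in $\cS$ and its non-full coordinates are among those of $A$, hence $\le d$), every width of $\wtd A$ is $\ge2/n$, so $B(\wtd A)\in\cB_{p,d}$; moreover $A\subseteq A'\subseteq\wtd A$. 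To check $A'\setminus A\subseteq B(\wtd A)$, take $x\in A'\setminus A$ and a coordinate $j$ with $x_j\notin[\ell_j,u_j]$ but $x_j\in[\ell_j',u_j']$; then $[\ell_j,u_j]\neq[0,1]$ and $x_j$ lies in $[\ell_j',\ell_j)$ or $(u_j,u_j']$, so it is within $1/n$ of an endpoint of $[\ell_j',u_j']$ and hence of the matching endpoint of $[\td\ell_j,\td u_j]$, which places $x_j$ outside the $j$-th factor of $C$; thus $x\in\wtd A\setminus C=B(\wtd A)$.

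The step I expect to be the real obstacle is this last verification precisely for coordinates that get snapped to a \emph{global} endpoint $0$ or $1$: there the indicator in the definition of $B(\cdot)$ keeps the inner box $C$ flush with the face $\{x_j=0\}$ (resp. $\{x_j=1\}$), so the width-$1/n$ strip adjacent to that face belongs to $C$ rather than to the shell. One must therefore argue separately that the ``lost'' piece $[\ell_j',\ell_j)$ (resp. $(u_j,u_j']$) in such a coordinate is harmless — it is empty when $\ell_j=0$ (resp. $u_j=1$), and in the remaining sub-cases one leans on the widening step or, if needed, enlarges $\cB_{p,d}$ to one-sided boundary slabs. I would settle this boundary bookkeeping first; once it is in place, the rest of the argument is routine.
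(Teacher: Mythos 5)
Your arguments for parts (2) and (3) are correct and complete: the encoding of $\wtd\cA_{p,d}$ by a $d$-subset of coordinates together with one grid interval per chosen coordinate gives the cardinality bound, and covering $B(\wtd A)$ by at most two width-$1/n$ slabs per non-full coordinate of $\wtd A$ gives $\Pb(X\in B)\le 2\bar\theta d/n$. (The paper states this lemma without proof, so there is no argument to compare against.)

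For part (1), however, the obstacle you flag at the end is not mere bookkeeping to be settled later: it is fatal to the lemma as literally stated, and of the two escapes you mention only the second one can work. Concretely, take $p=d=1$ and $A=[\tfrac{1}{2n},1]$, so that $A'=[0,1]$ and $A'\setminus A=[0,\tfrac{1}{2n})$ is nonempty. Any $\wtd A=[\td\ell,\td u]$ whose shell is to contain this set must have $\td\ell=0$; the indicator then suppresses the left slab, so $B(\wtd A)$ equals $\emptyset$ (if $\td u=1$) or $(\td u-\tfrac1n,\td u]$ with $\td u\ge\tfrac2n$, and in neither case does it contain $[0,\tfrac{1}{2n})$. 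The widening step cannot repair this: widening never moves $\td\ell_j$ away from $0$, and setting $\td\ell_j\ge 1/n$ removes the offending points from $\wtd A$ altogether. (Widening has a second, independent problem you did not note: if you extend the interval to the \emph{left}, the removed left slab $[\td\ell_j,\td\ell_j+1/n)$ slides off the piece $[\ell_j',\ell_j)$ it was supposed to cover.) The correct repair is the one you name last: redefine $\cB_{p,d}$ so that an element is indexed by a grid box $\wtd A$, a set $S$ of at most $d$ coordinates, and for each $j\in S$ a choice of which end-slabs of width $1/n$ to delete, with no boundary indicator. This lets you delete the slab flush against $\{x_j=0\}$ when $\ell_j\in(0,1/n)$, multiplies the cardinality bound in (3) by at most $p^d4^d$, leaves (2) intact ($\le 2d$ slabs, each of measure $\le\bar\theta/n$), and is harmless where the lemma is used, since only $\log|\cB_{p,d}|=O(d\log(np))$ enters the rates. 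Until $\cB_{p,d}$ is modified in this way, your construction fails exactly when some non-full coordinate of $A$ has $\ell_j\in(0,1/n)$ or $u_j\in(1-1/n,1)$, and no choice of $\wtd A$ can rescue it.
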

	
	Finally, for any $t\ge 0$, we define 
	\begin{equation}
		\cA_{p,d} (t) := \Big\{  A \in  \cA_{p,d}  \ \Big| \ \Pb(  X\in A ) \le t  \Big\}, \quad \text{and} \quad 
		\wtd \cA_{p,d} (t) := \Big\{  A \in \wtd \cA_{p,d}  \ \Big| \ \Pb(  X\in A ) \le t  \Big\}
		\nonumber
	\end{equation} 
	\begin{lemma}\label{lemma: discrete large set concentration}
		Suppose Assumption~\ref{ass: basics} holds true. 
		Let $z_1,...,z_n$ be i.i.d. bounded random variables with $|z_1| \le V <\infty$ almost surely. Assume that for each $i\in [n]$, $z_i$ is independent of $\{x_j\}_{j\neq i}$, but may be dependent on $x_i$. 
		Given any $\dt\in (0,1)$, 
		with probability at least $1-\dt$, it holds 
		\begin{equation}
			\max_{A \in \wtd\cA_{p,d} \setminus \wtd\cA_{p,d}(\bar t_1(\dt))}
			\frac{1}{\sqrt{\Pb(X\in  A)}} \Big|   
			\frac{1}{n} \sum_{i=1}^n z_i 1_{\{x_i\in  A\}} - \Ex( z_1 1_{\{x_1\in A\}} )
			\Big| \ \le \ 2V \sqrt{\bar t_1(\dt)}
   \nonumber
		\end{equation}
	\end{lemma}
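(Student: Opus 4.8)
The plan is to apply Bernstein's inequality to each rectangle of the finite net $\wtd\cA_{p,d}$ separately and then take a union bound; the key point is that excluding the small rectangles $\wtd\cA_{p,d}(\bar t_1(\dt))$ is exactly what keeps Bernstein's bound in its ``sub-Gaussian regime'' at the single scale $2V\sqrt{\bar t_1(\dt)}$.

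First I would fix a rectangle $A\in\wtd\cA_{p,d}\setminus\wtd\cA_{p,d}(\bar t_1(\dt))$ and set $q:=\Pb(X\in A)$, so that $q>\bar t_1(\dt)$ by the definition of $\wtd\cA_{p,d}(\bar t_1(\dt))$. Writing $W_i:=z_i1_{\{x_i\in A\}}$, the hypotheses (the $z_i$ i.i.d., the $x_i$ i.i.d., and $z_i$ independent of $\{x_j\}_{j\neq i}$) make the pairs $(z_i,x_i)$ i.i.d., hence the $W_i$ are i.i.d.\ with $|W_i|\le V$ and $\Ex W_1^2=\Ex(z_1^2 1_{\{x_1\in A\}})\le V^2 q$. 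Bernstein's inequality applied to $\sum_{i=1}^n(W_i-\Ex W_1)$ — whose summands are bounded by $2V$ in absolute value and whose total variance is at most $nV^2 q$ — gives, for every $t>0$,
\[
\Pb\Big(\Big|\tfrac1n\textstyle\sum_{i=1}^n W_i-\Ex W_1\Big|>t\Big)\ \le\ 2\exp\Big(-\frac{nt^2/2}{V^2 q+2Vt/3}\Big).
\]

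Next I would substitute $t=2V\sqrt{q\,\bar t_1(\dt)}$, so that $t/\sqrt q=2V\sqrt{\bar t_1(\dt)}$ matches the bound claimed in the lemma. Using $q>\bar t_1(\dt)$ we get $\sqrt{q\,\bar t_1(\dt)}\le q$, so the denominator is at most $V^2 q+\tfrac43 V^2 q=\tfrac73 V^2 q$, while the numerator equals $2nV^2 q\,\bar t_1(\dt)$; hence the exponent is at most $-\tfrac67 n\,\bar t_1(\dt)$. Since $\bar t_1(\dt)=\tfrac4n\log\!\big(2p^d(n+1)^{2d}/\dt\big)$, the failure probability for this fixed $A$ is at most $2\big(2p^d(n+1)^{2d}/\dt\big)^{-24/7}$. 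A union bound over $\wtd\cA_{p,d}$, whose cardinality is at most $p^d(n+1)^{2d}$ by Lemma~\ref{lemma: AB}(3), bounds the total failure probability by $2p^d(n+1)^{2d}\big(2p^d(n+1)^{2d}/\dt\big)^{-24/7}$; writing $u:=2p^d(n+1)^{2d}/\dt\ge 2$ this equals $\dt\,u^{-17/7}\le\dt$, finishing the proof.

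The genuinely routine parts are the Bernstein estimate and the constant bookkeeping. The one point needing care is the interplay between the variance proxy $V^2 q$ and the cutoff $q>\bar t_1(\dt)$: it is precisely this cutoff that forces the linear term $2Vt/3$ in Bernstein's denominator to be dominated by the quadratic term, which is what makes the $\sqrt{q}$-rescaled deviation uniform over the net at a single scale. One should also check once that the stated marginal/conditional independence hypotheses indeed render $(z_i,x_i)$ i.i.d., so that the common centering $\Ex(z_1 1_{\{x_1\in A\}})$ appearing in the statement is the correct one.
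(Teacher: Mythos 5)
Your proposal is correct and follows essentially the same route as the paper's proof: apply Bernstein's inequality to $z_i 1_{\{x_i\in A\}}$ with variance proxy proportional to $\Pb(X\in A)$ and deviation level $2V\sqrt{\Pb(X\in A)\,\bar t_1(\dt)}$, use the cutoff $\Pb(X\in A)>\bar t_1(\dt)$ to keep the bound in the quadratic (sub-Gaussian) regime, and finish with a union bound over $\wtd\cA_{p,d}$ via Lemma~\ref{lemma: AB}(3). The only cosmetic difference is that the paper invokes its moment-condition form of Bernstein (Lemma~\ref{lemma: Bernstein ineq}) while you use the standard bounded-variable form, yielding slightly different constants in the exponent that still suffice for the union bound.
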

	where $U = M+m$.
	\begin{proof}
		For each fixed $A \in \wtd\cA_{p,d} \setminus \wtd\cA_{p,d}(\bar t_1(\dt))$, note that
		\begin{equation}
			\Big|\Ex \Big(   (z_1 1_{\{x_1\in  A\}} - \Ex(z_1 1_{\{x_1\in  A\}} ))^k   \Big) \Big| 
			\le (2V)^k \Pb(X\in A) \quad \forall ~ k\ge 2
			\nonumber
		\end{equation}
		so by Lemma \ref{lemma: Bernstein ineq} with $t = 2V \sqrt{\Pb(X\in A)} \sqrt{\bar t_1(\dt)}$, $\ga^2 = (2V)^2 \Pb( X\in A)$ and $b = 2V$, we have 
		\begin{eqnarray}
			&&\Pb \lt( \frac{1}{\sqrt{\Pb(X\in A)}}\Big|   
			\frac{1}{n} \sum_{i=1}^n z_i 1_{\{x_i\in  A\}} - \Ex(z_1 1_{\{x_1\in A\}} )
			\Big| > 2V \sqrt{\bar t_1(\dt)}  \rt)  \nonumber\\
			&\le &  
			2 \exp \lt(   -\frac{n}{4} \lt( \frac{4V^2\Pb(X\in A)\bar t_1(\dt)}{ 4V^2\Pb(X\in A)} \wedge  \frac{2V\sqrt{\Pb(X\in A) \bar t_1(\dt)} }{2V}    \rt)  \rt) \nonumber\\ &\mathop{=}\limits^{(i)} &
			2 \exp \lt(   -\frac{n}{4}  {\bar t_1(\dt)}  \rt) = \dt / ( p^d (n+1)^{2d} )
			\nonumber
		\end{eqnarray}
		where $(i)$ is because $\Pb(X\in A) \ge \bar t_1(\dt)$ (since $A\in \wtd\cA_{p,d} \setminus \wtd\cA_{p,d}(\bar t_1(\dt))$). 
		As a result, we have 
		\begin{eqnarray}
			&&\Pb \lt( \max_{A\in \wtd\cA_{p,d} \setminus \wtd\cA_{p,d}(\bar t_1(\dt))} 
			\frac{1}{\sqrt{\Pb(X\in A)}}
			\Big|   
			\frac{1}{n} \sum_{i=1}^n z_i 1_{\{x_i\in  A\}} - \Ex( z_1 1_{\{x_1\in A\}} )
			\Big| >2V \sqrt{\bar t_1(\dt)}  \rt)  \nonumber\\
			&\le &  
			\sum_{A\in \wtd\cA_{p,d} \setminus \wtd\cA_{p,d}(\bar t_1(\dt))} 
			\Pb \lt(  
			\frac{1}{\sqrt{\Pb(X\in A)}}
			\Big|   
			\frac{1}{n} \sum_{i=1}^n z_i 1_{\{x_i\in  A\}} - \Ex( z_1 1_{\{x_1\in A\}} )
			\Big| >2V\sqrt{\bar t_1(\dt) } \rt)  \nonumber\\
			&\le&
			| \wtd\cA_{p,d} \setminus \wtd\cA_{p,d}(\bar t_1(\dt)) |  \cdot \dt / ( p^d (n+1)^{2d} ) \le \dt 
			\nonumber
		\end{eqnarray}
		where the last inequality makes use of Lemma \ref{lemma: AB} (3). 
		
	\end{proof}

	\begin{lemma}\label{lemma: small set}
		Let $\cD$ be a finite collection of measurable subsets of $[0,1]^p$ satisfying $\Pb(X\in D) \le \bar \al$ for all $D\in \cD$ (for some constant $\bar\al\in (0,1)$). 
		Given any $\dt\in (0,1)$, if 
		\begin{equation}
			w(\bar\al,\dt) :=  (e^2 \bar\al) \vee \frac{\log(|\cD|/\dt)}{n} \le 3/4
   \nonumber
		\end{equation}
		then with probability at least $1-\dt$ it holds 
		\begin{equation}
			\max_{D\in \cD} \lt\{ \frac{1}{n} \sum_{i=1}^n 1_{\{x_i \in D\}} \rt\} \ \le \ 
			w(\bar\al,\dt)
   \nonumber
		\end{equation}
	\end{lemma}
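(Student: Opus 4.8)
This is a uniform upper‑tail bound for binomial frequencies over the finite family $\cD$, so the plan is: (i) fix a single $D\in\cD$ and control $\Pb\big(\frac1n\sum_i 1_{\{x_i\in D\}} > w\big)$ by an elementary Chernoff/counting estimate; (ii) take a union bound over $\cD$. The constant $e^2$ in the definition of $w(\bar\al,\dt)$ is exactly tuned to make the single‑set estimate collapse to $e^{-nw}$, and the rest is a union bound with $|\cD|$ terms against the budget $\dt/|\cD|$.

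First I would fix $D\in\cD$, write $p_D:=\Pb(X\in D)\in[0,\bar\al]$ and $S_D:=\sum_{i=1}^n 1_{\{x_i\in D\}}$, which is a sum of $n$ i.i.d.\ Bernoulli$(p_D)$ variables (the only input used is that the $x_i$ are i.i.d.). If $p_D=0$ there is nothing to prove for this $D$, so assume $p_D>0$. Set $w:=w(\bar\al,\dt)$; since $w\ge e^2\bar\al>\bar\al\ge p_D$, the event $\{S_D\ge nw\}$ is a genuine upper deviation, and because $w\le 3/4<1$ we have $\lceil nw\rceil\le n$, so $\binom{n}{\lceil nw\rceil}$ is meaningful. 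The key inequality is the counting bound
\begin{equation}
	\Pb\big(S_D\ge nw\big)\ \le\ \binom{n}{\lceil nw\rceil}\, p_D^{\lceil nw\rceil}\ \le\ \binom{n}{\lceil nw\rceil}\,\bar\al^{\lceil nw\rceil},
\end{equation}
valid because $\{S_D\ge\lceil nw\rceil\}$ is contained in the union, over index sets $T$ with $|T|=\lceil nw\rceil$, of the events $\{x_i\in D\text{ for all }i\in T\}$, each of probability $p_D^{\lceil nw\rceil}$ by independence. Using $\binom{n}{k}\le (en/k)^k$ with $k=\lceil nw\rceil\ge nw$, then $e\bar\al/w\le e^{-1}$ (this is where $w\ge e^2\bar\al$ enters) together with $\lceil nw\rceil\ge nw$ once more, one gets
\begin{equation}
	\binom{n}{\lceil nw\rceil}\bar\al^{\lceil nw\rceil}\ \le\ \Big(\tfrac{en\bar\al}{\lceil nw\rceil}\Big)^{\lceil nw\rceil}\ \le\ \Big(\tfrac{e\bar\al}{w}\Big)^{\lceil nw\rceil}\ \le\ \Big(\tfrac{e\bar\al}{w}\Big)^{nw}\ \le\ e^{-nw}.
\end{equation}
Finally $w\ge \tfrac1n\log(|\cD|/\dt)$ gives $nw\ge\log(|\cD|/\dt)$, hence $\Pb(S_D\ge nw)\le \dt/|\cD|$.

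To conclude, I would union bound over the finite family: $\Pb\big(\max_{D\in\cD}\tfrac1n\sum_i 1_{\{x_i\in D\}}> w\big)\le\sum_{D\in\cD}\Pb(S_D\ge nw)\le |\cD|\cdot\dt/|\cD|=\dt$, so with probability at least $1-\dt$ every $D\in\cD$ satisfies $\tfrac1n\sum_i 1_{\{x_i\in D\}}\le w(\bar\al,\dt)$, which is the claim. There is no real obstacle here; the only points that need care are (a) verifying the constants so that the chain of inequalities really collapses to $e^{-nw}$ — the factor $e^2$ in $w$ is chosen precisely for this — and (b) handling the ceiling $\lceil nw\rceil$, which is why the hypothesis $w\le 3/4$ (in particular $0<w<1$) is imposed; this also makes the conclusion a nontrivial constraint, which is the form in which the lemma is applied later (e.g.\ in the proofs of Lemmas~\ref{lemma: key1} and \ref{lemma: key2}).
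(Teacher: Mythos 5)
Your proof is correct, and the skeleton (single-set tail bound of the form $e^{-nw}\le \dt/|\cD|$, then a union bound over the finite family $\cD$) matches the paper's; but the way you obtain the single-set estimate is genuinely different. The paper invokes the exact binomial Chernoff bound $\Pb(\tfrac1n S_D>t)\le \exp(-n[t\log(t/\al)+(1-t)\log\tfrac{1-t}{1-\al}])$ and then lower-bounds the relative entropy using the elementary inequality $\log(1-t)>-t-t^2$ (valid for $t\le 3/4$, which is where the hypothesis $w\le 3/4$ is actually consumed) together with $\log(w/\bar\al)\ge 2$, arriving at $\exp(-nw(\log(w/\bar\al)-1))\le e^{-nw}$. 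You instead use the combinatorial containment $\{S_D\ge \lceil nw\rceil\}\subseteq\bigcup_{|T|=\lceil nw\rceil}\{x_i\in D\ \forall i\in T\}$, the bound $\binom{n}{k}\le (en/k)^k$, and the observation that $e\bar\al/w\le e^{-1}$, which collapses to the same $e^{-nw}$. Your route is more elementary and self-contained (it does not need the paper's auxiliary Lemmas on the binomial tail and on $\log(1-t)$, and it uses $w\le 3/4$ only to keep $\lceil nw\rceil\le n$, which is not even strictly necessary since otherwise the event is empty); the paper's route is more modular given that it already carries those auxiliary lemmas for other purposes. All the steps in your chain check out, including the two uses of $\lceil nw\rceil\ge nw$ (once in the base, once in the exponent, the latter legitimate because the base is at most $e^{-1}<1$), and the degenerate case $p_D=0$ is handled.
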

	
	\begin{proof}
		For any fixed $D\in \cD$, denote $\al = \Pb(X\in D)$, then by Lemma \ref{lemma: binomial tail bound}, for any $t\in (0,3/4]$, 
		we have 
		\begin{equation}
			\begin{aligned}
				\Pb\Big(  \frac{1}{n} \sum_{i=1}^n 1_{\{x_i \in D\}} > t \Big)  
				\le \ &  \exp \lt(   -n \lt(   t\log ( t/\al) + (1-t)  \log \Big( \frac{1-t}{1-\al}\Big)  \rt)   \rt) \\
				\le \ &  \exp \lt(   -n \lt(   t\log ( t/\al) + (1-t)  \log ( {1-t})  \rt)   \rt) \\ 
				\le \ &  \exp \lt(   -n \lt(   t\log ( t/\al) + (1-t)  (-t-t^2) \rt)   \rt) \\ 	
				= \ & \exp \lt(   -n \lt(   t \lt( \log ( t/\al) - 1 \rt) + t^3 \rt)   \rt) \\ 	
				\le \ & \exp \lt(   -n    t \lt( \log ( t/\al) - 1 \rt)    \rt) 
			\end{aligned}
   \nonumber
		\end{equation}
		where the third inequality makes use of Lemma \ref{lemma: log-ineq} and the assumption $t \le 3/4$. Take $t = w(\bar\al,\dt)$, and note that
		\begin{equation}
			\log(w(\bar\al,\dt)/\al) - 1 \ge 
			\log(w(\bar\al,\dt)/\bar \al) - 1 \ge 
			\log(e^2) - 1 \ge 1   \nonumber
		\end{equation}
		we have 
		\begin{equation}
			\Pb\lt(  \frac{1}{n} \sum_{i=1}^n 1_{\{x_i \in B\}} >w (\bar\al, \dt)\rt)  \le 
			\exp \lt(   -n    w (\bar\al, \dt)   \rt) \le \dt / |\cD|   \nonumber
		\end{equation}
		where the last inequality is because of the definition of $w (\bar\al, \dt)$. Taking the union bound we have 
		\begin{equation}
			\Pb\lt( \max_{D \in \cD } \lt\{ \frac{1}{n} \sum_{i=1}^n 1_{\{x_i \in D\}} \rt\} >w (\bar\al, \dt) \rt) \le | \cD | \cdot \dt /|\cD| = \dt  \nonumber
		\end{equation}
	\end{proof}
	\begin{corollary}\label{cor1: small set}
		Under Assumption~\ref{ass: basics} and given $\dt\in (0,1)$, suppose $\bar t_2(\dt) <3/4$, then
		with probability at least $1-\dt$, it holds 
		\begin{equation}
			\max_{B \in \cB_{p,d}}   \lt\{ \frac{1}{n} \sum_{i=1}^n 1_{\{x_i\in B\}} \rt\}
			\ \le  \ 
			\bar t_2 (\dt ) 
			\nonumber
		\end{equation}
	\end{corollary}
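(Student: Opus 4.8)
The plan is to derive Corollary~\ref{cor1: small set} directly from Lemma~\ref{lemma: small set}, applied to the finite collection $\cD := \cB_{p,d}$. The two ingredients Lemma~\ref{lemma: small set} requires are a uniform ceiling $\bar\al$ on the cell probabilities $\Pb(X\in B)$ and a bound on $|\cD|$, and both are already supplied by Lemma~\ref{lemma: AB}: part (2) gives $\Pb(X\in B)\le 2\bar\theta d/n$ for every $B\in\cB_{p,d}$, and part (3) gives $|\cB_{p,d}|\le p^d(n+1)^{2d}$. So I would set $\bar\al := 2\bar\theta d/n$.

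Next I would verify that the hypotheses of Lemma~\ref{lemma: small set} hold and that its conclusion is at least as strong as what the corollary claims. With this choice, $e^2\bar\al = 2\bar\theta e^2 d/n$ is exactly the first term in the maximum defining $\bar t_2(\dt)$, while $\log(|\cB_{p,d}|/\dt)/n \le \log(p^d(n+1)^{2d}/\dt)/n$ is at most the second term; taking the maximum termwise gives
\[
w(\bar\al,\dt) \;=\; (e^2\bar\al)\vee \frac{\log(|\cB_{p,d}|/\dt)}{n} \;\le\; \frac{2\bar\theta e^2 d}{n} \vee \frac{\log(p^d(n+1)^{2d}/\dt)}{n} \;=\; \bar t_2(\dt).
\]
Since the corollary assumes $\bar t_2(\dt)<3/4$, this shows both that $\bar\al\in(0,1)$ (indeed $\bar\al < 3/(4e^2)$) and that $w(\bar\al,\dt)\le 3/4$, so Lemma~\ref{lemma: small set} applies. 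It then yields, with probability at least $1-\dt$, the bound $\max_{B\in\cB_{p,d}} \frac{1}{n}\sum_{i=1}^n 1_{\{x_i\in B\}} \le w(\bar\al,\dt)$, and chaining this with $w(\bar\al,\dt)\le \bar t_2(\dt)$ completes the proof.

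There is no genuine obstacle in this argument; it is essentially bookkeeping. The only point worth stating explicitly is the monotonicity of $w(\cdot,\dt)$ in both of its implicit arguments — the probability ceiling and the collection's cardinality — which is what licenses replacing $w(\bar\al,\dt)$ by the tidier expression $\bar t_2(\dt)$ recorded in the statement.
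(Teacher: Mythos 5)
Your proof is correct and is essentially identical to the paper's own argument: the paper likewise applies Lemma~\ref{lemma: small set} with $\cD=\cB_{p,d}$ and $\bar\al=2\bar\theta d/n$, invoking Lemma~\ref{lemma: AB} (2)--(3) and the definition of $\bar t_2(\dt)$ to conclude $w(\bar\al,\dt)\le\bar t_2(\dt)$. Your explicit check that $\bar t_2(\dt)<3/4$ guarantees the hypothesis $w(\bar\al,\dt)\le 3/4$ is a detail the paper leaves implicit, and it is handled correctly.
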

	\begin{proof}
		Apply Lemma \ref{lemma: small set} with $\cD = \cB_{p,d}$ and 
		$\bar \al = 2\bar\theta d/n$, and note that 
		$|\cB_{p,d}| \le (n+1)^{2d} p^d$ (by Lemma \ref{lemma: AB} (3)) and the definition 
		$	\bar t_2 (\dt)= 	\frac{ 2\bar\theta e^2 d }{n} \vee \frac{\log(p^d(n+1)^{2d}/\dt)}{n} $. 
	\end{proof}
	
	\begin{lemma}\label{lemma: large set concentration}
		Suppose Assumption~\ref{ass: basics} holds true. 
		Let $z_1,...,z_n$ be i.i.d. bounded random variables with $|Z| \le V <\infty$ almost surely. Assume that for each $i\in [n]$, $z_i$ is independent of $\{x_j\}_{j\neq i}$, but may be dependent on $x_i$. 
		Given any $\dt\in (0,1)$, suppose $\bar t_2 (\dt/2) < 3/4$, 
		then with probability at least $1-\dt$, it holds 
		\begin{equation}\label{ineq: large set concentration}
			\sup_{A \in \cA_{p,d} \setminus \cA_{p,d} ( \bar t(\dt/2))} 	
			\frac{1}{\sqrt{\Pb(X\in  A)}} \Big|   
			\frac{1}{n} \sum_{i=1}^n z_i 1_{\{x_i\in  A\}} - \Ex( z_1 1_{\{x_1\in A\}} )
			\Big| \le 
			5V \sqrt{\bar t(\dt/2)}. 
		\end{equation}
	\end{lemma}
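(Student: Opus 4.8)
The plan is to transfer the concentration bound from the discrete net $\wtd\cA_{p,d}$ (controlled by Lemma~\ref{lemma: discrete large set concentration}) to the full family $\cA_{p,d}$, absorbing the discretization error through the boundary shells in $\cB_{p,d}$ (controlled by Corollary~\ref{cor1: small set}). Concretely, I would work on the intersection of the event of Lemma~\ref{lemma: discrete large set concentration} applied with confidence parameter $\dt/2$ and the event of Corollary~\ref{cor1: small set} applied with confidence parameter $\dt/2$; a union bound makes this intersection have probability at least $1-\dt$, and the hypothesis $\bar t_2(\dt/2)<3/4$ is exactly what Corollary~\ref{cor1: small set} needs. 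Fix any $A=\prod_{j=1}^p[\ell_j,u_j]\in\cA_{p,d}$ with $\Pb(X\in A)>\bar t(\dt/2)$, let $A'\in\wtd\cA_{p,d}$ be its outward discretization (so $A\subseteq A'$), and pick $B\in\cB_{p,d}$ with $A'\setminus A\subseteq B$ as provided by Lemma~\ref{lemma: AB}(1). Then decompose
\begin{equation}
\begin{split}
\tfrac1n\sum_i z_i 1_{\{x_i\in A\}} - \Ex(z_1 1_{\{x_1\in A\}})
={}& \Big(\tfrac1n\sum_i z_i 1_{\{x_i\in A'\}} - \Ex(z_1 1_{\{x_1\in A'\}})\Big) \\
&{}- \Big(\tfrac1n\sum_i z_i 1_{\{x_i\in A'\setminus A\}} - \Ex(z_1 1_{\{x_1\in A'\setminus A\}})\Big)
\end{split}
\end{equation}
and bound the two bracketed terms separately.

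For the first bracket, since $A\subseteq A'$ we have $\Pb(X\in A')\ge\Pb(X\in A)>\bar t(\dt/2)\ge\bar t_1(\dt/2)$, so $A'\in\wtd\cA_{p,d}\setminus\wtd\cA_{p,d}(\bar t_1(\dt/2))$ and Lemma~\ref{lemma: discrete large set concentration} bounds it by $2V\sqrt{\bar t_1(\dt/2)}\sqrt{\Pb(X\in A')}$. To replace the normalizing factor $\sqrt{\Pb(X\in A')}$ by $\sqrt{\Pb(X\in A)}$, I would observe that $\Pb(X\in A'\setminus A)\le\Pb(X\in B)\le 2\bar\theta d/n$ by Lemma~\ref{lemma: AB}(2), whereas $\Pb(X\in A)>\bar t(\dt/2)\ge\bar t_2(\dt/2)\ge 2\bar\theta e^2 d/n$; hence $\Pb(X\in A')=\Pb(X\in A)+\Pb(X\in A'\setminus A)\le(1+e^{-2})\Pb(X\in A)\le 2\Pb(X\in A)$, so after dividing by $\sqrt{\Pb(X\in A)}$ the first bracket contributes at most $2\sqrt{2}\,V\sqrt{\bar t(\dt/2)}$.

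For the second bracket I would use only $|z_i|\le V$: on the event of Corollary~\ref{cor1: small set}, $\tfrac1n\sum_i 1_{\{x_i\in A'\setminus A\}}\le\tfrac1n\sum_i 1_{\{x_i\in B\}}\le\bar t_2(\dt/2)$, and also $\Pb(X\in A'\setminus A)\le 2\bar\theta d/n\le\bar t_2(\dt/2)$, so the second bracket is at most $2V\bar t_2(\dt/2)$ in absolute value; dividing by $\sqrt{\Pb(X\in A)}\ge\sqrt{\bar t_2(\dt/2)}$ gives a contribution of at most $2V\sqrt{\bar t_2(\dt/2)}\le 2V\sqrt{\bar t(\dt/2)}$. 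Adding the two contributions yields the uniform bound $(2\sqrt{2}+2)V\sqrt{\bar t(\dt/2)}\le 5V\sqrt{\bar t(\dt/2)}$, and since this bound is independent of the particular $A$, taking the supremum over all admissible $A$ completes the argument.

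Most of this is bookkeeping, so the only genuinely delicate point — and the one I would be most careful about — is the passage from $A$ to $A'$: the whole argument rests on the boundary shell $A'\setminus A$ carrying $\mu$-mass negligible relative to $\Pb(X\in A)$, which holds precisely because we restrict to cells with $\Pb(X\in A)>\bar t(\dt/2)$ while each shell has mass of order $d/n$ (Lemma~\ref{lemma: AB}(2)), and because the net $\wtd\cA_{p,d}$ is designed so that every $A\in\cA_{p,d}$ has such a discretization. The rest is tracking constants and splitting the failure probability $\dt$ into two halves.
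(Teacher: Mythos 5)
Your proposal is correct and follows essentially the same route as the paper's proof: discretize $A$ outward to $A'$ in the net $\wtd\cA_{p,d}$, control the concentration on $A'$ via Lemma~\ref{lemma: discrete large set concentration}, absorb the shell $A'\setminus A$ via Corollary~\ref{cor1: small set} and Lemma~\ref{lemma: AB}, and use $\Pb(X\in A'\setminus A)\le 2\bar\theta d/n \le \bar t_2(\dt/2)\le \Pb(X\in A)$ to renormalize. The only cosmetic difference is that you group the empirical and population shell contributions into one bracket where the paper splits them into two terms ($T_1$ and $T_3$); the constants close in the same way.
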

	
	\begin{proof}
		Define events $\cE_1$ and $\cE_2$:
		\begin{equation}
			\begin{aligned}
				\cE_1 & \ := \lt\{  \max_{B\in \cB_{p,d}}   \Big\{ \frac{1}{n} \sum_{i=1}^n 1_{\{x_i \in B\}} \Big\}  \le \bar t_2(\dt/2) \rt\} \\
				\cE_2 & \ := \lt\{  \max_{A \in \wtd\cA_{p,d} \setminus \wtd\cA_{p,d}(\bar t_1(\dt/2))}
				\frac{1}{\sqrt{\Pb(X\in  A)}} \Big|   
				\frac{1}{n} \sum_{i=1}^n z_i 1_{\{x_i\in  A\}} - \Ex( z_1 1_{\{x_1\in A\}} )
				\Big| \ \le \ V \sqrt{\bar t_1(\dt/2)}      \rt\} \nonumber
			\end{aligned}
		\end{equation}
		Then by Lemma \ref{lemma: discrete large set concentration} and Corollary \ref{cor1: small set}, we have $\Pb(\cE_1) \ge 1-\dt/2$ and $\Pb(\cE_2) \ge 1-\dt/2$, hence $\Pb(\cE_1 \cap \cE_2 ) \ge 1 - \dt$. Below we prove that when $\cE_1$ and $\cE_2 $ hold true, inequality \eqref{ineq: large set concentration} holds true.

		Note that for any $A\in \cA_{p,d} \setminus \cA_{p,d} ( \bar t(\dt/2))$, 
		\begin{equation}\label{ineq-T1T2T3}
			\begin{aligned}
				&
				\frac{1}{\sqrt{\Pb(X\in  A)}} \Big|   
				\frac{1}{n} \sum_{i=1}^n z_i 1_{\{x_i\in  A\}} - \Ex( z_1 1_{\{x_1\in A\}} )
				\Big| \\
				\le \ &
				\frac{1}{\sqrt{\Pb(X\in  A)}} \Big|   
				\frac{1}{n} \sum_{i=1}^n z_i 1_{\{x_i\in  A\}}  - 	\frac{1}{n} \sum_{i=1}^n z_i 1_{\{x_i\in  A' \}} 
				\Big| \\
				& + \frac{1}{\sqrt{\Pb(X\in  A)}} 
				\Big|   
				\frac{1}{n} \sum_{i=1}^n z_i 1_{\{x_i\in  A'\}} - \Ex( z_1 1_{\{x_1\in A'\}} )
				\Big| \\
				& + \frac{1}{\sqrt{\Pb(X\in  A)}} 
				\Big|  \Ex( z_1 1_{\{x_1\in A'\}} ) -  \Ex( z_1 1_{\{x_1\in A\}} )  \Big| \\
				:= \ & T_1 + T_2 + T_3
			\end{aligned}
		\end{equation}
		To bound $T_1$, we have 
		\begin{equation}\label{ineq-T1bound}
			\begin{aligned}
				T_1 & \  = \ 	\frac{1}{\sqrt{\Pb(X\in  A)}} \Big| \frac{1}{n} \sum_{i=1}^n z_i 1_{\{x_i \in A' \setminus A\}} \Big|  
				\ \le \ 
				\frac{V}{\sqrt{\Pb(X\in  A)}}   \Big( \frac{1}{n} \sum_{i=1}^n  1_{\{x_i \in A' \setminus A\}} \Big) \\
				& \ \le \ 
				\frac{V}{\sqrt{\Pb(X\in  A)}}   \max_{B\in \cB_{p,d}} \Big\{ \frac{1}{n}\sum_{i=1}^n  1_{\{x_i \in B\}}  \Big\} 
				\ \le \ 
				\frac{V}{\sqrt{\Pb(X\in  A)}} \bar t_2 (\dt/2)
				\ \le \ 
				V \sqrt{\bar t_2(\dt/2)}
			\end{aligned}
		\end{equation}
		where the second inequality makes use of Lemma \ref{lemma: AB} (1), and the third inequality is by $\cE_1$. 
		
		To bound $T_2$, note that 
		\begin{equation}\label{kk1}
			\begin{aligned}
				T_2 & =  \sqrt{\frac{\Pb(X\in A')}{\Pb(X\in A)}}   \frac{1}{\sqrt{\Pb(X\in  A')}} 
				\Big|   
				\frac{1}{n} \sum_{i=1}^n z_i 1_{\{x_i\in  A'\}} - \Ex( z_1 1_{\{x_1\in A'\}} )
				\Big| \\
				& \le  \sqrt{\frac{\Pb(X\in A')}{\Pb(X\in A)}} 2V \sqrt{\bar t_1(\dt/2)}
			\end{aligned}
		\end{equation}
		where the inequality is by event $\cE_2$ and because $A' \in \wtd \cA_{p,d}$ and $\Pb(X \in A') \ge \Pb(X\in A) \ge \bar t(\dt/2) \ge \bar t_1(\dt/2)$. Note that
		\begin{equation}\label{kk2}
			\Pb( X\in A' \setminus A) \le
			\frac{2\bar\theta d}{n} \le 
			\bar t_2(\dt/2) \le 
			\Pb(X\in A)
		\end{equation}
	where the first inequality is by Lemma~\ref{lemma: AB} (2); the second inequality is by the definition of $\hat t_2(\dt/2)$ in \eqref{def: ttt}; the third inequality is because $A \in \cA_{p,d} \setminus \cA_{p,d} ( \bar t(\dt/2))$. As a result of \eqref{kk1} and \eqref{kk2}, 
		 we have 
		\begin{equation}\label{ineq-T2bound}
			T_2 \le 
			\sqrt{\frac{\Pb(X\in A' \setminus A) +\Pb(X\in A) }{\Pb(X\in A)}} 2V \sqrt{\bar t_1(\dt/2)} \le 
			2\sqrt{2} V \sqrt{\bar t_1(\dt/2)}
		\end{equation}
		
		To bound $T_3$, note that 
		\begin{equation}\label{ineq-T3bound}
			\begin{aligned}
				T_3 & \ = \  \frac{1}{\sqrt{\Pb(X\in  A)}} 
				\Big|  \Ex( z_1 1_{\{x_1\in A' \setminus A\}} )   \Big|  \ \le \
				\frac{V}{\sqrt{\Pb(X\in  A)}} \Pb(X\in A' \setminus A) \\
				& \ \le \ 
				V \sqrt{ \Pb(X\in A' \setminus A)} \ \le \ 
				V \sqrt{2\bar\theta d/n}  \ \le \
				V \sqrt{\bar t_2 (\dt/2)}
			\end{aligned}
		\end{equation}
		The proof is complete by combining inequalities \eqref{ineq-T1T2T3}, \eqref{ineq-T1bound}, \eqref{ineq-T2bound} and \eqref{ineq-T3bound}, and note that
		\begin{equation}
			2V \sqrt{\bar t_2(\dt/2)} + 2\sqrt{2} V \sqrt{\bar t_1(\dt/2)} \le 5V \sqrt{\bar t(\dt/2)}. \nonumber
		\end{equation}
		
	\end{proof}

	\noindent
	Now we are ready to wrap up the proof of Lemma~\ref{lemma: key1}. 
	
	\subsubsection*{Completing the proof of Lemma~\ref{lemma: key1}}
	
	Define events $\cE_1$ and $\cE_2$:
	\begin{equation}
		\begin{aligned}
			\cE_1 & \ := \ \lt\{
			\sup_{A\in \cA_{p,d} \setminus \cA_{p,d}(\bar t(\dt/8) )} 
			\frac{1}{\sqrt{\Pb(X\in A)} }  \Big| \frac{1}{n} \sum_{i=1}^n 1_{\{x_i \in A\}} - \Pb(X\in A) \Big| \le 5 \sqrt{ \bar t(\dt/8) }
			\rt\} \\
			\cE_2 & \ := \ \lt\{
			\sup_{A\in \cA_{p,d} \setminus \cA_{p,d}(\bar t(\dt/8) )} 
			\frac{1}{\sqrt{\Pb(X\in A)} }  \Big| \frac{1}{n} \sum_{i=1}^n y_i 1_{\{x_i \in A\}} - \Ex(y_1 1_{\{x_1 \in A\}}   ) \Big| \le 5 U \sqrt{ \bar t(\dt/8) }
			\rt\} 
		\end{aligned}
		\nonumber
	\end{equation}
	Then by Lemma \ref{lemma: large set concentration} with $z_i = y_i$ and $z_i = 1$ respectively, we know that $\Pb(\cE_i) \ge 1- \dt/4$ for all $i=1,2$. So we know $\Pb( \cap_{i=1}^2 \cE_i ) \ge 1-\dt$. 
	Below we prove that inequality \eqref{ineq-key1} is true when $\cap_{i=1}^2 \cE_i$ hold. 
	
	Define $a :=   100 \bar t (\dt/8) $. Then it holds 
	\begin{equation}\label{ineq-small}
		\sup_{A\in \cA_{p,d}(a)} \sqrt{\Pb(X\in A)} \Big|   \Ex( f^*(X) | X\in A ) - \bar y_{\cI_A}   \Big| \le 
		2U \sqrt{a} =
		20U
		\sqrt{\bar t(\dt/8)}
	\end{equation}
	
	On the other hand, for any $A\in \cA_{p,d} \setminus \cA_{p,d}(a)$, by event $\cE_1$, we have 
	\begin{equation}\label{pb-ineq}
		\frac{1}{n} \sum_{i=1}^n 1_{\{x_i \in A\}} \ge \Pb( X\in A) - 5 \sqrt{\bar t(\dt/8)} \sqrt{ \Pb(X\in A) } \ge \frac{1}{2} \Pb(X\in A)
	\end{equation}
	where the second inequality is because $\Pb(X\in A) \ge a = 100 \bar t (\dt/8)$. Therefore we know $\sum_{i=1}^n 1_{\{x_i \in A\}} >0$, and we can write
	\begin{equation}\label{com0}
		\begin{aligned}
			\Ex( f^*(X)  | X\in A) - \bar y_{\cI_A}   = \ & 
			\frac{\Ex(y_1 1_{\{x_1\in A\}})}{ \Pb(X\in A) } - \frac{  \frac{1}{n} \sum_{i=1}^n  y_i  1_{\{x_i\in A\}}   }{     \frac{1}{n} \sum_{i=1}^n 1_{\{x_i\in A\}}  }  \\
			= \ &
			\frac{1}{\Pb(X\in A)} \lt(
			\Ex( y_1 1_{\{x_1\in A\}} )    -  \frac{1}{n} \sum_{i=1}^n y_i 1_{\{ x_i\in A \} }
			\rt) \\
			& + 
			\frac{\sum_{i=1}^n y_i 1_{\{x_i\in A\}} }{ \sum_{i=1}^n 1_{\{x_i\in A\}  } \Pb(X\in A)} 
			\lt( 
			\frac{1}{n} \sum_{i=1}^n 1_{\{x_i\in A\}} - \Pb( X\in A)
			\rt) \\
			:= \ & H_1 (A) + H_2 (A) 
		\end{aligned}
	\end{equation}
	By event $\cE_2$, and note that $a \ge \bar t(\dt/8)$, we have 
	\begin{equation}\label{com1}
		\sup_{A \in \cA_{p,d} \setminus \cA_{p,d} (a) } \sqrt{\Pb(X\in A)} | H_1(A) | 
		\ \le \ 5 U \sqrt{ \bar t(\dt/8) }
	\end{equation}
	By event $\cE_1$, and note that $a \ge \bar t(\dt/8)$, we have 
	\begin{equation}\label{com2}
		\sup_{A \in \cA_{p,d} \setminus \cA_{p,d} (a) } \sqrt{\Pb(X\in A)} | H_2(A) | 
		\ \le \ 5 U \sqrt{ \bar t(\dt/8) }
	\end{equation}
	
	Combining \eqref{com0}, \eqref{com1} and \eqref{com2}, we have 
	\begin{equation}
		\begin{aligned}
			\sup_{A \in \cA_{p,d} \setminus \cA_{p,d} (a) } \sqrt{\Pb(X\in A)} \Big|   \Ex( f^*(X) | X\in A ) - \bar y_{\cI_A}   \Big| 
			& \le 
			10 U \sqrt{\bar t(\dt/8)} 
		\end{aligned}
		\nonumber
	\end{equation}
	Combining the inequality above with \eqref{ineq-small} we have
	\begin{equation}
		\sup_{A \in \cA_{p,d} } \sqrt{\Pb(X\in A)} \Big|   \Ex( f^*(X) | X\in A ) - \bar y_{\cI_A}   \Big| \le 
		20U\sqrt{\bar t(\dt/8)} \le 
		20U\sqrt{\bar t(\dt/12)} 
		\nonumber
	\end{equation}

	\subsection{Proof of Lemma~\ref{lemma: key2}}\label{section: proof of lemma key2}

	Define $a:= \bar t(\dt/4)$ and $b := a+ \frac{2\bar\theta d}{n}$. Define events $\cE_1 $ and $\cE_2$:
	\begin{equation}
		\begin{aligned}
			\cE_1 & := \lt\{   
			\max_{A\in \wtd \cA_{p,d} (b)} \frac{1}{n} \sum_{i=1}^n 1_{\{x_i\in A\}} \le (e^2b) \vee 
			\frac{ \log( 2(n+1)^{2d}p^d /\dt) }{n}
			\rt\} \\
			\cE_2 &:= \lt\{
			\sup_{A\in \cA_{p,d} \setminus \cA_{p,d}(a) }
			\frac{1}{\sqrt{\Pb(X\in A)}} \Big|  
			\Pb(X\in A) - \frac{1}{n} \sum_{i=1}^n 1_{\{x_i\in A\}}
			\Big| \le  5 \sqrt{\bar t(\dt/4)}
			\rt\}
		\end{aligned}
	\nonumber
	\end{equation}
	Then by Lemmas \ref{lemma: small set} and \ref{lemma: large set concentration}, we know that $ \Pb(\cE_1) \ge 1-\dt/2$ and $\Pb(\cE_2) \ge 1-\dt/2$, so $\Pb(\cE_1 \cap \cE_2 ) \ge 1-\dt$. Below we prove \eqref{ineq: square-root-concentration} when $\cE_1 \cap \cE_2$ holds. 
	
	For $A\in \cA_{p,d}$, if $\Pb(X\in A) \le a$, then $\Pb(A') \le a + \frac{2\bar\theta d}{n} = b$. So we have
	\begin{equation}\label{kk3}
		\begin{aligned}
			\sup_{A\in \cA_{p,d}(a)}\frac{1}{n} \sum_{i=1}^n 1_{\{x_i\in A\}} 
			&\le 
			\sup_{A\in \cA_{p,d}(a)}\frac{1}{n} \sum_{i=1}^n 1_{\{x_i\in A'\}} \le 
			\sup_{\td A\in \wtd \cA_{p,d}(b)}\frac{1}{n} \sum_{i=1}^n 1_{\{x_i\in \td A\}} \\
			&\le 
			\Big(   e^2 \bar t(\dt/4) + 2e^2\bar\theta d/n   \Big) \vee \frac{ \log( 2(n+1)^{2d}p^d /\dt) }{n} \\
			&\le 
			(e^2+1) \bar t(\dt/4)  \ \le \ 25 \bar t(\dt/4)
		\end{aligned}
	\end{equation}
	where the third inequality is by event $\cE_1$ and the definition of $b$; the fourth inequality is because
	\begin{equation}
		\bar t(\dt/4) \ge \bar t_1 (\dt/4) \ge \frac{1}{n} \log( 2p^d(n+1)^{2d} /\dt)  \quad \text{and} \quad
		\bar t(\dt/4) \ge \bar t_2 (\dt/4) \ge 2e^2\bar\theta d/n \ . \nonumber 
	\end{equation}
	As a result, we have
	\begin{equation}\label{ineq-1}
		\begin{aligned}
			&\sup_{A\in \cA_{p,d}(a)} \lt|  \sqrt{\Pb(X\in A)} - \sqrt{\frac{1}{n} \sum_{i=1}^n 1_{\{x_i\in A\}} } \ \rt| \\
			\le &
			\sup_{A\in \cA_{p,d}(a)} 
			\max \lt\{   \sqrt{\Pb(X\in A)} ,   \sqrt{\frac{1}{n} \sum_{i=1}^n 1_{\{x_i\in A\}} }  \rt\} 
			\le
			5  \sqrt{\bar t(\dt/4)}
		\end{aligned}
	\end{equation}
where the second inequality made use of \eqref{kk3}. 
	
	On the other hand, 
	\begin{equation}\label{ineq-2}
		\begin{aligned}
			&
			\sup_{A\in \cA_{p,d}\setminus \cA_{p,d}(a)} \lt|  \sqrt{\Pb(X\in A)} - \sqrt{\frac{1}{n} \sum_{i=1}^n 1_{\{x_i\in A\}} } \ \rt| \\
			= \ &
			\sup_{A\in \cA_{p,d}\setminus \cA_{p,d}(a)} \frac{\Big|
				\Pb(X\in A) - \frac{1}{n} \sum_{i=1}^n 1_{\{x_i\in A\}}
				\Big| }{ \sqrt{\Pb(X\in A)} +  \sqrt{\frac{1}{n} \sum_{i=1}^n 1_{\{x_i\in A\} }   }    }  \\
			\le \ & 
			\sup_{A\in \cA_{p,d}\setminus \cA_{p,d}(a)} \frac{1}{\sqrt{\Pb(X\in A)}}\Big|
			\Pb(X\in A) - \frac{1}{n} \sum_{i=1}^n 1_{\{x_i\in A\}}
			\Big| \ \le \ 5\sqrt{\bar t(\dt/4)}
		\end{aligned}
	\end{equation}
where the last inequality is by event $\cE_2$. 

	Combining \eqref{ineq-1} and \eqref{ineq-2} the proof is complete. 
	
	
	\section{Proofs in Section~\ref{sec:function-classes-SID}}\label{appsec:function-classes-with-SID}
		For any interval $E\in \cE$ and any univariate function $g$ on $[0,1]$, let $V_g(E)$ be the
		 total variation of $g$ on $E$.  
		For the additive model \eqref{additive-model} and a rectangle $A = \prod_{j=1}^p E_j \in \cA$, 
		define $V_{f^*} (A) = \sum_{j=1}^p V_{f_j^*} (E_j)$. 
	Recall that $X$ is a random variable with the same distribution as $x_i$, and $X^{(j)}$ is the $j$-th coordinate of $X$.

	\subsection{Technical lemmas}
	
	\begin{lemma}\label{lemma:population-gap-eq}
		For any rectangle $A\subseteq [0,1]^p$, any $j\in [p]$ and any $b\in \R$, it holds 
		\begin{equation}
			\Dt(A,j,b) = 
			\Big( \Ex(f^*(X) 1_{\{X\in A_R\}}) - \Ex (f^*(X)| X\in A) \Pb(X\in A_R)   \Big)^2 \frac{ \Pb(X\in A) }{ \Pb(X\in A_L) \Pb(X\in A_R) }
			\nonumber
		\end{equation}
		where $A_L = A_L(j,b)$ and $A_R = A_R(j,b)$. 
	\end{lemma}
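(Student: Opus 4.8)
The plan is to reduce the identity to elementary algebra by first invoking the population decomposition already available from Lemma~\ref{lemma: key-impurity-decrease-identity}. For brevity write $p_A = \Pb(X\in A)$, $p_L = \Pb(X\in A_L)$, $p_R = \Pb(X\in A_R)$, and note $p_A = p_L + p_R$ since $A_L$ and $A_R$ partition $A$. Let $\mu_A = \Ex(f^*(X)\mid X\in A)$, $\mu_L = \Ex(f^*(X)\mid X\in A_L)$, $\mu_R = \Ex(f^*(X)\mid X\in A_R)$ be the corresponding conditional means. I will treat the non-degenerate case $p_L p_R > 0$; when $p_L p_R = 0$ the split is vacuous and both sides are read as $0$, so nothing is lost.

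By the first identity in Lemma~\ref{lemma: key-impurity-decrease-identity} and the definitions of $\Dt_L$, $\Dt_R$,
\begin{equation}
\Dt(A,j,b) = \Dt_L(A,j,b) + \Dt_R(A,j,b) = p_L(\mu_A - \mu_L)^2 + p_R(\mu_A - \mu_R)^2 . \nonumber
\end{equation}
Next I would use the tower property $p_A\mu_A = p_L\mu_L + p_R\mu_R$ to express both deviations through the single gap $\mu_R - \mu_L$: subtracting $p_A\mu_L$ and $p_A\mu_R$ respectively gives $\mu_A - \mu_L = \frac{p_R}{p_A}(\mu_R - \mu_L)$ and $\mu_A - \mu_R = \frac{p_L}{p_A}(\mu_L - \mu_R)$. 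Substituting these into the display above and pulling out the common factor,
\begin{equation}
\Dt(A,j,b) = \frac{(\mu_R - \mu_L)^2}{p_A^2}\big(p_L p_R^2 + p_R p_L^2\big) = \frac{p_L p_R}{p_A}(\mu_R - \mu_L)^2 , \nonumber
\end{equation}
where the last step uses $p_L + p_R = p_A$.

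Finally I would check that the right-hand side of the asserted formula equals the same quantity. Since $\Ex(f^*(X)1_{\{X\in A_R\}}) = p_R\mu_R$, the bracket in the statement is $p_R\mu_R - \mu_A p_R = p_R(\mu_R - \mu_A)$, and $\mu_R - \mu_A = \frac{p_L}{p_A}(\mu_R - \mu_L)$ by the identity above; hence the bracket equals $\frac{p_L p_R}{p_A}(\mu_R - \mu_L)$. Squaring it and multiplying by $\frac{p_A}{p_L p_R}$ recovers $\frac{p_L p_R}{p_A}(\mu_R - \mu_L)^2 = \Dt(A,j,b)$, which is exactly the claim. I do not expect any real obstacle: the content is entirely in the two conditional-mean rewritings coming from the tower property, and the rest is bookkeeping; the only point requiring a word of care is the degenerate split $p_L p_R = 0$, handled by the convention stated above.
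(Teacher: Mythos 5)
Your proposal is correct and follows essentially the same route as the paper: both reduce $\Dt(A,j,b)$ to the between-group decomposition $p_L(\mu_A-\mu_L)^2+p_R(\mu_A-\mu_R)^2$ (you cite Lemma~\ref{lemma: key-impurity-decrease-identity} for it, the paper rederives it in-line) and then finish with the tower-property algebra $p_A\mu_A=p_L\mu_L+p_R\mu_R$. The only cosmetic difference is that you route everything through the gap $\mu_R-\mu_L$ while the paper expresses $\Dt_L$ directly in terms of $(\mu_R-\mu_A)^2$; both computations are correct and equivalent.
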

\begin{proof}
	We use the notations $\nu := \Ex (f^*(X)| X\in A)$, $\nu_{L}:= \Ex (f^*(X)| X\in A_L)$ and $\nu_{R}:= \Ex (f^*(X)| X\in A_R)$. First, note that
	\begin{equation}\label{eq1}
		\begin{aligned}
			&
		\Ex\lt( ( f^*(X) - \nu )^2 1_{\{ X\in A_L \}} \rt)		 = \ 
		\Ex\lt( ( f^*(X) - \nu_L + \nu_L - \nu )^2 1_{\{ X\in A_L \}} \rt)	\\
		= \ & \Ex\lt( (f^*(X)-\nu_L)^2 1_{\{X\in A_L\}} \rt) + (\nu_L - \nu)^2 \Pb(X\in A_L)
		\end{aligned}
	\end{equation}
Similarly, we have
\begin{equation}\label{eq2}
		\Ex\lt( ( f^*(X) - \nu )^2 1_{\{ X\in A_R \}} \rt)  = \ 
		\Ex\lt( (f^*(X)-\nu_R)^2 1_{\{X\in A_R\}} \rt) + (\nu_R - \nu)^2 \Pb(X\in A_R)
\end{equation}
Summing up \eqref{eq1} and \eqref{eq2} we have
\begin{equation}\label{eq3}
	\begin{aligned}
		& 
		 (\nu_L - \nu)^2 \Pb(X\in A_L) +  (\nu_R - \nu)^2 \Pb(X\in A_R) \\
	= \ &	
		\Ex\lt( ( f^*(X) - \nu )^2 1_{\{ X\in A \}} \rt)  
	- \Ex\lt( (f^*(X)-\nu_L)^2 1_{\{X\in A_L\}} \rt) 
	- \Ex\lt( (f^*(X)-\nu_R)^2 1_{\{X\in A_R\}} \rt) \\
	= \ & \Dt(A,j,b)
	\end{aligned}
\end{equation}
Note that 
\begin{equation}\label{eq4}
	\begin{aligned}
		&
	(\nu_L - \nu)^2 \Pb(X\in A_L)  = \  \Big(
	\Ex( f^*(X) 1_{\{X\in A_L\}} ) - \nu \Pb(X\in A_L) 
	\Big)^2 ( \Pb(X\in A_L)  )^{-1} \\
	= \ & 
	\Big(
	\nu \Pb(X\in A) -
	\Ex( f^*(X) 1_{\{X\in A_R\}} ) - \nu \Pb(X\in A_L) 
	\Big)^2 ( \Pb(X\in A_L)  )^{-1} \\
	= \ & 
		\Big(
	\nu \Pb(X\in A_R) -
	\Ex( f^*(X) 1_{\{X\in A_R\}} ) 
	\Big)^2 ( \Pb(X\in A_L)  )^{-1} \\
	= \ & 
	(\nu_R - \nu)^2 \frac{(\Pb(X\in A_R))^2}{ \Pb(X\in A_L) } 
	\end{aligned}
\end{equation}
Combining \eqref{eq3} and \eqref{eq4} we have 
\begin{equation}
	\begin{aligned}
	\Dt(A,j,b) = \ & 
(\nu_R - \nu)^2 \frac{(\Pb(X\in A_R))^2}{ \Pb(X\in A_L) }  +  (\nu_R - \nu)^2 \Pb(X\in A_R)	= \
(\nu_R - \nu)^2 \frac{\Pb(X\in A_R) \Pb(X\in A) }{ \Pb(X\in A_L) }  
	\\
= \ & 
\Big( \Ex(f^*(X) 1_{\{X\in A_R\}}) - \nu \Pb(X\in A_R)   \Big)^2 \frac{ \Pb(X\in A) }{ \Pb(X\in A_L) \Pb(X\in A_R) }
	\end{aligned}
\nonumber
\end{equation}

\end{proof}

	\begin{lemma}\label{lemma:population-gap-ineq}
		Suppose Assumption~\ref{ass: basics} holds true, and $f^*$ has the additive structure in \eqref{additive-model}. Then
		for any $ A= \prod_{j=1}^p [\ell_j, u_j] \subseteq [0,1]^p$, it holds 
		\begin{equation}
			\max_{j\in[p], b\in \R}\sqrt{	\Dt (A, j, b)} \ge  \frac{ \sqrt{ \Pb(X\in A) } \Var(f^*(X)  | X\in A ) }{ \sum_{k=1}^p \int_{\ell_k}^{u_k} \sqrt{ q_A^{(k)}(t) ( 1- q_A^{(k)}(t) ) } d V_{f^*_k} ([\ell_j, t]) }
			\nonumber
		\end{equation}
		where $q_A^{(k)} (t) := \Pb( X^{(k)} \le t |  x_1\in A )$. 
	\end{lemma}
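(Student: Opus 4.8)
The plan is to use Lemma~\ref{lemma:population-gap-eq} to rewrite each population impurity decrease $\Dt(A,j,b)$ in terms of a single conditional mean, and then to exploit the additive structure of $f^*$ to express the conditional variance $\Var(f^*(X)\mid X\in A)$ as a sum of one-dimensional integrals, each dominated by that conditional mean.

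\textbf{Step 1 (rewriting $\Dt$).} Write $\nu := \Ex(f^*(X)\mid X\in A)$, and for $j\in[p]$, $b\in\R$ set $h_j(b) := \Ex\big((f^*(X)-\nu)\,1_{\{X^{(j)}>b\}}\mid X\in A\big)$. Since $1_{\{X\in A_R\}} = 1_{\{X\in A\}}\,1_{\{X^{(j)}>b\}}$ one gets $\Ex(f^*(X)1_{\{X\in A_R\}}) - \nu\,\Pb(X\in A_R) = \Pb(X\in A)\,h_j(b)$, while $\Pb(X\in A_L) = \Pb(X\in A)\,q_A^{(j)}(b)$ and $\Pb(X\in A_R) = \Pb(X\in A)\,(1-q_A^{(j)}(b))$. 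Plugging these into Lemma~\ref{lemma:population-gap-eq} gives
\[
\Dt(A,j,b) \;=\; \Pb(X\in A)\,\frac{h_j(b)^2}{q_A^{(j)}(b)\,\big(1-q_A^{(j)}(b)\big)},
\]
hence $\sqrt{\Dt(A,j,b)} = \sqrt{\Pb(X\in A)}\;|h_j(b)|\big/\sqrt{q_A^{(j)}(b)(1-q_A^{(j)}(b))}$ whenever $q_A^{(j)}(b)\in(0,1)$ (and both sides vanish when $q_A^{(j)}(b)\in\{0,1\}$, since then $h_j(b)=0$). Let $D$ be the supremum of $|h_j(b)|\big/\sqrt{q_A^{(j)}(b)(1-q_A^{(j)}(b))}$ over $j\in[p]$ and $b$ with $q_A^{(j)}(b)\in(0,1)$; then $\max_{j\in[p],b\in\R}\sqrt{\Dt(A,j,b)} = \sqrt{\Pb(X\in A)}\,D$, and it suffices to show $\Var(f^*(X)\mid X\in A) \le D\sum_{k=1}^p\int_{\ell_k}^{u_k}\sqrt{q_A^{(k)}(t)(1-q_A^{(k)}(t))}\,dV_{f_k^*}([\ell_k,t])$.

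\textbf{Step 2 (variance as coordinatewise integrals).} Using the additive form, $f^*(X)-\nu = \sum_k (f_k^*(X^{(k)})-\nu_k)$ with $\nu_k := \Ex(f_k^*(X^{(k)})\mid X\in A)$. Fix $k$, and (for concreteness) suppose $f_k^*$ is differentiable with integrable derivative, as in the LRP applications; then $\int_{\ell_k}^{X^{(k)}}(f_k^*)'(t)\,dt = f_k^*(X^{(k)})-f_k^*(\ell_k)$ since $X^{(k)}\in[\ell_k,u_k]$, and Fubini's theorem (legitimate because $f^*$ is bounded and $(f_k^*)'$ is integrable) yields
\[
\int_{\ell_k}^{u_k} h_k(t)\,(f_k^*)'(t)\,dt \;=\; \Ex\!\Big[(f^*(X)-\nu)\!\int_{\ell_k}^{X^{(k)}}\!(f_k^*)'(t)\,dt \ \Big|\ X\in A\Big] \;=\; \Ex\big[(f^*(X)-\nu)\,f_k^*(X^{(k)})\mid X\in A\big],
\]
where the last step drops the constant $f_k^*(\ell_k)$ using $\Ex(f^*(X)-\nu\mid X\in A)=0$. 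Summing over $k$ and invoking the additive form once more gives $\sum_{k=1}^p\int_{\ell_k}^{u_k} h_k(t)(f_k^*)'(t)\,dt = \Ex[(f^*(X)-\nu)f^*(X)\mid X\in A] = \Var(f^*(X)\mid X\in A)$.

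\textbf{Step 3 (conclusion) and the main difficulty.} Since the quantity computed in Step 2 is nonnegative, taking absolute values termwise, using $|h_k(t)|\le D\sqrt{q_A^{(k)}(t)(1-q_A^{(k)}(t))}$ for all $t$ (immediate from Step 1, and trivially true where $q_A^{(k)}(t)\in\{0,1\}$ since there $h_k(t)=0$), and noting $dV_{f_k^*}([\ell_k,t])=|(f_k^*)'(t)|\,dt$, we obtain $\Var(f^*(X)\mid X\in A)\le D\sum_k\int_{\ell_k}^{u_k}\sqrt{q_A^{(k)}(t)(1-q_A^{(k)}(t))}\,dV_{f_k^*}([\ell_k,t])$; rearranging for $D$ and multiplying through by $\sqrt{\Pb(X\in A)}$ gives the claim. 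The technical heart is Step 2: justifying the exchange of expectation and integration, and — if one wishes to allow merely bounded-variation (possibly discontinuous) components $f_k^*$ — replacing $(f_k^*)'\,dt$ by the Stieltjes measure $df_k^*$, checking that the endpoint contributions vanish (indeed $h_k(\ell_k)=h_k(u_k)=0$) and that atoms of $df_k^*$ cause no trouble against the law of $X^{(k)}$. Steps 1 and 3 are then routine algebra.
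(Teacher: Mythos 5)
Your proof is correct and follows essentially the same route as the paper's: both rest on Lemma~\ref{lemma:population-gap-eq} to express $\sqrt{\Dt(A,j,b)}$ through the centered conditional mean $h_j(b)$, the Fubini/integration-by-parts identity $\sum_k\int h_k\,df_k^* = \Var(f^*(X)\mid X\in A)$, and the bound $|h_k(t)|\le D\sqrt{q_A^{(k)}(t)(1-q_A^{(k)}(t))}$. The only cosmetic difference is that you work with a single global supremum $D$ over all $(j,b)$, whereas the paper first bounds $\max_b\sqrt{\Dt(A,j,b)}$ by a weighted average coordinate-by-coordinate and then combines the coordinates via the mediant inequality; your remarks on the Stieltjes/jump extension match what the paper implicitly assumes.
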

	\begin{proof}
		For a fixed $ A= \prod_{j=1}^p [\ell_j, u_j] \subseteq [0,1]^p$, without loss of generality, assume $\Ex( f^*(X) | X\in A ) = 0$. 
		Note that for any $j\in [p]$, 
		\begin{equation}\label{v1}
			\begin{aligned}
				\max_{ b\in \R}\sqrt{	\Dt (A, j, b)} \ge & 
				\frac{
					\int_{\ell_j}^{u_j} \sqrt{ q_A^{(j)}(s)   ( 1-q_A^{(j)}(s) ) } 	\sqrt{	\Dt (A, j, s)} \ d V_{f^*_j} ([\ell_j, s]) 
				}{ \int_{\ell_j}^{u_j} \sqrt{ q_A^{(j)}(s)   ( 1-q_A^{(j)}(s) ) } \ d V_{f^*_j} ([\ell_j, s])  }
			\end{aligned}
		\end{equation}
	where $s$ the integration variable. 
		Because $q_A^{(j)}(s) = \Pb(X\in A_L(j,s))/ \Pb(X\in A)$, using Lemma~\ref{lemma:population-gap-eq} and recall that we have assumed $\Ex( f^*(X) | X\in A ) = 0$, we have
		\begin{equation}
			\begin{aligned}
				&
				\int_{\ell_j}^{u_j} \sqrt{ q_A^{(j)}(s)   ( 1-q_A^{(j)}(s) ) } 	\sqrt{	\Dt (A, j, s)} \ d V_{f^*_j} ([\ell_j, s]) \\
				= \ &
				\frac{1}{\sqrt{\Pb(X\in A)}} 
				\int_{\ell_j}^{u_j}  \Big| \Ex(f^*(X) 1_{\{ X\in A_R(j,s)\}}) \Big| \ d V_{f^*_j} ([\ell_j, s]) \\
				= \ & 
				\frac{1}{\sqrt{\Pb(X\in A)}} 
				\int_{\ell_j}^{u_j}  \Big| \Ex(f^*(X) 1_{\{ X\in A\}} 1_{\{ X^{(j)} >s\}} ) \Big| \ d V_{f^*_j} ([\ell_j, s]) \\
				\ge \ & 
				\frac{1}{\sqrt{\Pb(X\in A)}}  \Big|
				\int_{\ell_j}^{u_j}  \Ex(f^*(X) 1_{\{ X\in A\}} 1_{\{ X^{(j)} >s\}} ) \ d f_j^*(s)\Big|  \\
				= \ & 
					\frac{1}{\sqrt{\Pb(X\in A)}}  \Big|
				\Ex(f^*(X) (f_j^*(X^{(j)}) - f_j^*(\ell_j)) 1_{\{ X\in A\}}  ) \Big| \\
				= \ &
				\frac{1}{\sqrt{\Pb(X\in A)}}  \Big|
				\Ex(f^*(X) f_j^*(X^{(j)} ) 1_{\{ X\in A\}}  ) \Big| 
			\end{aligned}
		\nonumber
		\end{equation}
	where the last equality makes use of the assumption that $\Ex( f^*(X) | X\in A ) = 0$. 
		Combining the inequality above with \eqref{v1}, we have 
		\begin{equation}
			\begin{aligned}
				\max_{ b\in \R}\sqrt{	\Dt (A, j, b)} \ge & 
				\frac{1}{\sqrt{\Pb(X\in A)}}
				\frac{
					\Big|
					\Ex(f^*(X) f_j^*(X^{(j)} ) 1_{\{ X\in A\}}  ) \Big| 
				}{ \int_{\ell_j}^{u_j} \sqrt{ q_A^{(j)}(s)   ( 1-q_A^{(j)}(s) ) } \ d V_{f^*_j} ([\ell_j, s])  }
			\end{aligned}
		\nonumber
		\end{equation}
		As a result, we have 
		\begin{equation}\label{v2}
			\max_{j\in[p], b\in \R} \sqrt{	\Dt (A, j, b)} 
			\ge \ 
			\frac{1}{\sqrt{\Pb(X\in A)}} 
			\frac{ 	\sum_{j=1}^p 
				\Big|
				\Ex(f^*(X) f_j^*(X^{(j)} ) 1_{\{ X\in A\}}  ) \Big| 
			}{ 
				\sum_{j=1}^p \int_{\ell_j}^{u_j} \sqrt{ q_A^{(j)}(s)   ( 1-q_A^{(j)}(s) ) } \ d V_{f^*_j} ([\ell_j, s])  }
		\end{equation}
		By the additive structure \eqref{additive-model} we have
		\begin{equation}\label{v3}
			\sum_{j=1}^p 
			\Big|
			\Ex(f^*(X) f_j^*(X^{(j)} ) 1_{\{ X\in A\}}  ) \Big| \ge 
			\Ex( (f^*(X))^2 1_{\{ X\in A\}} ) = \Pb(X\in A) \Var ( f^*(X) | X\in A)
		\end{equation}
		Combining \eqref{v2} and \eqref{v3}, the proof is complete. 
	\end{proof}

	\begin{lemma}\label{lemma:super-LIB}
		Suppose Assumption~\ref{ass: basics} holds true, and $f^*$ has the additive structure in \eqref{additive-model}. If for any $ A= \prod_{j=1}^p [\ell_j, u_j] \subseteq [0,1]^p$ and any $k\in [p]$ it holds
		\begin{equation}\label{Super-LIB}
			\Big(\int_{\ell_k}^{u_k} \sqrt{ q_A^{(k)}(t) ( 1- q_A^{(k)}(t) ) } \ d V_{f^*_k} ([\ell_k,t]) \Big)^2
			\le 
			\frac{\tau^2}{u_k-\ell_k}
			\inf_{w\in \R}
			\int_{\ell_k}^{u_k} | f_k^*(t) -w |^2 \ dt
		\end{equation}
		Then Assumption~\ref{ass: SID} is satisfied with $\lam = \ubar\theta / (p\tau^2 \bar\theta)$. 
	\end{lemma}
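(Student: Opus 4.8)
The plan is to combine the lower bound on the best impurity decrease furnished by Lemma~\ref{lemma:population-gap-ineq} with the hypothesis \eqref{Super-LIB} and a change-of-measure argument comparing the conditional law of $X$ on a rectangle with the uniform law on that rectangle.

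Fix a rectangle $A = \prod_{j=1}^p [\ell_j, u_j]$; I may assume $\Pb(X\in A)>0$ and $\Var(f^*(X)\mid X\in A)>0$, since \eqref{ineq: SID} is otherwise immediate. For $k\in[p]$ write $\sigma_k^2 := \frac{1}{u_k-\ell_k}\inf_{w\in\R}\int_{\ell_k}^{u_k}|f_k^*(t)-w|^2\,dt$, i.e. the variance of $f_k^*$ under the uniform law on $[\ell_k,u_k]$. Hypothesis \eqref{Super-LIB} says exactly that $\int_{\ell_k}^{u_k}\sqrt{q_A^{(k)}(t)(1-q_A^{(k)}(t))}\,dV_{f_k^*}([\ell_k,t]) \le \tau\sigma_k$ for each $k$. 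Plugging this termwise into Lemma~\ref{lemma:population-gap-ineq} and applying Cauchy--Schwarz to the resulting sum gives
\[
\max_{j\in[p],b\in\R}\Dt(A,j,b) \ \ge\ \frac{\Pb(X\in A)\,\Var(f^*(X)\mid X\in A)^2}{\big(\sum_{k=1}^p \tau\sigma_k\big)^2} \ \ge\ \frac{\Pb(X\in A)\,\Var(f^*(X)\mid X\in A)^2}{p\,\tau^2\sum_{k=1}^p\sigma_k^2}.
\]
So it remains to prove $\sum_{k=1}^p\sigma_k^2 \le (\bar\theta/\ubar\theta)\,\Var(f^*(X)\mid X\in A)$, which I would then substitute in to obtain \eqref{ineq: SID} with $\lam = \ubar\theta/(p\tau^2\bar\theta)$.

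For that last inequality, let $U_A$ be the uniform distribution on $A$ and $\mu_A$ the conditional law of $X$ given $X\in A$. Under $U_A$ the coordinates $X^{(1)},\dots,X^{(p)}$ are independent, so the additive structure \eqref{additive-model} yields $\Var_{U_A}(f^*) = \sum_{k=1}^p\sigma_k^2$. On the other hand, Assumption~\ref{ass: basics}(i) gives the pointwise bound $\frac{dU_A}{d\mu_A}(u) = \frac{\Pb(X\in A)}{|A|\,p_X(u)} \le \frac{\bar\theta}{\ubar\theta}$ for $u\in A$ (using $\Pb(X\in A)\le\bar\theta|A|$ and $p_X\ge\ubar\theta$, with $|A|$ the Lebesgue measure of $A$); hence, taking $c^* := \Ex(f^*(X)\mid X\in A)$,
\[
\Var_{U_A}(f^*) \le \int_A (f^*-c^*)^2\,dU_A \le \frac{\bar\theta}{\ubar\theta}\int_A (f^*-c^*)^2\,d\mu_A = \frac{\bar\theta}{\ubar\theta}\,\Var(f^*(X)\mid X\in A).
\]
Combining the last two displays completes the argument, and since $A$ was arbitrary, Assumption~\ref{ass: SID} holds with $\lam = \ubar\theta/(p\tau^2\bar\theta)$.

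The one genuinely substantive step is the variance comparison in the last paragraph: replacing the true conditional measure by the uniform measure on $A$ is what makes the additive structure exploitable (independence of coordinates under $U_A$, so that the variance of the sum equals the sum of the one-dimensional variances $\sigma_k^2$), and the price of this swap is precisely the density ratio $\bar\theta/\ubar\theta$. Everything else — invoking Lemma~\ref{lemma:population-gap-ineq}, the termwise application of \eqref{Super-LIB}, Cauchy--Schwarz, and the identification of $\inf_w\int|f_k^*-w|^2$ with $(u_k-\ell_k)\sigma_k^2$ — is routine.
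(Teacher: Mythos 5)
Your proposal is correct and follows essentially the same route as the paper's proof: both rest on Lemma~\ref{lemma:population-gap-ineq}, Cauchy--Schwarz over the $p$ coordinates, and the comparison $\sum_k \sigma_k^2 \le (\bar\theta/\ubar\theta)\,\Var(f^*(X)\mid X\in A)$, which the paper derives by expanding $\int_A (f^*)^2$ under Lebesgue measure (the cross terms vanish by additivity) and bounding the density on both sides — exactly your change-of-measure argument between $U_A$ and $\mu_A$ in different notation.
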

	\begin{proof}
		Given $ A= \prod_{j=1}^p [\ell_j, u_j] \subseteq [0,1]^p$, without loss of generality, assume $\Ex(f^*(X)|X\in A) = 0$. Let $p_X(\cdot)$ be the density of $X$ on $[0,1]^p$. 
		Then we have
		\begin{equation}\label{vv1}
			\begin{aligned}
				\Var( f^*(X) | X\in A ) = \frac{1}{ \Pb(X\in A) } \int_A (f^*(z))^2 p_X(z) \ dz
				\ge
				\frac{\ubar\theta}{ \Pb(X\in A) } \int_A (f^*(z))^2  \ dz
			\end{aligned}
		\end{equation}
	where the second inequality made use of Assumption~\ref{ass: basics} $(i)$. 
		Denote $c_j := \frac{1}{u_j-\ell_j} \int_{\ell_j}^{u_j} f^*_j(t) dt$ and $c :=\sum_{j=1}^p c_j $, then we have 
		\begin{equation}
			\begin{aligned}
				\int_A (f^*(z))^2  \ dz & \ = \int_A \Big(  c + \sum_{j=1}^p f_j^*(z_j) - c_j  \Big)^2 \ dz_1 dz_2 \cdots dz_p \\
				& \ = \int_A c^2 + \sum_{j=1}^p (f_j^*(z_j) - c_j )^2 \ dz_1 dz_2 \cdots dz_p \\
				& \ \ge 
				\sum_{j=1}^p \frac{ \prod_{k=1}^p (u_k-\ell_k) }{u_j-\ell_j} \int_{\ell_j}^{u_j} ( f_j^*(t) - c_j )^2 \ dt \\
				& \ \ge \frac{\Pb(X\in A)}{\bar\theta}  \sum_{j=1}^p \frac{ 1 }{u_j-\ell_j} \int_{\ell_j}^{u_j} ( f_j^*(t) - c_j )^2 \ dt 
			\end{aligned}
   \nonumber
		\end{equation}
		Combining the inequality above with \eqref{vv1} we have 
		\begin{equation}\label{ineq---1}
			\Var( f^*(X) | X\in A ) \ge \frac{\ubar\theta}{\bar\theta} \sum_{j=1}^p \frac{ 1 }{u_j-\ell_j} \int_{\ell_j}^{u_j} ( f_j^*(t) - c_j )^2 \ dt 
		\end{equation}
		We use $H_k^2$ to denote the LHS of \eqref{Super-LIB}, then \eqref{Super-LIB} implies
		\begin{equation}\label{ineq---2}
			\frac{ 1 }{u_j-\ell_j} \int_{\ell_j}^{u_j} | f_j^*(t) - c_j |^2 \ dt  \ge 		 
			\frac{1}{\tau^2} H_j^2
		\end{equation}
		As a result of \eqref{ineq---1} and \eqref{ineq---2}, we have 
		\begin{equation}\label{ineq---3}
			\Var( f^*(X) | X\in A ) \ge  \frac{\ubar\theta}{\bar\theta \tau^2} \sum_{j=1}^p H_k^2
		\end{equation}
		By Lemma~\ref{lemma:population-gap-ineq} we have 
		\begin{equation}
			\begin{aligned}
				\max_{j\in[p], b\in \R}	\Dt (A, j, b) & \ \ge   \frac{  \Pb(X\in A)  \Var(f^*(X)  | X\in A )^2 }{ (\sum_{k=1}^pH_k)^2 } \\
				& \ \ge 
				\frac{\ubar\theta}{\bar\theta \tau^2}  
				\frac{ \sum_{j=1}^p H_k^2 }{ (\sum_{k=1}^pH_k)^2}		
				\Pb(X\in A)  \Var(f^*(X)  | X\in A ) \\
				&\ \ge \frac{\ubar\theta}{ p \bar\theta \tau^2}  \Pb(X\in A)  \Var(f^*(X)  | X\in A )
			\end{aligned}
			\nonumber
		\end{equation}
	where the second inequality is by \eqref{ineq---3}, and the last inequality made use of the Cauchy-Schwarz inequality.

	\end{proof}

\subsection{Proof of Proposition~\ref{prop:SID-sufficient1}}

		For any $ A= \prod_{j=1}^p [\ell_j, u_j] \subseteq [0,1]^p$ and any $k\in [p]$ it holds
		\begin{equation}
			\begin{aligned}
				&\Big(	\int_{\ell_k}^{u_k} \sqrt{ q_A^{(k)}(t) ( 1- q_A^{(k)}(t) ) } \ d V_{f^*_k} ([\ell_k,t]) \Big)^2
				\le 
				\frac{1}{4} \Big(
				\int_{\ell_k}^{u_k}  |(f_k^*)'(t)| \ d t \Big)^2 \\
				\le \ &
				\frac{\tau^2/4}{u_k-\ell_k} \inf_{w\in \R} \int_{\ell_k}^{u_k} |f^*_k(t) - w|^2 \ {\rm d} t  
				\nonumber
			\end{aligned}
		\end{equation}
		where the first inequality is by Cauchy-Schwarz inequality, and the second is because $f_k^* \in LRP([0,1], \tau)$.
		Using Lemma~\ref{lemma:super-LIB}, the proof of complete.

\subsection{Proof of Proposition~\ref{prop:SID-sufficient2}}

		For any $ A= \prod_{j=1}^p [\ell_j, u_j] \subseteq [0,1]^p$ and any $k\in [p]$, we prove that 
		\begin{equation}\label{aa1}
			\Big(\int_{\ell_k}^{u_k} \sqrt{ q_A^{(k)}(t) ( 1- q_A^{(k)}(t) ) } \ d V_{f^*_k} ([\ell_k,t]) \Big)^2
			\le 
						 \max\Big\{ \frac{2r\bar\theta}{\ubar\theta} , \frac{r^2}{2\al} \Big\} \frac{\max\{9\be^2,32+\be^2\}}{u_k-\ell_k} 
			\inf_{w\in \R}
			\int_{\ell_k}^{u_k} | f_k^*(t) -w |^2 \ dt
		\end{equation}
		Then the conclusion follows Lemma~\ref{lemma:super-LIB}. 
		
		For fixed $A$ and $k\in [p]$, to simplify the notation, we denote $g:= f_k^*$, $a:=\ell_k$, $b:=u_k$, $q(t):=q_A^{(k)}(t)$ for all $t\in [\ell_k, u_k]$, and $t_j:= t_j^{(k)}$ for $j=0,1,...,r$. Then \eqref{aa1} can be written as
		\begin{equation}\label{to-prove}
			\Big(	\int_{a}^{b} \sqrt{ q(t) ( 1- q(t) ) } \ d V_{g} ([a,t])
			\Big)^2 \le 
			2r \max\Big\{ \frac{\bar\theta}{\ubar\theta} , \frac{r}{4\al} \Big\} \frac{\max\{9\be^2,32+\be^2\}}{b-a} 
			\inf_{w\in \R} \int_{ a }^{b} ( g(t)-w )^2 \ {\rm d} t 
		\end{equation}
		
		For any $s\in (0,1)$, define $\Dt g(s) := \lim_{t \rightarrow s+} g(t) -  \lim_{t \rightarrow s-} g(t) $. 
		Let $j', j'' \in [r]$ such that $t_{j'-1} \le a < t_{j'}$ and $t_{j''-1 } < b \le t_{j''}$, and define $r'= j''-j'+1$, and
		\begin{equation}
			z_0 = a, ~ z_1 = t_{j'}, z_2 = t_{j'+1}, ~ ..., 
			z_{r'-1} = t_{j''-1}, ~ z_{r'} = b. \nonumber
		\end{equation}
		Then we have 
		\begin{equation}\label{T1T2}
			\begin{aligned}
				& \Big(	\int_a^b \sqrt{ q(t) (1-q(t)) } \ {\rm d} V_g([a,t]) \Big)^2 \\
				=& 
				\Big( \sum_{j=1}^{r'} \int_{ z_{j-1} }^{z_j}  \sqrt{ q(t) (1-q(t)) } |g'(t)| \ {\rm d} t + \sum_{j=1}^{r'-1} \sqrt{ q(z_j) (1-q(z_{j})) } \Dt g(z_j)  \Big)^2 \\
				\le &
				2r' \sum_{j=1}^{r'} \Big(  \int_{ z_{j-1} }^{z_j}  \sqrt{ q(t) (1-q(t)) } |g'(t)| \ {\rm d} t  \Big)^2 
				+ 2(r'-1) \sum_{j=1}^{r'-1}  q(z_j) (1-q(z_{j})) |\Dt g(z_j)|^2 
			\end{aligned}
		\end{equation}
	
	We have the following 4 claims bounding the terms in the last line of the display above. 
	\begin{claim}\label{claim1}
		For $j \in \{1,r'\}$, it holds
				\begin{equation}
				\Big(   \int_{z_{j-1}}^{z_{j}}  	\sqrt{ q(t) (1-q(t)) }  |g'(t)| \ {\rm d} t \Big)^2 \le 
				\frac{\bar\theta \be^2 }{ \ubar\theta(b-a) } \inf_{ w\in \R }   \int_{z_{j-1}}^{z_{j}}  ( g(t) - w )^2 {\rm d} t 
				\nonumber
			\end{equation}
	\end{claim}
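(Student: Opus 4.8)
The plan is to prove the two cases $j=1$ and $j=r'$ by the same short argument, whose whole point is that on these boundary pieces the weight $\sqrt{q(t)(1-q(t))}$ is not merely bounded by $1/2$, but is uniformly of order $\sqrt{(z_j-z_{j-1})/(b-a)}$; this extra smallness is exactly what cancels the factor $1/(z_j-z_{j-1})$ that the LRP inequality produces when applied on the (possibly very short) interval $[z_{j-1},z_j]$. Recognizing that one should bound the weight by a value of $q$ at an endpoint, rather than by the crude constant $1/2$, is really the only step of substance here; everything else is bookkeeping.

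First I would set up the case $j=1$, i.e.\ the interval $[z_0,z_1]=[a,t_{j'}]$. By the defining property of $j'$ we have $t_{j'-1}^{(k)}\le a<t_{j'}^{(k)}$, so $[a,t_{j'}]\subseteq[t_{j'-1}^{(k)},t_{j'}^{(k)}]$, a piece on which $g=f_k^*$ belongs to the LRP class with parameter $\beta$ and is in particular differentiable (so $dV_g([z_0,t])=|g'(t)|\,dt$ there). Since $q=q_A^{(k)}$ is a conditional distribution function it is nondecreasing with $q(a)=0$, hence $\sqrt{q(t)(1-q(t))}\le\sqrt{q(t)}\le\sqrt{q(z_1)}$ for all $t\in[z_0,z_1]$. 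Next I would bound $q(z_1)$ using only the density bounds in Assumption~\ref{ass: basics}(i): writing $A=\prod_j[\ell_j,u_j]$ with $\ell_k=a$ and $u_k=b$, we get $\Pb(X^{(k)}\le z_1,\,X\in A)\le\bar\theta(z_1-a)\prod_{j\ne k}(u_j-\ell_j)$ and $\Pb(X\in A)\ge\ubar\theta\prod_j(u_j-\ell_j)$, so that $q(z_1)\le\bar\theta(z_1-z_0)/(\ubar\theta(b-a))$. Pulling $\sqrt{q(z_1)}$ out of the integral, squaring, and then applying \eqref{ineq:reverse-poincare} on $[z_0,z_1]$ with parameter $\beta$ gives
\begin{equation}
\Big(\int_{z_0}^{z_1}\sqrt{q(t)(1-q(t))}\,|g'(t)|\,dt\Big)^2
\le q(z_1)\Big(\int_{z_0}^{z_1}|g'(t)|\,dt\Big)^2
\le \frac{\bar\theta(z_1-z_0)}{\ubar\theta(b-a)}\cdot\frac{\beta^2}{z_1-z_0}\,\inf_{w\in\R}\int_{z_0}^{z_1}(g(t)-w)^2\,dt,
\nonumber
\end{equation}
and the factor $(z_1-z_0)$ cancels, which is exactly the asserted inequality.

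The case $j=r'$ is symmetric. By the defining property of $j''$, $[z_{r'-1},z_{r'}]=[t_{j''-1}^{(k)},b]\subseteq[t_{j''-1}^{(k)},t_{j''}^{(k)}]$ again lies in one LRP piece; this time $1-q$ is nonincreasing with $1-q(b)=0$, so $\sqrt{q(t)(1-q(t))}\le\sqrt{1-q(t)}\le\sqrt{1-q(z_{r'-1})}$, and the same density estimate yields $1-q(z_{r'-1})=\Pb(X^{(k)}>z_{r'-1}\mid X\in A)\le\bar\theta(z_{r'}-z_{r'-1})/(\ubar\theta(b-a))$, after which the identical two-line computation closes the case. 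Beyond this I foresee no genuine obstacle: the only points needing care are that each boundary interval really lies inside a single LRP piece (which is precisely where the definitions of $j'$ and $j''$ enter) and that $g$ is differentiable there, so that no jump point $t_i^{(k)}$ of the piecewise construction falls strictly inside it and \eqref{ineq:reverse-poincare} may be invoked verbatim; the degenerate sub-case $r'=1$ (so that $[z_0,z_1]=[a,b]$) is absorbed automatically, since there the bound on $q(z_1)$ simply reads $q(b)=1\le\bar\theta/\ubar\theta$.
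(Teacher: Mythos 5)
Your proof is correct and follows essentially the same route as the paper's: bound the weight $\sqrt{q(t)(1-q(t))}$ by its value at the interior endpoint via monotonicity of $q$, control that value by $\bar\theta(z_1-z_0)/(\ubar\theta(b-a))$ using the density bounds of Assumption~\ref{ass: basics}(i), and let this factor cancel the $1/(z_1-z_0)$ produced by the LRP inequality on $[z_0,z_1]\subseteq[t_{j'-1},t_{j'}]$. The one difference is that your bound $q(t)(1-q(t))\le q(t)\le q(z_1)$ holds unconditionally, so you avoid the paper's split into the cases $q(z_1)\le 1/2$ and $q(z_1)>1/2$ (the paper needs that split because it bounds $q(t)(1-q(t))$ by $q(z_1)(1-q(z_1))$, which requires $q(z_1)\le 1/2$ for the monotonicity of $s\mapsto s(1-s)$) --- a minor but clean streamlining that does not change the substance of the argument.
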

\textit{Proof of Claim~\ref{claim1}}: We just prove the claim for $j=1$. The proof for $j = r'$ follows a similar argument. 
To prove the claim for $j = 1$, we discuss two cases. 

\underline{(Case 1)} $q(z_1) \le 1/2$. Then we have $ 	\sqrt{ q(t) (1-q(t)) } \le \sqrt{ q(z_1) (1-q(z_1)) }$, hence
\begin{equation}\label{case1}
	\begin{aligned}
			\Big(   \int_{a}^{z_1}  	\sqrt{ q(t) (1-q(t)) }  |g'(t)| \ {\rm d} t \Big)^2 
		\le \ &  q(z_1) (1-q(z_1)) \Big(   \int_{a}^{z_1} | g'(t) | \ {\rm d} t  \Big)^2 \\
		\le \ & q(z_1) (1-q(z_1))  \frac{\be^2}{z_1-a}  \inf_{ w\in \R } \int_{a}^{z_1}  ( g(t) - w )^2 \ {\rm d} t \\
		\le \ & \frac{\bar\theta \be^2 }{ \ubar\theta(b-a) } \inf_{ w\in \R }  \int_a^{z_1} ( g(t) - w )^2 {\rm d} t 
	\end{aligned}
\end{equation}
		where the second inequality is because $g \in LRP((a,z_1), \be)$; and 
the last inequality makes use of the fact $q(z_1) \le \bar\theta (z_1-a)/(\ubar\theta(b-a))$. 

\underline{(Case 2)} $q(z_1) > 1/2$. Then we have
\begin{equation}\label{good1}
	z_1-a \ge \frac{\ubar\theta(b-a)}{\bar\theta} q(z_1) \ge \frac{\ubar\theta(b-a)}{2\bar\theta}
\end{equation}
As a result, 
\begin{equation}
	\begin{aligned}
		\Big(   \int_{a}^{z_1}  	\sqrt{ q(t) (1-q(t)) }  |g'(t)| \ {\rm d} t \Big)^2 
		\le \ &
		\frac{1}{4} 	\Big(   \int_{a}^{z_1}   |g'(t)| \ {\rm d} t \Big)^2 \le 
		\frac{\be^2}{4(z_1-a)} \inf_{w\in \R} \int_a^{z_1} (g(t)-w)^2 \ {\rm d} t \\
		\le \ & 
		\frac{\bar\theta \be^2 }{2\ubar\theta(b-a)} \inf_{w\in \R} \int_a^{z_1} (g(t)-w)^2 \ {\rm d} t
	\end{aligned}
\nonumber
\end{equation}
where the first inequality is by Cauchy-Schwarz inequality; the second inequality is because $g \in LRP((a,z_1), \be)$; the third inequality is by \eqref{good1}. 

Combining (Caes 1) and (Case 2), the proof of Claim~\ref{claim1} is complete. 

\endproof

	\begin{claim}\label{claim2}
	For $j \in \{1,r'-1\}$, it holds
	\begin{equation}
		q(z_j) (1-q(z_{j})) |\Dt g(z_j)|^2 \le 
	\max \Big\{ \frac{4\bar\theta}{\ubar\theta }  , \frac{r}{\al} \Big\} \frac{\max\{ \be^2, 4\}}{b-a} \inf_{w\in \R} \int_{z_0}^{z_2} (g(t)-w)^2 \ {\rm d} t 
	\nonumber
	\end{equation}
\end{claim}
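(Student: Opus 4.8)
The plan is to prove the claim for $j=1$; the case $j=r'-1$ follows by reflecting $[a,b]$, i.e.\ replacing $q$ by $1-q$ and $g$ by $t\mapsto g(a+b-t)$. Write $\ell:=z_1-z_0=z_1-a$, $m:=z_2-z_1$, let $V:=\inf_{w\in\R}\int_{z_0}^{z_2}(g(t)-w)^2\,dt$ with $w^*$ an optimizer, and put $V_1:=\inf_w\int_{z_0}^{z_1}(g-w)^2$, $V_2:=\inf_w\int_{z_1}^{z_2}(g-w)^2$ (so $V_1+V_2\le V$) and $T_1:=V_g([z_0,z_1])$, $T_2:=V_g([z_1,z_2])$. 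I would first record two elementary inputs. (a) Conditionally on $X\in A$, the product structure of $A$ and Assumption~\ref{ass: basics}$(i)$ force the marginal density of $X^{(k)}$ on $[a,b]$ to lie in $\big[\ubar\theta/(\bar\theta(b-a)),\ \bar\theta/(\ubar\theta(b-a))\big]$; hence $q$ is continuous and nondecreasing with $q(a)=0$, $q(b)=1$, and $\frac{\ubar\theta(s'-s)}{\bar\theta(b-a)}\le q(s')-q(s)\le \frac{\bar\theta(s'-s)}{\ubar\theta(b-a)}$ for $a\le s\le s'\le b$. In particular $q(z_1)\le \frac{\bar\theta\ell}{\ubar\theta(b-a)}$; if $z_2=b$ (for $j=1$ this happens exactly when $r'=2$) then also $1-q(z_1)\le \frac{\bar\theta m}{\ubar\theta(b-a)}$, whereas if $z_2<b$ then $z_1,z_2$ are consecutive knots $t_{j'},t_{j'+1}$, so $m\ge\al/r$. (b) Since $g\in LRP$ on each of $[z_0,z_1]$, $[z_1,z_2]$ (each a subinterval of some $[t_{i-1},t_i]$), the defining inequality gives $T_1^2\le\be^2V_1/\ell$, $T_2^2\le\be^2V_2/m$; and since $g$ is continuous on each piece, $|g(z_1^-)-g(t)|\le T_1$ on $[z_0,z_1]$ and $|g(z_1^+)-g(t)|\le T_2$ on $[z_1,z_2]$.

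The heart of the argument is a ``jump versus variance'' bound: $\min\{\ell,m\}\,|\Dt g(z_1)|^2\le 8\be^2V$ (and more generally $4(1+\be^2)V$). To prove it, set $A:=(g(z_1^-)-w^*)^2$, $B:=(g(z_1^+)-w^*)^2$. Applying $(x-y)^2\ge\tfrac12x^2-y^2$ pointwise together with the sup-norm bounds in (b),
\[
\int_{z_0}^{z_1}(g-w^*)^2\ \ge\ \tfrac12\ell A-\ell T_1^2\ \ge\ \tfrac12\ell A-\be^2V_1,
\]
and symmetrically $\int_{z_1}^{z_2}(g-w^*)^2\ge\tfrac12 mB-\be^2V_2$. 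Adding and using $V_1+V_2\le V$ gives $(1+\be^2)V\ge\tfrac12(\ell A+mB)\ge\tfrac12\min\{\ell,m\}(A+B)\ge\tfrac14\min\{\ell,m\}|\Dt g(z_1)|^2$, the last step because $|\Dt g(z_1)|^2\le 2A+2B$; since $\be\ge1$ this yields the asserted bound. One must keep the $\max\{\be^2,\cdot\}$ shape at the end (not pure $\be^2$) because a step function lies in every $LRP$ class, so a $\be$-independent floor is genuinely unavoidable.

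Finally I would combine the two inputs, splitting on whether $q(z_1)\le\tfrac12$. If $q(z_1)\le\tfrac12$: when $\ell\le m$, multiply $q(z_1)(1-q(z_1))\le q(z_1)\le\frac{\bar\theta\ell}{\ubar\theta(b-a)}$ by $|\Dt g(z_1)|^2\le 8\be^2V/\ell$; when $m<\ell$ and $z_2<b$, multiply $q(z_1)(1-q(z_1))\le\tfrac14$ by $|\Dt g(z_1)|^2\le 8\be^2V/m\le 8\be^2Vr/\al$; when $m<\ell$ and $z_2=b$, multiply $q(z_1)(1-q(z_1))\le 1-q(z_1)\le\frac{\bar\theta m}{\ubar\theta(b-a)}$ by $8\be^2V/m$. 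If $q(z_1)>\tfrac12$: then $q(z_1)\le\frac{\bar\theta\ell}{\ubar\theta(b-a)}$ forces $\ell\ge\frac{\ubar\theta(b-a)}{2\bar\theta}$, and the three analogous sub-cases go through with $q(z_1)(1-q(z_1))\le 1-q(z_1)$. In every branch the length appearing in the denominator of the core bound is cancelled against a matching length from the density estimate or against $\al/r$, leaving a bound of the form $\big(\text{const}\cdot\tfrac{\bar\theta}{\ubar\theta}\ \vee\ \text{const}\cdot\tfrac r\al\big)\max\{\be^2,\text{const}\}\tfrac{V}{b-a}$; careful bookkeeping of the constants gives the stated $\max\{\tfrac{4\bar\theta}{\ubar\theta},\tfrac r\al\}\max\{\be^2,4\}$.

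I expect the bookkeeping in the last paragraph to be the main obstacle, and it is exactly the reason the claim is restricted to $j\in\{1,r'-1\}$ and carries the $\max\{4\bar\theta/\ubar\theta,\ r/\al\}$ coefficient rather than a clean $r/\al$: for the first and last jumps the neighbouring point $z_0=a$ (resp.\ $z_2=b$) is in general not a knot, so the corresponding length need not exceed $\al/r$, and the only substitute control on $q(z_1)(1-q(z_1))$ is the bounded-density estimate $q(z_1)\le\frac{\bar\theta\ell}{\ubar\theta(b-a)}$ (resp.\ $1-q(z_1)\le\frac{\bar\theta m}{\ubar\theta(b-a)}$), available precisely because $a$ (resp.\ $b$) is the endpoint of $[a,b]$. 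Getting every sub-case to close without losing the constant is the delicate part; the jump-versus-variance inequality itself is routine once one observes that the LRP hypothesis is exactly what stops $g$ from oscillating on either piece, so that $g(z_1^-)$ and $g(z_1^+)$ are faithfully represented inside $\int_{z_0}^{z_2}(g-w^*)^2\,dt$.
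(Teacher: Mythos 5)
Your proposal is correct in structure but takes a genuinely different route at the key step. Where the paper dichotomizes on whether the jump dominates the neighbouring total variations ($|\Dt g(z_1)| \gtrless 4\max\{\int_{z_0}^{z_1}|g'|,\int_{z_1}^{z_2}|g'|\}$), invoking Lemma~\ref{lemma: jump} in the large-jump case and the LRP inequality in the small-jump case, you prove a single unified ``jump versus variance'' inequality $\min\{\ell,m\}\,|\Dt g(z_1)|^2\le 4(1+\be^2)\inf_w\int_{z_0}^{z_2}(g-w)^2$ via the pointwise bound $(x-y)^2\ge \tfrac12 x^2-y^2$ and the sup-norm control $\|g-g(z_1^{\mp})\|_\infty\le T_i$ on each piece. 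This is cleaner and removes the case split; you are also more careful than the paper's own write-up about the sub-case $z_2=b$ (i.e.\ $r'=2$), where $z_2-z_1$ need not exceed $\al/r$ and one must instead use $1-q(z_1)\le \bar\theta m/(\ubar\theta(b-a))$ --- the paper's inequality \eqref{qq2} silently assumes $z_2$ is an interior knot. The price of the unification is the constant: your core inequality carries $4(1+\be^2)$ (or $8\be^2$), which mixes the ``pure jump'' and ``total variation'' contributions, so e.g.\ the branch where you bound $q(z_1)(1-q(z_1))\le\tfrac14$ and $|\Dt g(z_1)|^2\le 8\be^2 V r/\al$ yields $2\be^2 r/\al$, exceeding the allotted $\frac{r}{\al}\max\{\be^2,4\}$ by a factor of $2$ when $\be^2\ge 4$. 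As written your argument therefore establishes the claim only with a coefficient up to twice the stated one; to land exactly on $\max\{4\bar\theta/\ubar\theta,\,r/\al\}\max\{\be^2,4\}$ you would need to decouple the two contributions, which is precisely what the paper's large-jump/small-jump dichotomy achieves (the factor $16$ from Lemma~\ref{lemma: jump} feeding the ``$4$'' branch of $\max\{\be^2,4\}$, and the LRP bound feeding the ``$\be^2$'' branch). Since the constant propagates into $\tau^2$ in Proposition~\ref{prop:SID-sufficient2}, this is worth fixing, though it affects nothing qualitatively.
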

\textit{Proof of Claim~\ref{claim2}}: 
We just prove the claim for $j=1$. The proof for $j = r'-1$ follows a similar argument. 
To prove the claim for $j = 1$, we discuss two cases. 

\underline{(Case 1)} $	| \Dt g(z_{1}) | > 4 \max \{  \int_{ z_{0}}^{z_1} | g'(t) | \ {\rm d} t \ , \int_{ z_1 }^{ z_{2} } | g'(t) | \ {\rm d} t    \}$. Then by Lemma~\ref{lemma: jump} we have 
\begin{equation}\label{iiq1}
	\inf_{w\in \R} \int_{ z_{0} }^{z_{2}} ( g(t)-w )^2 \ {\rm d} t \ \ge 
	\min \big\{ z_{1} - z_{0}, z_{2}-z_1  \big\} \cdot \frac{ (\Dt g(z_1))^2 }{16}
\end{equation}
Note that
\begin{equation}\label{iiq2}
	\min \big\{ z_{1} - z_{0}, z_{2}-z_1  \big\} \ge 
	\min \Big\{ \frac{\ubar\theta q(z_1)}{\bar\theta}, \frac{\al}{ r} (b-a) \Big\} \ge 
		\min \Big\{ \frac{\ubar\theta q(z_1)}{\bar\theta}, \frac{\al}{ r}  \Big\} (b-a)
\end{equation}
So by \eqref{iiq1} and \eqref{iiq2} we have
\begin{equation}
	| \Dt g(z_1) |^2 \le \max \Big\{ \frac{\bar\theta}{\ubar\theta q(z_1) } , \frac{r}{\al} \Big\} \frac{16}{b-a} \inf_{w\in \R} \int_{z_0}^{z_2} (g(t)-w)^2 \ {\rm d} t
	\nonumber
\end{equation}
As a result, 
\begin{equation}
	\begin{aligned}
		&
		q(z_1) (1-q(z_1))	| \Dt g(z_1) |^2 \\
		 \le \ &  \max \Big\{ \frac{\bar\theta}{\ubar\theta } (1-q(z_1)) , \frac{r}{\al} q(z_1) (1-q(z_1)) \Big\} \frac{16}{b-a} \inf_{w\in \R} \int_{z_0}^{z_2} (g(t)-w)^2 \ {\rm d} t \\
		 \le \ & 
		  \max \Big\{ \frac{\bar\theta}{\ubar\theta }  , \frac{r}{4\al} \Big\} \frac{16}{b-a} \inf_{w\in \R} \int_{z_0}^{z_2} (g(t)-w)^2 \ {\rm d} t 
	\end{aligned}
	\nonumber
\end{equation}
where the second inequality is by Cauchy-Schwarz inequality. 

\underline{(Case 2)} $	| \Dt g(z_{1}) | \le 4 \max \{  \int_{ z_{0}}^{z_1} | g'(t) | \ {\rm d} t \ , \int_{ z_1 }^{ z_{2} } | g'(t) | \ {\rm d} t    \}$. 
Then we have
\begin{equation}\label{qq0}
	\begin{aligned}
		q(z_1) (1-q(z_1)) | \Dt g(z_{1}) |^2 \le \ & 4 q(z_1) (1-q(z_1)) \max \Big\{  \int_{ z_{0}}^{z_1} | g'(t) | \ {\rm d} t \ , \int_{ z_1 }^{ z_{2} } | g'(t) | \ {\rm d} t    \Big\}^2 
	\end{aligned}
\end{equation}
By the same argument in \eqref{case1}, we have
\begin{equation}\label{qq1}
	\begin{aligned}
	4 q(z_1) (1-q(z_1)) \Big(  \int_{ z_{0}}^{z_1} | g'(t) | \ {\rm d} t \Big)^2	 \le \ 	
	 \frac{4\bar\theta \be^2 }{ \ubar\theta(b-a) } \inf_{ w\in \R }  \int_{z_0}^{z_1} ( g(t) - w )^2 {\rm d} t 
	\end{aligned}
\end{equation}
On the other hand, 
\begin{equation}\label{qq2}
	\begin{aligned}
		& 
		4 q(z_1) (1-q(z_1)) \Big(  \int_{ z_{1}}^{z_2} | g'(t) | \ {\rm d} t \Big)^2	 \le \ 	
		\Big(  \int_{ z_{1}}^{z_2} | g'(t) | \ {\rm d} t \Big)^2 \\
			\le \ &
				\frac{\be^2}{z_2-z_1} \inf_{w\in \R} \int_{z_1}^{z_2} (g(t)-w)^2 \ {\rm d} t \le 
				 \frac{r\be^2}{\al (b-a)} \inf_{ w\in \R }  \int_{z_{1}}^{z_2} ( g(t) - w )^2 {\rm d} t 
	\end{aligned}
\end{equation}
where the second inequality is because $g\in LRP((z_1,z_2), \be)$; the last inequality is because $z_2-z_1 \ge \al/r \ge (\al/r) (b-a)$. 
By \eqref{qq0}, \eqref{qq1} and \eqref{qq2}, we have
\begin{equation}
		q(z_1) (1-q(z_1)) | \Dt g(z_{1}) |^2 \le 
		\max \Big\{ \frac{4\bar\theta}{\ubar\theta }  , \frac{r}{\al} \Big\} \frac{\be^2}{b-a} \inf_{w\in \R} \int_{z_0}^{z_2} (g(t)-w)^2 \ {\rm d} t 
		\nonumber
\end{equation}

Combining (Case 1) and (Case 2), the proof of Claim~\ref{claim2} is complete. 

\endproof

	\begin{claim}\label{claim3}
	For $2\le j \le r'-1$, it holds
	\begin{equation}
		\Big(   \int_{z_{j-1}}^{z_{j}}  	\sqrt{ q(t) (1-q(t)) }  |g'(t)| \ {\rm d} t \Big)^2 \le 
		 \frac{r\be^2}{4\al } \inf_{ w\in \R }  \int_{z_{j-1}}^{z_j} ( g(t) - w )^2 {\rm d} t 
		 \nonumber
	\end{equation}
\end{claim}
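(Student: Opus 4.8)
The key observation is that for an \emph{interior} index $2\le j\le r'-1$, the interval $[z_{j-1},z_j]$ coincides exactly with one of the original partition pieces $[t_{m-1}^{(k)},t_m^{(k)}]$, so the hypothesis $f_k^*\in LRP([t_{m-1}^{(k)},t_m^{(k)}],\be)$ applies verbatim on this interval. Indeed, with the choice of $j',j''$ and the definition $z_0=a$, $z_m=t_{j'+m-1}^{(k)}$ for $1\le m\le r'-1$, $z_{r'}=b$, one has for $2\le j\le r'-1$ that $z_{j-1}=t_{j'+j-2}^{(k)}$ and $z_j=t_{j'+j-1}^{(k)}$, both genuine breakpoints of the original partition (not the truncated endpoints $a,b$); hence $g=f_k^*$ restricted to $[z_{j-1},z_j]$ lies in $LRP([z_{j-1},z_j],\be)$, and moreover $z_j-z_{j-1}=t_{j'+j-1}^{(k)}-t_{j'+j-2}^{(k)}\ge \al/r$ by assumption. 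This is exactly why the endpoint cases $j\in\{1,r'\}$ had to be treated separately in Claims~\ref{claim1} and \ref{claim2}: there $z_0=a$ or $z_{r'}=b$ may cut a piece short, and $q(t)$ can come close to $0$ or $1$.

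Granting this observation, the estimate is a three-step chain. First bound the weight pointwise using $q(t)(1-q(t))\le 1/4$, i.e.\ $\sqrt{q(t)(1-q(t))}\le 1/2$, so that
\begin{equation}
\Big(\int_{z_{j-1}}^{z_j}\sqrt{q(t)(1-q(t))}\,|g'(t)|\,{\rm d}t\Big)^2 \le \frac14\Big(\int_{z_{j-1}}^{z_j}|g'(t)|\,{\rm d}t\Big)^2.
\nonumber
\end{equation}
Second, apply the LRP inequality for $g$ on $[z_{j-1},z_j]$ to obtain
\begin{equation}
\Big(\int_{z_{j-1}}^{z_j}|g'(t)|\,{\rm d}t\Big)^2 \le \frac{\be^2}{z_j-z_{j-1}}\inf_{w\in\R}\int_{z_{j-1}}^{z_j}(g(t)-w)^2\,{\rm d}t.
\nonumber
\end{equation}
Third, use $z_j-z_{j-1}\ge \al/r$ to replace $\tfrac{1}{z_j-z_{j-1}}$ by $r/\al$; multiplying the constants $\tfrac14\cdot\be^2\cdot\tfrac r\al$ gives precisely $\tfrac{r\be^2}{4\al}\inf_{w\in\R}\int_{z_{j-1}}^{z_j}(g(t)-w)^2\,{\rm d}t$, which is the claimed bound.

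No step here poses a genuine difficulty; the only point requiring care is the bookkeeping that identifies $[z_{j-1},z_j]$ as an untruncated piece for interior $j$. I would state that indexing observation once and then present the short computation above. (This claim is weaker and simpler than Claims~\ref{claim1}--\ref{claim2} precisely because on an untruncated interior piece no case split on the size of $q$ is needed.)
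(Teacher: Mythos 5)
Your proof is correct and follows essentially the same route as the paper's: bound $\sqrt{q(t)(1-q(t))}\le 1/2$ pointwise to extract the factor $1/4$, apply the LRP inequality on $[z_{j-1},z_j]$, and use $z_j-z_{j-1}\ge\al/r$, which is legitimate precisely because for interior $j$ the interval $[z_{j-1},z_j]$ is an untruncated piece of the original partition. Your explicit indexing check ($z_{j-1}=t_{j'+j-2}^{(k)}$, $z_j=t_{j'+j-1}^{(k)}$) is a welcome addition that the paper leaves implicit.
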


\textit{Proof of Claim~\ref{claim3}}: 
Note that
\begin{equation}
	\begin{aligned}
		&\Big(   \int_{z_{j-1}}^{z_j}  	\sqrt{ q(t) (1-q(t)) }  |g'(t)| \ {\rm d} t \Big)^2 \\
		\le \ & 	\frac{1}{4} 	\Big(   \int_{z_{j-1}}^{z_j}   |g'(t)| \ {\rm d} t \Big)^2 \le
		\frac{\be^2}{4(z_{j} - z_{j-1})}  \inf_{ w\in \R }  \int_{z_{j-1}}^{z_j} ( g(t) - w )^2 {\rm d} t \\
		\le \ & \frac{r\be^2}{4\al (b-a)} \inf_{ w\in \R }  \int_{z_{j-1}}^{z_j} ( g(t) - w )^2 {\rm d} t 
	\end{aligned}
\nonumber
\end{equation}
where the first inequality is by Cauchy-Schwarz inequality; the second inequality is because $g \in LRP((z_{j-1},z_{j}), \be)$;
the last inequality is by the assumption that $ t_j-t_{j-1} \ge \al /r $. 

\endproof

\begin{claim}\label{claim4}
	For $2\le j \le r'-2$, it holds
	\begin{equation}
		q(z_j) (1-q(z_{j})) |\Dt g(z_j)|^2 \le \frac{r \max\{\be^2,4\} }{\al}
			\inf_{w\in \R} \int_{z_{j-1}}^{z_{j+1}} (g(t)-w)^2 \ {\rm d} t 
			\nonumber
	\end{equation}
\end{claim}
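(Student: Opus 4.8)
The plan is to reuse the two-case split from the proof of Claim~\ref{claim2}, specialized to an interior index $2\le j\le r'-2$. The point of this index range is that then $z_{j-1},z_j,z_{j+1}$ are three \emph{consecutive original knots} among $t_{j'}^{(k)},\dots,t_{j''-1}^{(k)}$ — none of them is one of the two clipped end pieces $z_0=a$ or $z_{r'}=b$ — so that $z_j-z_{j-1}\ge\al/r$ and $z_{j+1}-z_j\ge\al/r$ directly from the hypothesis $t_i^{(k)}-t_{i-1}^{(k)}\ge\al/r$. Because these spacings are available explicitly, the density-ratio factor $\bar\theta/\ubar\theta$ that appeared in Claim~\ref{claim2} is not needed here. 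Write $M_j:=\max\{\int_{z_{j-1}}^{z_j}|g'(t)|\,{\rm d}t,\ \int_{z_j}^{z_{j+1}}|g'(t)|\,{\rm d}t\}$ and split on whether $|\Dt g(z_j)|>4M_j$.

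In the case $|\Dt g(z_j)|>4M_j$, I would invoke Lemma~\ref{lemma: jump} on $[z_{j-1},z_{j+1}]$ to get $\inf_{w\in\R}\int_{z_{j-1}}^{z_{j+1}}(g(t)-w)^2\,{\rm d}t\ge\min\{z_j-z_{j-1},z_{j+1}-z_j\}\,(\Dt g(z_j))^2/16\ge(\al/r)(\Dt g(z_j))^2/16$, rearrange to $(\Dt g(z_j))^2\le(16r/\al)\inf_w\int_{z_{j-1}}^{z_{j+1}}(g-w)^2$, and multiply through by $q(z_j)(1-q(z_j))\le 1/4$. In the complementary case $|\Dt g(z_j)|\le 4M_j$, I would bound $q(z_j)(1-q(z_j))|\Dt g(z_j)|^2\le 4M_j^2$, then apply $g\in LRP([z_{j-1},z_j],\be)$ and $g\in LRP([z_j,z_{j+1}],\be)$ together with the two spacing bounds to get $(\int_{z_{j-1}}^{z_j}|g'|)^2\le(r\be^2/\al)\inf_w\int_{z_{j-1}}^{z_{j+1}}(g-w)^2$ and the analogous bound on $[z_j,z_{j+1}]$, hence $M_j^2\le(r\be^2/\al)\inf_w\int_{z_{j-1}}^{z_{j+1}}(g-w)^2$. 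Taking the larger of the two resulting constants and collecting numeric factors (using $\be\ge1$) yields the stated bound $\frac{r\max\{\be^2,4\}}{\al}\inf_w\int_{z_{j-1}}^{z_{j+1}}(g-w)^2$.

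I do not expect a genuine obstacle: this is essentially a transcription of the argument for Claim~\ref{claim2} with the density-ratio step deleted. The one place to be careful is the index bookkeeping — checking that the restriction $2\le j\le r'-2$ is exactly what makes $z_{j-1},z_j,z_{j+1}$ consecutive genuine knots, so that the $\al/r$ spacings apply and the possibly-tiny clipped intervals $[z_0,z_1]$ and $[z_{r'-1},z_{r'}]$ never appear here — together with the routine chasing of the numeric constants through the two cases.
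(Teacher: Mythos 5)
Your proposal follows the paper's proof of Claim~\ref{claim4} essentially verbatim: the same dichotomy on whether $|\Dt g(z_j)|$ exceeds $4M_j$, Lemma~\ref{lemma: jump} combined with the spacing bound $\min\{z_j-z_{j-1},\,z_{j+1}-z_j\}\ge \al/r$ in the first case, and the two LRP inequalities together with the same spacings in the second; your observation that the range $2\le j\le r'-2$ is exactly what makes $[z_{j-1},z_j]$ and $[z_j,z_{j+1}]$ full knot intervals (so that no $\bar\theta/\ubar\theta$ factor is needed, unlike in Claim~\ref{claim2}) is also the paper's reasoning. One caveat on the constants: in your second case the correct chain is $q(z_j)(1-q(z_j))|\Dt g(z_j)|^2\le 16\,q(z_j)(1-q(z_j))M_j^2\le 4M_j^2\le \frac{4r\be^2}{\al}\inf_{w}\int_{z_{j-1}}^{z_{j+1}}(g(t)-w)^2\,{\rm d}t$, and $4\be^2$ is not dominated by $\max\{\be^2,4\}$ when $\be>1$, so the constant you actually obtain is $\frac{4r\be^2}{\al}$ rather than the stated $\frac{r\max\{\be^2,4\}}{\al}$; your closing remark that the numeric factors ``collect'' to the stated bound does not go through as written. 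The paper reaches the stated constant only via the step $q(1-q)|\Dt g(z_j)|^2\le 4q(1-q)M_j^2$, which silently drops a factor of $4$ (since $|\Dt g(z_j)|\le 4M_j$ gives $|\Dt g(z_j)|^2\le 16M_j^2$), so your accounting is in fact the more careful one; the discrepancy only affects the absolute constant propagated into Proposition~\ref{prop:SID-sufficient2} and nothing structural.
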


\textit{Proof of Claim~\ref{claim4}}: 	
We discuss two cases. 	

\underline{(Case 1)} $	| \Dt g(z_{j}) | > 4 \max \{  \int_{ z_{j-1}}^{z_j} | g'(t) | \ {\rm d} t \ , \int_{ z_j }^{ z_{j+1} } | g'(t) | \ {\rm d} t    \}$. Then by Lemma~\ref{lemma: jump} we have 
\begin{equation}
	\begin{aligned}
	\inf_{w\in \R} \int_{ z_{j-1} }^{z_{j+1}} ( g(t)-w )^2 \ {\rm d} t \ \ge \ &
\min \big\{ z_{j} - z_{j-1}, z_{j+1}-z_j  \big\} \cdot \frac{ (\Dt g(z_j))^2 }{16}		
\ge \  \frac{\al}{r} \frac{ (\Dt g(z_j))^2 }{16}	
	\end{aligned}
\nonumber
\end{equation}
As a result, 
\begin{equation}
	\begin{aligned}
		q(z_j) (1-q(z_{j})) |\Dt g(z_j)|^2 \le \ & \frac{16r}{\al} q(z_j) (1-q(z_{j})) 	\inf_{w\in \R} \int_{ z_{j-1} }^{z_{j+1}} ( g(t)-w )^2 \ {\rm d} t 	\\
		\le \ & \frac{4r}{\al} 	\inf_{w\in \R} \int_{ z_{j-1} }^{z_{j+1}} ( g(t)-w )^2 \ {\rm d} t 
	\end{aligned}
\nonumber
\end{equation}

\underline{(Case 2)} $	| \Dt g(z_{j}) | \le 4 \max \{  \int_{ z_{j-1}}^{z_j} | g'(t) | \ {\rm d} t \ , \int_{ z_j }^{ z_{j+1} } | g'(t) | \ {\rm d} t    \}$. 
Then we have
\begin{equation}
	\begin{aligned}
		&
		q(z_j) (1-q(z_j)) | \Dt g(z_{j}) |^2 \\
		\le \ & 4 q(z_j) (1-q(z_j)) \max \Big\{  \int_{ z_{j-1}}^{z_j} | g'(t) | \ {\rm d} t \ , \int_{ z_j }^{ z_{j+1} } | g'(t) | \ {\rm d} t    \Big\}^2 \\
		\le \ & 
		 \max \Big\{  \int_{ z_{j-1}}^{z_j} | g'(t) | \ {\rm d} t \ , \int_{ z_j }^{ z_{j+1} } | g'(t) | \ {\rm d} t    \Big\}^2  \\
		 \le \ & 
		\max \Big\{ 
			\frac{\be^2}{z_{j}-z_{j-1}} \inf_{w\in \R} \int_{z_{j-1}}^{z_{j}} (g(t)-w)^2 \ {\rm d} t , \ 
			\frac{\be^2}{z_{j+1}-z_j} \inf_{w\in \R} \int_{z_j}^{z_{j+1}} (g(t)-w)^2 \ {\rm d} t 
			\Big\} \\
			\le \ & 
			\frac{\be^2r}{\al} 	\max \Big\{ 
			\inf_{w\in \R} \int_{z_{j-1}}^{z_{j}} (g(t)-w)^2 \ {\rm d} t , \ 
			\inf_{w\in \R} \int_{z_j}^{z_{j+1}} (g(t)-w)^2 \ {\rm d} t 
			\Big\} \\
			\le \ & 
				\frac{\be^2r}{\al} 	\inf_{w\in \R} \int_{z_{j-1}}^{z_{j+1}} (g(t)-w)^2 \ {\rm d} t 
	\end{aligned}
\nonumber
\end{equation}
where the second inequality is by Cauchy-Schwarz inequality; the third inequality is because $g \in LRP((z_{j-1}, z_j), \be)$ and 
$g \in LRP((z_{j}, z_{j+1}), \be)$. 

Combining (Case 1) and (Case 2), and note that $b-a \le 1$, the proof of Claim~\ref{claim4} is complete. 

\endproof

\subsubsection*{Completing the proof of Proposition~\ref{prop:SID-sufficient2}}
	
By \eqref{T1T2} and note that $r'\le r$, we have
	\begin{equation}\label{final1}
	\begin{aligned}
		& \Big(	\int_a^b \sqrt{ q(t) (1-q(t)) } \ {\rm d} V_g([a,t]) \Big)^2 \\
		\le \  &
		2r \sum_{j=1}^{r'} \Big(  \int_{ z_{j-1} }^{z_j}  \sqrt{ q(t) (1-q(t)) } |g'(t)| \ {\rm d} t  \Big)^2 
		+ 2r \sum_{j=1}^{r'-1}  q(z_j) (1-q(z_{j})) |\Dt g(z_j)|^2 
	\end{aligned}
\end{equation}
By Claims~\ref{claim1} and \ref{claim3}, we have
\begin{equation}\label{final2}
	\begin{aligned}
		& 
	\sum_{j=1}^{r'} \Big(  \int_{ z_{j-1} }^{z_j}  \sqrt{ q(t) (1-q(t)) } |g'(t)| \ {\rm d} t  \Big)^2 		\\
	\le \ & 
	\max\Big\{ \frac{\bar\theta\be^2}{\ubar\theta(b-a)} , \frac{r\be^2}{4\al} \Big\} 
		\sum_{j=1}^{r'}
	\inf_{ w\in \R }  \int_{z_{j-1}}^{z_j} ( g(t) - w )^2 {\rm d} t \\
	\le \ & 
		\max\Big\{ \frac{\bar\theta}{\ubar\theta} , \frac{r}{4\al} \Big\} \frac{\be^2}{b-a} 
			\inf_{ w\in \R }  \int_{a}^{b} ( g(t) - w )^2 {\rm d} t 
	\end{aligned}
\end{equation}
By Claims~\ref{claim2} and \ref{claim4}, we have
\begin{equation}\label{final3}
	\begin{aligned}
		& 
		\sum_{j=1}^{r'-1}  q(z_j) (1-q(z_{j})) |\Dt g(z_j)|^2 \\
		\le \ & 
		\max\Big\{ \frac{4\bar\theta}{\ubar\theta} , \frac{r}{\al} \Big\} \frac{\max\{\be^2,4\}}{b-a} 
			\sum_{j=1}^{r'-1} \inf_{w\in \R} \int_{ z_{j-1} }^{z_{j+1}} ( g(t)-w )^2 \ {\rm d} t \\
		\le \ & 
			\max\Big\{ \frac{4\bar\theta}{\ubar\theta} , \frac{r}{\al} \Big\} \frac{\max\{\be^2,4\}}{b-a} 
			2 \inf_{w\in \R} \int_{ a }^{b} ( g(t)-w )^2 \ {\rm d} t \\
		=  \ & 
			\max\Big\{ \frac{\bar\theta}{\ubar\theta} , \frac{r}{4\al} \Big\} \frac{\max\{8\be^2,32\}}{b-a} 
		 \inf_{w\in \R} \int_{ a }^{b} ( g(t)-w )^2 \ {\rm d} t 
	\end{aligned}
\end{equation}
By \eqref{final1}, \eqref{final2} and \eqref{final3} we have 
	\begin{equation}
	\begin{aligned}
		& \Big(	\int_a^b \sqrt{ q(t) (1-q(t)) } \ {\rm d} V_g([a,t]) \Big)^2 \\
		\le \  &
		2r \max\Big\{ \frac{\bar\theta}{\ubar\theta} , \frac{r}{4\al} \Big\} \frac{\max\{9\be^2,32+\be^2\}}{b-a} 
		\inf_{w\in \R} \int_{ a }^{b} ( g(t)-w )^2 \ {\rm d} t 
	\end{aligned}
\nonumber
\end{equation}
Hence \eqref{to-prove} is true, and the proof of Proposition~\ref{prop:SID-sufficient2} is complete.

	\subsection{Proof of Example~\ref{example: strongly-increasing}}
	Given $[a,b] \subseteq [0,1]$, without loss of generality, assume $\int_{a}^b g(t) = 0$ (because the infimum in $w$ is achieved at $w = \int_{a}^b g(t)$). Let $t_0 \in [a,b]$ be the point with $g(t_0) = 0$. Since $g'(t) \ge c_1>0$, we have
	\begin{equation}
		\int_{t_0}^b (g(t))^2 \ {\rm d} t \ge 
			\int_{t_0}^b  (c_1(t-t_0))^2 \ {\rm d} t = \frac{c_1^2}{3} (b-t_0)^3
			\nonumber
	\end{equation}
Similarly, 
\begin{equation}
		\int_{a}^{t_0} (g(t))^2 \ {\rm d} t \ge 
			\int_{a}^{t_0} ( c_1 (t_0 - t) )^2 \ {\rm d} t = \frac{c_1^2}{3} (t_0-a)^3
			\nonumber
\end{equation}
As a result, we have
\begin{equation}\label{gg1}
		\int_{a}^b (g(t))^2 \ {\rm d} t \ge  \frac{c_1^2}{3} \Big( (b-t_0)^3  +  (t_0-a)^3 \Big) \ge \frac{2c_1^2}{3} \Big(  \frac{b-a}{2} \Big)^3 = \frac{c_1^2}{12} (b-a)^3
\end{equation}
On the other hand, since $|g'(t)| \le c_2$, we have 
\begin{equation}\label{gg2}
	\Big( \int_a^b |g'(t)|  \ {\rm d} t  \Big)^2 \le 
	c_2^2 (b-a)^2 
\end{equation}
Combining \eqref{gg1} and \eqref{gg2}, we have
\begin{equation}
		\Big( \int_a^b |g'(t)|  \ {\rm d} t  \Big)^2 \le \frac{12c_2^2}{c_1^2(b-a)} \int_a^b (g(t))^2 \ {\rm d} t 
		\nonumber
\end{equation}

\subsection{Proof of Example~\ref{example: polynomials}}

		It suffices to prove that there exists a constant $C_r$ such that for any univariate polynomial with a degree at most $r$ and
		for any $a<b$, 
		\begin{equation}\label{hh0}
			\Big(\int_{a}^b | g'(t) |  \ {\rm d} t \Big)^2 \le  \frac{C_r}{b-a} \int_a^b |g(t)|^2  \ {\rm d} t
		\end{equation}
		We first prove the conclusion when $a = 0$ and $b = 1$. Let $\cP(r)$ be the set of all univariate polynomials with degree at most $r$. 
		Note that $\cP(r)$ is a finite-dimensional linear space, and the differential operator $\Phi : g \mapsto g'$ is a linear mapping on $\cP(r)$. As a result, there exists $C_r$ such that 
		\begin{equation}
			\int_0^1 |g'(t)| \ {\rm d} t \le \sqrt{C_r} \int_0^1 |g(t)| \ {\rm d} t
			\nonumber
		\end{equation}
		for all $g\in \cP(r)$. 
		
		For general $a < b$, given $g\in \cP(r)$, 
		define $ h(s) := g( a + (b-a)s ) $, then $h \in \cP(r)$. So we have 
		\begin{equation}\label{hh1}
			\int_0^1 | h'(s) | \ {\rm d} s \le \sqrt{C_r} \int_0^1 |h(s)| \ {\rm d} s
		\end{equation}
		Note that
		\begin{equation}\label{hh2}
			\int_0^1 | h'(s) | \ {\rm d} s =	(b-a) \int_0^1 |g'(a+(b-a)s)| \ {\rm d} s =  \int_a^b | g'(t) | \ {\rm d} t
		\end{equation}
		and 
		\begin{equation}\label{hh3}
			\int_0^1 |h(s)| \ {\rm d} s = \int_{0}^1 |g( a+(b-a) s )| \ {\rm d} s = \frac{1}{b-a} \int_a^b |g(t)| \ {\rm d} t
		\end{equation}
		Combining \eqref{hh1}, \eqref{hh2} and \eqref{hh3}, we know that 
		\begin{equation}
			\Big( \int_a^b | g'(t) | \ {\rm d} t \Big)^2 \le 
			\Big(
			\frac{ \sqrt{C_r}}{b-a} \int_a^b |g(t)| \ {\rm d} t \Big)^2 \le 
			\frac{C_r}{b-a}  \int_a^b |g(t)|^2 \ {\rm d} t \nonumber
		\end{equation}
		where the last step is by Cauchy-Schwarz inequality.

\subsection{Proof of Example~\ref{example: smooth-strongly-convex}}

		It suffices to prove that for any $a,b\in [0,1]$ with $a<b$, it holds 
		\begin{equation}\label{suf}
			\int_a^b | g'(t) | \ {\rm d} t \le 
			\frac{110(L/\sigma)}{b-a}
			\int_a^b  | g(t) | \ {\rm d} t
		\end{equation}
		Once \eqref{suf} is proved, the conclusion is true via Jensen's inequality. 
		
		Below we prove \eqref{suf}. 
		Denote $C = L/10$. For given $a,b\in [0,1]$, without loss of generality, we assume the median of $g$ on $[a,b]$ is $0$, i.e., $\int_a^b 1_{\{ g(t) \ge 0 \}} {\rm d} t= \int_a^b 1_{\{ g(t) < 0 \}} {\rm d} t = (b-a)/2$ (otherwise translate by a constant). 
		We discuss two different cases. 
		We denote $m := \min_{t\in [a,b]} \{ | g'(t) |\}$ and $M:= \max_{t\in [a,b]} \{ | g'(t) |\}$.

		\underline{(Case 1)} $m \ge C(b-a)$. Since $g$ is $L$-smooth on $[0,1]$, we have 
		\begin{equation}
			M \le m +L(b-a)
			\nonumber
		\end{equation}
		Hence we have
		\begin{equation}\label{yy1}
			\frac{ M}{ 	m} \le 
			1 + \frac{L(b-a)}{ m } \le 1+L/C 
		\end{equation}
	where the second inequality is by the assumption of (Case 1). 
		Since $\min_{t\in [a,b]} \{ | g'(t) |\} =m>0$, without loss of generality, we assume that $g'(t) >0$ for all $t\in [a,b]$. 
		Denote $t_0 = (a+b)/2$. By our assumption that 
		the median of $g$ on $[a,b]$ is $0$, we know that $g(t_0) = 0$. 
		Since $g$ is convex, for any $t\in [a,t_0]$, 
		we have 
		\begin{equation}
			g(t_0) - g(t) \ge \frac{t_0-t}{ t_0-a } ( g(t_0) - g(a) ) \nonumber
		\end{equation}
		which implies $ g(t) \le \frac{t_0-t}{ t_0-a }  (g(a) - g(t_0))  \le 0 $. As a result, we have 
		\begin{equation}\label{combine1}
			\int_{a}^{t_0} | g(t) | \ {\rm d} t 
			\ge | g(t_0) - g(a) | \frac{t_0-a}{2} \ge \frac{(t_0-a)^2}{2} m = \frac{(b-a)^2}{8} m
		\end{equation}
		On the other hand, 
		\begin{equation}\label{combine2}
			\int_a^b | g'(t) | \ {\rm d} t \le M (b-a)
		\end{equation}
		Combining \eqref{combine1} and \eqref{combine2} we have
		\begin{equation}
			(b-a) 	\int_a^b | g'(t) |  \ {\rm d} t \le 
			\frac{8M}{m} \int_a^b | g(t) |  \ {\rm d} t \le 
			8 (1+L/C) \int_a^b | g(t) |  \ {\rm d} t = 
			88 \int_a^b | g(t) |  \ {\rm d} t 
			\nonumber
		\end{equation} 
	where the second inequality made use of \eqref{yy1}.

		\underline{(Case 2)} $m < C(b-a)$. Then by the $L$-smoothness of $g$ we have 
		\begin{equation}
			M \le (C+L) (b-a) \nonumber
		\end{equation}
		so we have
		\begin{equation}\label{e1}
			\int_a^b |g'(t)| \ {\rm d} t \le M (b-a) \le (C+L) (b-a)^2
		\end{equation}
		
		Define interval $[t_1,t_2] := \{ t\in [a,b] ~|~ g(t) \le 0 \}$. 
		By our assumption that the median of $g$ on $[0,1]$ is $0$, we have $t_2 - t_1 = (b-a)/2$. Denote $t_0 = \argmin_{t\in [a,b]} g(t)$. 
		Define function $f$ on $[t_1,t_2]$:
		\begin{equation}
			f(t) := \lt\{
			\begin{array}{ll}
				g(t_0) \cdot (t-t_1)/(t_0-t_1)  & t\in [t_1,t_0] \\ 
				g(t_0) \cdot 
				( t_2 -t )/( t_2-t_0 ) & t\in [t_0,t_2] 
			\end{array}
			\rt.
			\nonumber
		\end{equation} 
		Then $0 \ge f(t) \ge g(t)$ for all $t\in [t_1,t_2]$ (because $g$ is convex). Note that
		\begin{equation}\label{bb0}
				\int_{t_1}^{t_2}  f(t)  \ {\rm d} t = 
				\frac{1}{2} g(t_0) (t_0-t_1) + 	\frac{1}{2} g(t_0) (t_2-t_0) = 
					\frac{1}{2} g(t_0) (t_2-t_1) = 
					\frac{1}{2 } \int_{t_1}^{t_2} g(t_0)  \ {\rm d} t
		\end{equation}
		As a result, 
		\begin{equation}\label{d1}
			\int_{t_1}^{t_2} | g(t) | \ {\rm d} t \ge 
			\int_{t_1}^{t_2} | f(t) | \ {\rm d} t = 
		-	\int_{t_1}^{t_2}  f(t)  \ {\rm d} t = 
			\int_{t_1}^{t_2} f(t) - g(t_0) \ {\rm d} t \ge 
			\int_{t_1}^{t_2} g(t) - g(t_0) \ {\rm d} t 
		\end{equation}
	where the first and last inequalities are because $0\ge f(t) \ge g(t) $ for all $t\in [t_1,t_2]$; the second equality is by \eqref{bb0}. 
		Note that for any $t\in [t_1,t_2]$, 
		\begin{equation}\label{d2}
			g(t) - g(t_0) \ge g'(t_0) (t-t_0) + \frac{\sigma}{2} ( t-t_0)^2 \ge 
			\frac{\sigma}{2} ( t-t_0)^2
		\end{equation}
		where the first inequality is because $g$ is $\sigma$-strongly-convex, and the second is because $t_0$ is the minimizer of $g$ on $[t_1,t_2]$. By \eqref{d1} and \eqref{d2}, we have 
		\begin{equation}
			\int_{t_1}^{t_2} | g(t) | \ {\rm d} t \ge 
			\frac{\sigma}{2} \int_{t_1}^{t_2} (t-t_0)^2  \ {\rm d} t \ge 
			2 \cdot \frac{\sigma}{2} \int_0^{(t_2-t_1)/2} s^2  \ {\rm d} t = \frac{\sigma}{24} (t_2-t_1)^3 =  \frac{\sigma}{192} (b-a)^3
			\nonumber
		\end{equation}
		Since $g$ is convex on $[a,b]$, the median of $g$ on $[a,b]$ is $0$, and $[t_1,t_2] = \{ t\in [a,b] ~|~ g(t) \le 0 \}$, 
		it is not hard to check that 
		\begin{equation}\label{e2}
			\int_a^b |g(t)| \ {\rm d} t \ge 2 \int_{t_1}^{t_2} |g(t)| \ {\rm d} t  \ge  \frac{\sigma}{96} (b-a)^3
		\end{equation}
		Combining \eqref{e1} and \eqref{e2} we have 
		\begin{equation}
			\int_a^b |g'(t)| \ {\rm d} t \le \frac{1}{b-a} \cdot \frac{96 (C+L)}{\sigma} 	\int_a^b |g(t)| \ {\rm d} t \le 
			\frac{  110 (L/\sigma) }{b-a} \int_a^b |g(t)| \ {\rm d} t \nonumber
		\end{equation}
	where the last inequality made use of $C = L/10$. 
		
		The proof is complete by combining the discussions in (Case 1) and (Case 2).

	
	\section{Comparison of Theorem~\ref{theorem: eb-under-sid} and Theorem 1 of \cite{chi2022asymptotic}}\label{appsec:compare-rates}
	We first restate Theorem~1 of \cite{chi2022asymptotic} in the setting of fitting a single tree (note that \cite{chi2022asymptotic} discussed random forest). 
	\begin{proposition}
		(Theorem 1 of \cite{chi2022asymptotic})
		Suppose Assumptions~\ref{ass: SID} and \ref{ass: basics} hold true. Let $\widehat f^{(d)} (\cdot)$ be the tree estimated by CART with depth $d$. Fixed constants $\al_2>1$, $0<\eta <1/8$, $0<c<1/4$ and $\dt>0$ with $2\eta <\dt<1/4$. Then there exists constant $C>0$ such that for all $n$ and $d$ satisfying $ 1\le d \le c\log_2(n ) $, it holds 
		\begin{equation}\label{ineq1-thm1-chi}
			\Ex(\| \widehat f^{(d)} - f^*\|_{L^2(\mu)}^2) \le 
			C \Big(  n   ^{-\eta} + (1-\al_2^{-1}\lam)^d +  n  ^{-\dt+c} \Big)
		\end{equation}
		In particular, the RHS of \eqref{ineq1-thm1-chi} is lower bounded by 
		\begin{equation}\label{ineq2-thm1-chi}
			\Omega( n^{-\eta} + n^{-\dt+c} + n^{c\log_2(1-\lam)} )
		\end{equation}
	\end{proposition}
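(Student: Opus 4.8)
The plan is to observe that the right-hand side of \eqref{ineq1-thm1-chi} is a sum of three nonnegative terms, $n^{-\eta}$, $(1-\al_2^{-1}\lam)^d$ and $n^{-\dt+c}$, of which the first and third do not depend on $d$ and already appear verbatim in \eqref{ineq2-thm1-chi}. Hence it would suffice to bound the middle term from below, uniformly over all admissible depths $1\le d\le c\log_2(n)$, by a constant multiple of $n^{c\log_2(1-\lam)}$; adding the three lower bounds and multiplying by $C$ then reproduces \eqref{ineq2-thm1-chi}.

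To carry this out, I would set $q:=1-\al_2^{-1}\lam$. Since $\al_2>1$ and $\lam\in(0,1]$ we have $0<\al_2^{-1}\lam<\lam\le1$, so $q\in(1-\lam,1)\subseteq(0,1)$. As $q\in(0,1)$, the map $d\mapsto q^{d}$ is strictly decreasing, so for every integer $d$ with $1\le d\le c\log_2(n)$,
\begin{equation}
q^{d}\ \ge\ q^{\,c\log_2(n)}\ =\ n^{\,c\log_2 q}\ =\ n^{\,c\log_2(1-\al_2^{-1}\lam)}.
\nonumber
\end{equation}
It then remains only to compare exponents: from $1-\al_2^{-1}\lam>1-\lam>0$ one gets $\log_2(1-\al_2^{-1}\lam)>\log_2(1-\lam)$ (both negative), and since $n\ge1$ the map $x\mapsto n^{x}$ is nondecreasing, whence $n^{\,c\log_2(1-\al_2^{-1}\lam)}\ge n^{\,c\log_2(1-\lam)}$. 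Combining, $q^{d}\ge n^{\,c\log_2(1-\lam)}$ for every admissible $d$, and therefore
\begin{equation}
C\Big(n^{-\eta}+(1-\al_2^{-1}\lam)^d+n^{-\dt+c}\Big)\ \ge\ C\Big(n^{-\eta}+n^{-\dt+c}+n^{\,c\log_2(1-\lam)}\Big)\ =\ \Omega\Big(n^{-\eta}+n^{-\dt+c}+n^{\,c\log_2(1-\lam)}\Big),
\nonumber
\end{equation}
which is exactly \eqref{ineq2-thm1-chi}.

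This argument is elementary, and I do not expect a substantive obstacle; the only points that will need care are (i) keeping the direction of monotonicity straight, so that truncating $d$ at $c\log_2(n)$ yields a \emph{lower} bound on $q^{d}$ rather than an upper one, and (ii) recording that the admissible range $1\le d\le c\log_2(n)$ is nonempty only once $n$ is large enough that $c\log_2(n)\ge1$, so that the statement is non-vacuous in the regime of interest. The purpose of the estimate is to highlight that the two $d$-independent terms $n^{-\eta}$ and $n^{-\dt+c}$, which the constraints $\eta<1/8$ and $\dt-c<1/4$ force to be present in \eqref{ineq1-thm1-chi}, impose a polynomial floor on that bound, whereas the bound \eqref{ineq: eb2-under-sid} of Theorem~\ref{theorem: eb-under-sid} carries no analogous floor --- this being the point of the comparison made in this appendix.
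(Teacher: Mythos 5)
Your argument is correct and is exactly the (one-line) justification the paper gives: the only content to verify is that $(1-\al_2^{-1}\lam)^d \ge n^{c\log_2(1-\lam)}$ for $1\le d\le c\log_2(n)$, which follows from $\al_2>1$ and the monotonicity of $d\mapsto q^d$, just as you spell out. The first display \eqref{ineq1-thm1-chi} is quoted from the cited reference and not proved in the paper, so your focus on deriving \eqref{ineq2-thm1-chi} from it is the right scope.
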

	Note that \eqref{ineq2-thm1-chi} follows \eqref{ineq1-thm1-chi} by the fact $ 1\le d \le c\log_2(n ) $ and $\al_2 >1$. In the original Assumptions of Theorem~1 in \cite{chi2022asymptotic}, it was assumed that the noises can be heavy-tailed, which is a weaker assumption than Assumption~\ref{ass: basics}. However, the parameter controlling the tails of the noises did not explicitly enter the error bound \eqref{ineq1-thm1-chi}, and it seems that their proof techniques cannot improve the error bound even under the assumption that noises are bounded. 
	In addition, the dependence on $p$ was not explicitly stated in the bound \eqref{ineq1-thm1-chi}, which seems to be hidden in the constant $C$.

	To compare our error bound with the error bound in \eqref{ineq2-thm1-chi}, since the $\| \widehat f^{(d)} - f^*\|_{L^2(\mu)}^2$ is bounded almost surely, 
	it is not hard to transform the high-probability bound in \eqref{ineq: eb2-under-sid} to an bound in expectation, and we have
	\begin{equation}\label{our-rate}
		\Ex(\| \widehat f^{(d)} - f^*\|_{L^2(\mu)}^2) \le O( n^{-\phi(\lam)} \log(np) \log^2(n))
	\end{equation}
	Below we discuss two different cases.
	\begin{itemize}
		\item (Case 1) $\lam\ge 1/2$. Then it holds
		\begin{equation}
			\phi(\lam) = \frac{ -\log_2(1-\lam) }{ 1-\log_2(1-\lam) } \ge \frac{-\log_2(1/2)}{1-\log_2(1/2)} = 1/2
		\end{equation}
		So our convergence rate in \eqref{our-rate} is $O( n^{-1/2} \log(np) \log^2(n) )$, but the rate in \eqref{ineq2-thm1-chi} is 
		\begin{equation}
			\Omega( n^{-\eta} + n^{-\dt+c} + n^{c\log_2(1-\lam)} ) \ge \Omega(n^{-\eta}) \ge \Omega ( n^{-1/8} )
		\end{equation}

		\item (Case 2) $0<\lam \le 1/2$. Then it holds 
		\begin{equation}
			1 - \log_2(1-\lam) \le 1- \log_2(1/2) = 2
		\end{equation}
		and hence $\phi(\lam) \ge -\log_2(1-\lam)/2$. So our rate in \eqref{our-rate} is $O( n^{\log_2(1-\lam)/2} \log(np) \log^2(n) )$, but the rate in \eqref{ineq2-thm1-chi} is 
		\begin{equation}
			\Omega( n^{-\eta} + n^{-\dt+c} + n^{c\log_2(1-\lam)} ) \ge \Omega(n^{c\log_2(1-\lam)} ) \ge \Omega(n^{\frac{1}{4}\log_2(1-\lam)} ) 
		\end{equation}
	\end{itemize}

	\section{Auxiliary results}

	\begin{lemma}\label{lemma: Bernstein ineq}
		(Bernstein's inequality) Let $Z_1,....,Z_n$ be i.i.d. random variables satisfying $|\Ex( (Z_1 - \Ex(Z_1))^k )| \le (1/2) k ! \ga^2 b^{k-2}$ for some constants $\ga, b>0$ and for all $k\ge 2$. 
		Then for any $t>0$, 
		\begin{equation}
			\Pb \lt(  \Big|   \frac{1}{n} \sum_{i=1}^n Z_i - \Ex(Z_i)  \Big|  >t  \rt) \le 2 \exp \lt(  -\frac{n}{4} \Big( \frac{t^2}{\ga^2} \wedge \frac{t}{b} \Big)\rt)
   \nonumber
		\end{equation}
	\end{lemma}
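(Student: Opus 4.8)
The plan is to run the classical Chernoff (exponential moment) argument adapted to the Bernstein moment condition. Write $X_i := Z_i - \Ex Z_i$. First I would control the moment generating function of a single centered summand. For $0<\lambda<1/b$, expanding $e^{\lambda X_1}$ in a power series, using $\Ex X_1 = 0$ to kill the $k=1$ term, and bounding the remaining terms via the hypothesis $|\Ex(X_1^k)| \le \tfrac12 k!\,\ga^2 b^{k-2}$ (the rearrangement being legitimate since the series of absolute values converges geometrically for $\lambda b<1$), one gets
\begin{equation}
\Ex\big( e^{\lambda X_1} \big) \;\le\; 1 + \sum_{k\ge 2}\frac{\lambda^k}{k!}\cdot\frac{k!}{2}\ga^2 b^{k-2} \;=\; 1 + \frac{\lambda^2\ga^2}{2}\sum_{j\ge 0}(\lambda b)^{j} \;=\; 1 + \frac{\lambda^2\ga^2}{2(1-\lambda b)} \;\le\; \exp\!\Big(\frac{\lambda^2\ga^2}{2(1-\lambda b)}\Big),
\end{equation}
the last step using $1+x\le e^x$. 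By the i.i.d.\ assumption and independence, $\Ex\big(e^{\lambda \sum_{i=1}^n X_i}\big) \le \exp\!\big(\tfrac{n\lambda^2\ga^2}{2(1-\lambda b)}\big)$.

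Next I would apply Markov's inequality to the upper tail: for any $t>0$ and any $0<\lambda<1/b$,
\begin{equation}
\Pb\!\Big(\frac1n\sum_{i=1}^n X_i > t\Big) \;\le\; \exp\!\Big(-n\lambda t + \frac{n\lambda^2\ga^2}{2(1-\lambda b)}\Big).
\end{equation}
Taking $\lambda = t/(\ga^2+bt)$, which lies in $(0,1/b)$ since $\ga^2>0$, a direct computation (using $1-\lambda b = \ga^2/(\ga^2+bt)$) collapses the exponent to $-\tfrac{nt^2}{2(\ga^2+bt)}$, so the upper-tail probability is at most $\exp\!\big(-\tfrac{nt^2}{2(\ga^2+bt)}\big)$.

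Finally I would translate this into the stated $\min$ form by an elementary case split: if $\ga^2\ge bt$ then $\ga^2+bt\le 2\ga^2$, so the exponent is $\le -\tfrac{nt^2}{4\ga^2}$; if $\ga^2<bt$ then $\ga^2+bt\le 2bt$, so it is $\le -\tfrac{nt}{4b}$. In either case $\tfrac{nt^2}{2(\ga^2+bt)} \ge \tfrac n4\big(\tfrac{t^2}{\ga^2}\wedge\tfrac tb\big)$. Running the identical argument for $-X_i$ (whose centered moments have the same absolute values, since $\Ex((-X_1)^k) = (-1)^k\Ex(X_1^k)$) and combining the two one-sided bounds by a union bound produces the factor $2$ and yields the claim. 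This is essentially the textbook proof, so there is no substantive obstacle; the only places needing a moment of care are checking that the optimizing $\lambda$ is admissible (strictly below $1/b$) and performing the case split that converts $\tfrac{t^2}{2(\ga^2+bt)}$ into $\tfrac14\big(\tfrac{t^2}{\ga^2}\wedge\tfrac tb\big)$.
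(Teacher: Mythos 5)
Your proposal is correct, and it is the standard Chernoff--Bernstein argument: the MGF bound $\Ex(e^{\lambda X_1})\le\exp\big(\tfrac{\lambda^2\ga^2}{2(1-\lambda b)}\big)$ for $\lambda b<1$, the optimized choice $\lambda=t/(\ga^2+bt)$ giving $\exp\big(-\tfrac{nt^2}{2(\ga^2+bt)}\big)$, the case split to reach the $\tfrac{n}{4}\big(\tfrac{t^2}{\ga^2}\wedge\tfrac{t}{b}\big)$ form, and a union bound over the two tails. The paper states this lemma as a classical auxiliary result without proof, so there is nothing to compare against; your derivation fills that gap correctly.
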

	
	\begin{lemma}\label{lemma: binomial tail bound}
		(Binomial tail bound) Let $Z_1,...,Z_n$ be i.i.d. random variables with $\Pb(Z_i = 1) = \al$ and $\Pb(Z_i =0 )= 1- \al$. Then for any $t \in (0,1)$, 
		\begin{equation}
			\Pb \lt( \frac{1}{n} \sum_{i=1}^n Z_i > t  \rt) \ \le \ 
			\exp \lt(   -n \lt[   t\log \Big( \frac{t}{\al}\Big) + (1-t)  \log \Big( \frac{1-t}{1-\al}\Big)  \rt]   \rt)
   \nonumber
		\end{equation}
		
	\end{lemma}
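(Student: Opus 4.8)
The statement is the classical Chernoff tail bound for a Binomial proportion, so the plan is the standard exponential-moment argument. Write $S_n := \sum_{i=1}^n Z_i$; since $S_n$ is integer valued it suffices to bound $\Pb(S_n \ge nt)$, which contains the event $\{\frac{1}{n}\sum_{i=1}^n Z_i > t\}$. Because the $Z_i$ are i.i.d.\ Bernoulli$(\al)$ we have $\Ex[e^{\lambda Z_1}] = 1-\al+\al e^{\lambda}$ for every $\lambda\in\R$, so Markov's inequality applied to the nonnegative random variable $e^{\lambda S_n}$ gives, for any $\lambda>0$,
\[
\Pb(S_n\ge nt)\ \le\ e^{-\lambda n t}\big(\Ex[e^{\lambda Z_1}]\big)^n \ =\ \Big(e^{-\lambda t}\big(1-\al+\al e^{\lambda}\big)\Big)^{n}.
\]

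The only real work is to optimize the base $h(\lambda):=e^{-\lambda t}\,(1-\al+\al e^{\lambda})$ over $\lambda>0$. The map $\lambda\mapsto\log h(\lambda)=-\lambda t+\log(1-\al+\al e^{\lambda})$ is convex, and setting its derivative to zero gives the unique stationary point $e^{\lambda^\star}=\frac{t(1-\al)}{\al(1-t)}$, which is therefore the global minimizer; note $\lambda^\star>0$ exactly when $t>\al$, which is the regime in which the lemma is invoked (e.g.\ in the proof of Lemma~\ref{lemma: small set} one has $t\ge e^2\bar\al\ge e^2\al$). Substituting $\lambda^\star$, short computations give $1-\al+\al e^{\lambda^\star}=\frac{1-\al}{1-t}$ and $e^{-\lambda^\star t}=\big(\frac{\al(1-t)}{t(1-\al)}\big)^{t}$, hence
\[
h(\lambda^\star)=\Big(\frac{\al}{t}\Big)^{t}\Big(\frac{1-\al}{1-t}\Big)^{1-t}=\exp\!\Big(-t\log\tfrac{t}{\al}-(1-t)\log\tfrac{1-t}{1-\al}\Big),
\]
and raising to the $n$-th power yields exactly the claimed inequality.

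There is no substantive obstacle here; this is a textbook estimate. The two points worth a sentence of care are (i) the admissibility of the optimal parameter, i.e.\ that the chosen $\lambda^\star$ is positive only when $t>\al$, so one implicitly restricts to that regime (as all the applications in the paper do), and (ii) the harmless passage from the strict event $\{\frac{1}{n}\sum_{i=1}^n Z_i>t\}$ to $\{S_n\ge nt\}$. Everything else is elementary algebra together with the convexity of $\lambda\mapsto\log h(\lambda)$.
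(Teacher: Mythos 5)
Your proof is correct and is the standard Chernoff/moment-generating-function argument; the paper itself states this lemma as an unproved auxiliary result, so there is no alternative argument to compare against. Your caveat about the admissibility of $\lambda^\star$ is well taken and in fact points to a minor imprecision in the lemma as stated: for $t<\al$ the claimed bound is generally false (the left side tends to $1$ by the law of large numbers while the right side tends to $0$), so the correct hypothesis is $t\ge\al$; this is harmless here because the only invocation, in the proof of Lemma~\ref{lemma: small set}, takes $t=w(\bar\al,\dt)\ge e^2\bar\al>\al$.
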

	
	\begin{lemma}\label{lemma: log-ineq}
		For any $t\in (0,3/4)$, $\log(1-t) > - t - t^2$. 
	\end{lemma}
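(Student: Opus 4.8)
The plan is to convert the inequality into a single--variable monotonicity statement. Set $h(t) := \log(1-t) + t + t^2$ on $[0,3/4]$, so that the assertion is exactly $h(t) > 0$ for $t \in (0,3/4)$. The first step is the trivial boundary evaluation $h(0) = 0$. Next I would differentiate and simplify over the common denominator, obtaining $h'(t) = -\frac{1}{1-t} + 1 + 2t = \frac{t(1-2t)}{1-t}$. On $(0,3/4)$ both $t$ and $1-t$ are positive, so $\sign h'(t) = \sign(1-2t)$; hence $h$ is strictly increasing on $(0,\frac12)$ and strictly decreasing on $(\frac12,\frac34)$, i.e.\ $h$ is unimodal on $[0,\frac34]$ with its maximum at $t=\frac12$.

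From the increasing branch together with $h(0)=0$ I immediately get $h(t) > 0$ on $(0,\frac12]$. On the decreasing branch $t \in [\frac12,\frac34)$ the infimum of $h$ is approached as $t \uparrow \frac34$, so it remains to control $h$ near the right endpoint; I would do this by a direct estimate, bounding $\log 4 = 2\log 2$ against $\frac{21}{16}$ using a rational lower bound for $\log 2$ (a few terms of an appropriate series suffice). An alternative route that avoids evaluating $\log$ altogether uses the power series $\log(1-t) = -\sum_{k\ge 1} t^k/k$, which gives $h(t) = \frac{t^2}{2} - \sum_{k\ge 3} t^k/k$; combining the tail bound $\sum_{k\ge 3} t^k/k \le \frac{t^3}{3(1-t)}$ with the elementary comparison $3(1-t) \ge 2t$ closes the estimate on the portion of the interval where that crude bound is strong enough, and one sharpens the tail estimate (e.g.\ grouping consecutive terms) on the remaining portion.

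The routine parts are the derivative computation and the unimodality conclusion. The step I expect to be the main obstacle is the quantitative bound on the decreasing branch near $t = \frac34$: that is precisely where the two sides of the inequality are closest, so any loss in the constants (the endpoint value of $h$, the tail estimate for $\sum_{k\ge3} t^k/k$) must be tracked carefully for the argument to close. Everything else — the boundary value $h(0)=0$ and the sign analysis of $h'$ — is immediate.
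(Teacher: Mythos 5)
Your reduction to $h(t) := \log(1-t) + t + t^2 > 0$, the computation $h'(t) = t(1-2t)/(1-t)$, and the resulting unimodality are all correct, and they dispose of $(0,1/2]$ immediately. But the step you flag as the main obstacle --- the quantitative estimate on the decreasing branch near $t=3/4$ --- is not merely delicate, it is impossible: $h(3/4) = 21/16 - 2\log 2 \approx 1.3125 - 1.3863 < 0$, so the bound you would need is $2\log 2 \le 21/16$, i.e.\ $\log 2 \le 21/32 = 0.65625$, which is false. Since $h(1/2)>0$ and $h$ is strictly decreasing on $(1/2,3/4)$ with a negative limit at $3/4$, it crosses zero at some $t_0 \approx 0.684$, and the stated inequality actually \emph{fails} on $(t_0,3/4)$. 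Your own analysis, carried one step further, is a disproof of the lemma as stated; no sharpening of the tail estimate $\sum_{k\ge 3} t^k/k \le t^3/(3(1-t))$ on ``the remaining portion'' can rescue it. That series route closes exactly when $\tfrac{t^2}{2} \ge \tfrac{t^3}{3(1-t)}$, i.e.\ for $t\le 3/5$, and the true threshold is the root $t_0$ of $h$.

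For comparison, the paper's proof is precisely your series argument, but with an error that hides the problem: it writes $\log(1-t)+t+t^2 = \tfrac{t^2}{2} - \sum_{k\ge 3} t^k/k!$ with a factorial in the denominator (the Maclaurin series of $\log(1-t)$ has $k$, not $k!$), which yields the too-generous tail bound $t^3/(6(1-t))$ and makes the final inequality appear to hold up to $t=3/4$. So your proposal stalls exactly where the paper's proof silently breaks. The honest conclusion is that the inequality holds only for $t$ up to about $0.684$ (and cleanly for $t\le 3/5$ via the correct series bound), so the constant $3/4$ in the statement --- and in its downstream use in Lemma~\ref{lemma: small set} --- would need to be tightened.
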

	\begin{proof}
		For $t\in (0,3/4)$, 
		\begin{equation}
			\begin{aligned}
				\log(1-t) + t + t^2  &\ = \  \frac{t^2}{2} - \sum_{k=3}^{\infty} \frac{t^k}{k !} \ 
				\ge \ \frac{t^2}{2} - \frac{1}{6} \sum_{k=3}^{\infty} t^k 
				\ = \ \frac{t^2}{2} - \frac{t^3}{6(1-t)} > 0 . \nonumber
			\end{aligned}
		\end{equation}
	\end{proof}

	\begin{lemma}\label{lemma: subG-moment-ineq}
		Suppose $Z$ is a random variable satisfying $\Ex(e^{\lam Z}) \le e^{\lam^2 \sigma^2 / 2}$ for all $\lam \in \R$, where $\sigma>0$ is a constant; then 
		\begin{equation}
			\Ex(|Z|^k) \le 9 \sigma^k k! 
            \nonumber
		\end{equation} 
	\end{lemma}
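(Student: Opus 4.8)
The plan is to pass from the exponential moment control to a Gaussian-type tail bound via the Chernoff method, then recover the polynomial moments by integrating the tail, and finally turn the resulting Gamma factor into a factorial with an explicit numerical constant.

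First I would establish the tail bound $\Pb(|Z|\ge t)\le 2e^{-t^2/(2\sigma^2)}$ for all $t\ge 0$. For $\lambda>0$, Markov's inequality gives $\Pb(Z\ge t)\le e^{-\lambda t}\Ex(e^{\lambda Z})\le e^{-\lambda t+\lambda^2\sigma^2/2}$, and optimizing at $\lambda=t/\sigma^2$ yields $\Pb(Z\ge t)\le e^{-t^2/(2\sigma^2)}$; the same computation with $\lambda<0$ (permissible because the hypothesis holds for \emph{all} real $\lambda$) gives $\Pb(-Z\ge t)\le e^{-t^2/(2\sigma^2)}$, and a union bound finishes this step.

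Next I would use the layer-cake identity $\Ex(|Z|^k)=k\int_0^\infty t^{k-1}\Pb(|Z|\ge t)\,dt$, which together with the tail bound is at most $2k\int_0^\infty t^{k-1}e^{-t^2/(2\sigma^2)}\,dt$. The substitution $u=t^2/(2\sigma^2)$ turns the integral into $2^{(k-2)/2}\sigma^k\,\Gamma(k/2)$, so that, using $k\,\Gamma(k/2)=2\,\Gamma(k/2+1)$,
\[
\Ex(|Z|^k)\ \le\ k\,2^{k/2}\sigma^k\,\Gamma(k/2)\ =\ 2^{k/2+1}\sigma^k\,\Gamma(k/2+1).
\]

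Finally I would convert $\Gamma(k/2+1)$ into $k!$. For $k\ge 2$ we have $k/2+1\in[2,\lceil k/2\rceil+1]$ and $\Gamma$ is increasing on $[2,\infty)$, so $\Gamma(k/2+1)\le\Gamma(\lceil k/2\rceil+1)=\lceil k/2\rceil!$; the case $k=1$ is immediate since $\Gamma(3/2)=\sqrt\pi/2<1=\lceil 1/2\rceil!$. Now $k!/\lceil k/2\rceil!=\prod_{j=\lceil k/2\rceil+1}^{k}j$ is a product of $\lfloor k/2\rfloor$ integers, each at least $2$, hence at least $2^{\lfloor k/2\rfloor}$; therefore $\lceil k/2\rceil!\le 2^{-\lfloor k/2\rfloor}k!$ and, since $k/2-\lfloor k/2\rfloor\le 1/2$,
\[
\Ex(|Z|^k)\ \le\ 2^{\,k/2+1-\lfloor k/2\rfloor}\,\sigma^k k!\ \le\ 2^{3/2}\sigma^k k!\ \le\ 9\,\sigma^k k!.
\]
The argument is entirely routine; the only point requiring any care is this last bookkeeping step, where one must avoid the crude estimate $\Gamma(k/2+1)\le k!$ (which would lose the saving factor $2^{-\lfloor k/2\rfloor}$ needed to keep the multiplicative constant bounded) and instead exploit that roughly half of the factors of $k!$ are left unused after comparing with $\lceil k/2\rceil!$.
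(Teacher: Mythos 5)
Your proof is correct. The first step (Chernoff bound giving $\Pb(|Z|\ge t)\le 2e^{-t^2/(2\sigma^2)}$) coincides with the paper's, but the second half is genuinely different: the paper bounds all normalized moments at once via the elementary inequality $|Z|^k/(k!\,\sigma^k)\le e^{|Z|/\sigma}$ and then estimates $\Ex(e^{|Z|/\sigma})$ by integrating $e^t$ against the Gaussian tail, arriving at the constant $2\sqrt{2\pi e}\le 9$; you instead integrate $t^{k-1}$ against the tail to get $\Ex(|Z|^k)\le 2^{k/2+1}\sigma^k\Gamma(k/2+1)$ and then do the $\Gamma$-versus-factorial bookkeeping. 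Your computations check out: the substitution gives $2^{(k-2)/2}\sigma^k\Gamma(k/2)$, the identity $k\Gamma(k/2)=2\Gamma(k/2+1)$ is right, the monotonicity of $\Gamma$ on $[2,\infty)$ justifies $\Gamma(k/2+1)\le\lceil k/2\rceil!$ for $k\ge2$ (with $k=1$ handled separately), and the count of $\lfloor k/2\rfloor$ factors each at least $2$ delivers $\lceil k/2\rceil!\le 2^{-\lfloor k/2\rfloor}k!$. What your route buys is a strictly sharper constant ($2^{3/2}$ in place of $9$) at the cost of the extra Gamma-function bookkeeping you correctly flag as the only delicate point; it also sidesteps a small wrinkle in the paper's version, which writes $\Ex(e^{|Z|/\sigma})=\int_0^\infty e^t\Pb(|Z|/\sigma>t)\,dt$ where the layer-cake identity actually carries an additive $+1$ (harmless for the lemma's purpose, but your argument does not need to worry about it). Either proof is acceptable.
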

	
	\begin{proof}
		By Chernoff inequality it holds $\Pb(|Z| > t) \le 2 \exp( -t^2 /(2\sigma^2) )$ for all $t >0$. As a result, 
		\begin{equation}
			\begin{aligned}
				\Ex(|Z^k| / (k!\sigma^k)) & \le \Ex(e^{|Z|/\sigma}) = \int_{0}^{\infty} e^t \Pb(|Z|/\sigma >t) \ dt \\
				&\le 
				\int_{0}^{\infty} 2 \exp \Big(t - \frac{t^2}{2} \Big) \ dt 
				\ = \
				2\sqrt{e} \int_{0}^{\infty} \exp ( -(t-1)^2/2 ) \ dt \\
				& \le 
				2\sqrt{e} \int_{-\infty}^{\infty} \exp ( -t^2/2 ) \ dt 
				\ = \ 
				2 \sqrt{2\pi e} \le 9
			\end{aligned}
   \nonumber
		\end{equation}
	\end{proof}
	
	\begin{lemma}\label{lemma:basic-k-ineq}
		For any integer $k\ge 2$ it holds $ \frac{1}{k^2} - \frac{4}{(k+1)^3} \le \frac{1}{(k+1)^2}$. 
	\end{lemma}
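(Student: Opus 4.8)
The plan is to clear denominators and reduce the claim to an elementary polynomial inequality. First I would rewrite the asserted inequality $\frac{1}{k^2} - \frac{4}{(k+1)^3} \le \frac{1}{(k+1)^2}$ in the equivalent form
\begin{equation}
\frac{1}{k^2} - \frac{1}{(k+1)^2} \ \le \ \frac{4}{(k+1)^3},
\nonumber
\end{equation}
and then simplify the left-hand side using a common denominator:
\begin{equation}
\frac{1}{k^2} - \frac{1}{(k+1)^2} \ = \ \frac{(k+1)^2 - k^2}{k^2(k+1)^2} \ = \ \frac{2k+1}{k^2(k+1)^2}.
\nonumber
\end{equation}

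Next, since $k^2(k+1)^3 > 0$ for $k \ge 2$, the desired inequality is equivalent (after multiplying both sides by $k^2(k+1)^3$) to $(2k+1)(k+1) \le 4k^2$, i.e. to $2k^2 + 3k + 1 \le 4k^2$, i.e. to
\begin{equation}
2k^2 - 3k - 1 \ \ge \ 0.
\nonumber
\end{equation}
I would then verify this last inequality for all integers $k \ge 2$: at $k = 2$ the left side equals $8 - 6 - 1 = 1 > 0$, and the function $k \mapsto 2k^2 - 3k - 1$ is increasing on $[1, \infty)$ (its derivative $4k - 3$ is positive there), so the inequality persists for every $k \ge 2$. Tracing the equivalences backward completes the proof.

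There is essentially no obstacle here; the only points requiring a word of care are noting that the multiplying factor $k^2(k+1)^3$ is strictly positive (so the inequality direction is preserved) and that the final quadratic bound need only be checked at $k=2$ together with monotonicity, rather than case-by-case.
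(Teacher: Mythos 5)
Your proposal is correct and follows essentially the same route as the paper: both reduce the claim, via the identity $\frac{1}{k^2}-\frac{1}{(k+1)^2}=\frac{2k+1}{k^2(k+1)^2}$, to the polynomial inequality $(2k+1)(k+1)\le 4k^2$ for $k\ge 2$ (the paper verifies this by bounding $(1+\frac{1}{2k})(1+\frac{1}{k})<2$, while you check the equivalent quadratic $2k^2-3k-1\ge 0$ at $k=2$ plus monotonicity). Both are complete and correct.
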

	\begin{proof}
		For any $k\ge 2$ it holds 
		\begin{equation}
			\frac{(2k+1)(k+1)}{2k^2} = 
			(1 + \frac{1}{2k}) ( 1+\frac{1}{k} ) \le (1 + \frac{1}{4}) ( 1+\frac{1}{2} ) < 2
			\nonumber
		\end{equation}
		Multiplying $2/(k+1)^3$ in the display above, we have 
		\begin{equation}
			\frac{2k+1}{k^2(k+1)^2} < \frac{4}{(k+1)^3} \nonumber
		\end{equation}
		The proof is complete by noting that $	\frac{2k+1}{k^2(k+1)^2} = \frac{1}{k^2} - \frac{1}{(k+1)^2} $. 
	\end{proof}

	\begin{lemma}\label{lemma: jump}
		Let $[a,b]$ be a sub-interval of $ [0,1]$, and $c \in (a,b)$. Let $h$ be a function on $[a,b]$ such that $h$ is differentiable on $(a,c)$ and $(c,b)$, but can be discontinuous at $c$. Denote $	\Dt h(c) := \lim_{t\rightarrow c+} h(t) - 
		\lim_{t\rightarrow c-} h(t) $. 
		Suppose 
		\begin{equation}\label{condition: large jump}
			\Dt h(c)  > 4 \max \lt\{ \int_a^c | h'(t) | \ {\rm d} t , ~ 
			\int_c^b | h'(t) | \ {\rm d} t 
			\rt\} 
		\end{equation}
		Then it holds 
		\begin{equation}
			\inf_{w\in \R} \int_{a}^b ( h(t) - w )^2 \ {\rm d} t ~\ge ~ \min\{ c-a, b-c \} (\Dt h(c))^2/16
			\nonumber
		\end{equation}
	\end{lemma}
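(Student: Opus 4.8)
The plan is to exploit that hypothesis~\eqref{condition: large jump} forces $h$ to stay close to $h_- := \lim_{t\to c-}h(t)$ throughout $(a,c)$ and close to $h_+ := \lim_{t\to c+}h(t)$ throughout $(c,b)$, while $h_-$ and $h_+$ are separated by $\Dt h(c)$. Write $V_1 := \int_a^c |h'(t)|\,{\rm d} t$ and $V_2 := \int_c^b |h'(t)|\,{\rm d} t$, so that~\eqref{condition: large jump} reads $\Dt h(c) > 4\max\{V_1,V_2\}$; in particular $\Dt h(c) > 0$. By the fundamental theorem of calculus, for every $t \in (a,c)$ one has $|h(t) - h_-| = \big|\int_t^c h'(s)\,{\rm d} s\big| \le V_1$, and likewise $|h(t) - h_+| \le V_2$ for every $t \in (c,b)$.

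Next I would fix an arbitrary $w \in \R$ and split according to which of $h_-, h_+$ is far from $w$. Since $|h_+ - h_-| = \Dt h(c)$, the triangle inequality gives $\max\{|w - h_-|, |w - h_+|\} \ge \tfrac12 \Dt h(c)$. If $|w - h_-| \ge \tfrac12 \Dt h(c)$, then for every $t \in (a,c)$,
\[
|h(t) - w| \ \ge\ |w - h_-| - |h(t) - h_-| \ \ge\ \tfrac12 \Dt h(c) - V_1 \ >\ \tfrac12 \Dt h(c) - \tfrac14 \Dt h(c) \ =\ \tfrac14 \Dt h(c),
\]
hence $\int_a^b (h(t) - w)^2 \,{\rm d} t \ge \int_a^c (h(t) - w)^2 \,{\rm d} t \ge (c-a)\,(\Dt h(c))^2 / 16$. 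Symmetrically, if $|w - h_+| \ge \tfrac12 \Dt h(c)$, the integral over $(c,b)$ is at least $(b-c)\,(\Dt h(c))^2/16$. In either case $\int_a^b (h(t)-w)^2\,{\rm d} t \ge \min\{c-a, b-c\}\,(\Dt h(c))^2/16$, and taking the infimum over $w$ gives the claim.

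I do not expect a serious obstacle: the core is a one-line application of the triangle inequality once the oscillation bounds $|h - h_\pm| \le V_i$ are available. The only mildly delicate point is justifying those bounds from differentiability of $h$ on the open subintervals alone; this is covered by the fundamental theorem of calculus for the finite integrals $V_1, V_2$ (and in the intended applications $h$ is one piece of a bounded-variation function, so absolute continuity on each side holds automatically). One should also record that the statement tacitly takes the jump to be positive; when $\lim_{t\to c+}h(t) - \lim_{t\to c-}h(t)$ is negative with large absolute value, the same bound follows by applying the lemma to $-h$.
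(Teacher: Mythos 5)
Your proof is correct and is essentially the paper's own argument: bound the oscillation of $h$ on each side of $c$ by the corresponding variation integral (at most $\tfrac14 \Dt h(c)$ by \eqref{condition: large jump}), note that any $w$ must be at distance at least $\tfrac12 \Dt h(c)$ from one of the two one-sided limits, and integrate the resulting pointwise lower bound $\tfrac14\Dt h(c)$ over that side. The only cosmetic difference is that the paper splits on whether $w$ lies above or below the midpoint $\tfrac12(h(c+)+h(c-))$ rather than on which one-sided limit is far from $w$; these are the same dichotomy.
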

	\begin{proof}
		We assume that $h$ is not continuous at $c$, since otherwise, the conclusion holds true trivially. 
		We use the notation $h(c+) := \lim_{t\rightarrow c+} h(t)$ and $h(c-) := \lim_{t\rightarrow c-} h(t)$. Without loss of generality, assume $h(c+) > h(c-)$. 
		
		For $w \ge (1/2) ( h(c+) + h(c-) )$, it holds 
		$w - h(c-) \ge (1/2) \Dt h(c)$. By \eqref{condition: large jump}, we know that for any $t \in (a,c)$, 
		\begin{equation}
			| h(t) - h(c-) | \le \int_a^c | h'(\tau) | \ {\rm d} \tau \le \frac{1}{4} \Dt h(c) \nonumber
		\end{equation}
		Hence for all $t \in (a,c)$, 
		\begin{equation}
			w - h(t) = w - h(c-) + h(c-) - h(t) \ge 
			\frac{1}{2} \Dt h(c) - \frac{1}{4} \Dt h(c) = \frac{1}{4} \Dt h(c) \nonumber
		\end{equation}
		As a result, 
		\begin{equation}\label{bound-c1}
			\int_a^b ( h(t) - w )^2 \ {\rm d} t \ge 
			\int_a^c ( h(t) - w )^2 \ {\rm d} t \ge 
			(c-a) (\Dt h(c))^2/16
		\end{equation}
		
		For $w < (1/2) ( h(c+) + h(c-) )$, similarly, we can prove
		\begin{equation}\label{bound-c2}
			\int_a^b ( h(t) - w )^2 \ {\rm d} t \ge (b-c) (\Dt h(c))^2/16
		\end{equation}
		
		The proof is complete by combining \eqref{bound-c1} and \eqref{bound-c2}.

	\end{proof}

	\bibliographystyle{plain}
	\small{\bibliography{mybib}}
\end{document}